\theoremstyle{plain}
\newtheorem{theorem}{Theorem}[section]
\newtheorem{lemma}[theorem]{Lemma}
\newtheorem{proposition}[theorem]{Proposition}
\newtheorem{corollary}[theorem]{Corollary}
\theoremstyle{definition}
\newtheorem{definition}[theorem]{Definition}
\newtheorem{remark}[theorem]{Remark}
\newtheorem{example}[theorem]{Example}
\theoremstyle{plain}
\newcommand{\ignore}[1]{}
\DeclareMathOperator{\supp}{supp}   
\DeclareMathOperator{\poly}{poly}
\DeclareMathOperator{\sign}{sign}
\newcommand{\dist}{\mathsf{dist}}
\newcommand{\mlearn}{m^{\mathrm{learn}}}
\newcommand{\mlearne}{\mlearn_\epsilon}
\newcommand{\mtest}{m^{\mathrm{test}}}
\newcommand{\mteste}{\mtest_\epsilon}
\newcommand{\mone}{m^{\mathrm{one}}}
\newcommand{\qtol}{q^{\mathrm{test}}}
\newcommand{\qtole}{\qtol_{\epsilon_0,\epsilon_1}}
\newcommand{\qtolz}{\qtol_{0,\epsilon}}
\newcommand{\Ex}[1]{\bE \left[ #1 \right]}
\renewcommand{\Pr}[1]{\bP \left[ #1 \right]} 
\newcommand{\Pru}[2]{\underset{ #1 }\bP \left[ #2 \right]}
\newcommand{\Pruc}[3]{\underset{ #1 }\bP \left[ #2 \;\; \mid \;\; #3 \right]}
\newcommand{\define}{\vcentcolon=}
\renewcommand{\exp}[1]{\mathrm{exp}\left( #1 \right)}
\newcommand{\ind}[1]{\mathds{1} \left[ #1 \right] }
\newcommand{\zo}{\{0,1\}}
\newcommand{\pmset}{\{\pm 1\}}
\newcommand{\note}[1]{{\color{Brown} \;\ifmmode \text{#1} \else #1 \fi\;}}
\newcommand{\cA}{\ensuremath{\mathcal{A}}}
\newcommand{\cB}{\ensuremath{\mathcal{B}}}
\newcommand{\cC}{\ensuremath{\mathcal{C}}}
\newcommand{\cD}{\ensuremath{\mathcal{D}}}
\newcommand{\cF}{\ensuremath{\mathcal{F}}}
\newcommand{\cH}{\ensuremath{\mathcal{H}}}
\newcommand{\cI}{\ensuremath{\mathcal{I}}}
\newcommand{\cL}{\ensuremath{\mathcal{L}}}
\newcommand{\cM}{\ensuremath{\mathcal{M}}}
\newcommand{\cP}{\ensuremath{\mathcal{P}}}
\newcommand{\cQ}{\ensuremath{\mathcal{Q}}}
\newcommand{\cS}{\ensuremath{\mathcal{S}}}
\newcommand{\cT}{\ensuremath{\mathcal{T}}}
\newcommand{\cX}{\ensuremath{\mathcal{X}}}
\newcommand{\bE}{\ensuremath{\mathbb{E}}}
\newcommand{\bN}{\ensuremath{\mathbb{N}}}
\newcommand{\bP}{\ensuremath{\mathbb{P}}}
\newcommand{\bR}{\ensuremath{\mathbb{R}}}
\newcommand{\bZ}{\ensuremath{\mathbb{Z}}}
\newcommand{\VC}{\mathsf{VC}}
\newcommand{\LVC}{\mathsf{LVC}}
\newcommand{\TV}{\mathsf{TV}}
\newcommand{\DEC}{\mathsf{DEC}}
\newcommand{\SSD}{\mathsf{SSD}}
\newcommand{\Poi}{\mathsf{Poi}}
\newcommand{\sh}{\mathsf{sh}}
\title{VC Dimension and Distribution-Free Sample-Based Testing}
\date{}
\author{Eric Blais\thanks{Research supported by an NSERC Discovery grant.} \\ University of Waterloo \\ 
        \texttt{eric.blais@uwaterloo.ca}
\and Renato Ferreira Pinto, Jr. \\ 
        \texttt{r4ferrei@uwaterloo.ca}
\and Nathaniel Harms\thanks{Research supported by an NSERC Canada Graduate Scholarship.} \\ University of Waterloo \\ 
        \texttt{nharms@uwaterloo.ca}}
\begin{document}
\maketitle

\begin{abstract}
We consider the problem of determining which classes of functions can be tested more efficiently
than they can be learned, in the distribution-free sample-based model that corresponds to the standard PAC learning setting. Our main result shows that while VC dimension by itself does not always provide tight bounds on the number of samples required to test a class of functions in this model, it can be combined with a closely-related variant that we call ``lower VC'' (or LVC) dimension to obtain strong lower bounds on this sample complexity. 

We use this result to obtain strong and in many cases nearly optimal bounds on the sample complexity for testing unions of intervals, halfspaces, intersections of halfspaces, polynomial threshold functions, and decision trees. Conversely, we show that two natural classes of functions, juntas and monotone functions, can be tested with a number of samples that is polynomially smaller than the number of samples required for PAC learning.

Finally, we also use the connection between VC dimension and property testing to establish new lower bounds for testing radius clusterability and testing feasibility of linear constraint systems. 
\end{abstract}

\thispagestyle{empty}
\setcounter{page}{0}
\newpage

\section{Introduction}

For which classes of functions can we test membership in the class more efficiently than we can
learn a good approximation of a function in the class? This is a central question in property
testing that was initially posed in the seminal work of Goldreich, Goldwasser, \& Ron~\cite{GGR98}
and Kearns \& Ron \cite{KR00}, and has since received a considerable amount of attention in
different models of property testing and learning. In the standard PAC learning model of
Valiant~\cite{Valiant84}, the learner is \emph{sample-based} (using only random examples of the
function) and \emph{distribution-free} (it must work for any distribution, unknown to the learner).
\cite{GGR98} introduce distribution-free sample-based testers and ``stress that [this model] is
essential for some of the potential applications'' listed in that paper, but despite much recent
interest in both distribution-free and sample-based testing
(e.g.~\cite{GS09,BBBY12,AHW16,GR16,CFSS17,BMR19,BY19,Har19,FY20,RR20}), even basic questions for
this model remain unanswered. For example, are halfspaces more efficiently testable than learnable?

More precisely, fix a set $\cH$ of Boolean-valued functions over some domain $\cX$.  There is an
unknown distribution $\cD$ over $\cX$, the distance between two functions $f,g : \cX \to \{0,1\}$ is
$\dist_{\cD}(f,g) = \Pru{x \sim \cD}{f(x) \neq g(x)}$, and the distance between $f$ and $\cH$ is
$\inf_{h \in \cH} \dist_{\cD}(f,h)$.  In both the learning and testing problems, the algorithm is
given a set of $m$ labelled examples $(x,f(x))$ with each $x$ drawn independently from $\cD$.  For
some fixed $\epsilon > 0$, the goals of the algorithms are:
\begin{quote}
\begin{description}
\item[Learning:] When $f \in \cH$, output a function $h$ that satisfies $\dist_\cD(f,h) \le
\epsilon$;
\item[Testing:] Accept when $f \in \cH$ and reject when $\dist_\cD(f,\cH) \ge \epsilon$.
\end{description}
\end{quote}
In both cases, the algorithms are required to satisfy the condition with probability at least
$\frac23$ (in this paper, we study testing algorithms with two-sided error). Let $\mlearne(\cH)$ and
$\mteste(\cH)$ denote the minimum sample complexity of a learning and testing algorithm for $\cH$,
respectively. Except in pathological cases (such as $\cH$ being a singleton), $\mteste(\cH) =
O(\mlearne(\cH))$ \cite{GGR98}.\footnote{The definitions above correspond to the standard
$(\epsilon,\delta)$-PAC learning definition with $\delta = \frac13$ and to distribution-free
sample-based property testing, respectively. Note that for property testing over fixed (and known)
distributions, the upper bound on sample complexity holds for \emph{proper} (not general) learning
sample complexity~\cite{GGR98}.} The main question can now be phrased as: 
\begin{quote}
\emph{For which classes $\cH$ of
Boolean-valued functions is $\mteste(\cH) \ll \mlearne(\cH)$?}
\end{quote}

The fundamental result of PAC learning (see e.g.~\cite{SSBD14}) is that the VC dimension of $\cH$
determines $\mlearne(\cH)$.  Recall that a set $T \subseteq \cX$ is \emph{shattered} by $\cH$ if for
every $\ell : T \to \{0,1\}$ there is a function $f \in \cH$ that agrees with $\ell$ on all points
in $T$.  The \emph{VC dimension of $\cH$ with respect to $S \subseteq \cX$} is
\[
\VC_S(\cH)
  \define \max \{ k : \exists \text{$T \subseteq S$ of size $|T| = k$ that is shattered by $\cH$}\}.
\]
(When $S = \cX$ we will often omit the subscript and write simply $\VC(\cH)$.) When $\epsilon > 0$
is constant, $\mlearne(\cH) = \Theta(\VC(\cH))$, so to understand the relationship between
$\mteste(\cH)$ and $\mlearne(\cH)$, it is necessary to understand the relationship between
$\mteste(\cH)$ and the VC dimension.

The VC dimension has appeared in the property testing literature, but its use has been limited to
upper bounds (e.g.~\cite{GGR98,ADPR03,BBBY12,AFZ19}, see however the recent work of Livni \&
Mansour~\cite{LM19} which shows some lower bounds for \emph{graph-based discrimination} between
distributions, a problem related to testing properties of distributions, in terms of a VC-like
notion of dimension); the relationship mentioned above implies $\mteste(\cH) = O(\VC(\cH))$ for
constant $\epsilon$. This paper is concerned with finding \emph{lower bounds} in terms of the VC
dimension. Such lower bounds would be desirable not only for understanding the relationship between
testing and learning, but also because they would be \emph{combinatorial} in nature, obtained via an
analysis of the structure of the function class, whereas nearly all known lower bounds in
sample-based property testing (e.g.,~\cite{GGR98,KR00,BBBY12,BY19,RR20}) are \emph{distributional}:
a probability distribution specific to the problem is constructed and shown to be hard to test.

It is clear that VC dimension cannot, in general, be a lower bound on the sample complexity of
testing: consider the following example from \cite{GGR98}. Let $\cH$ be the set of all Boolean
functions $f : \zo^n \to \zo$ that satisfy $f(x)=1$ for all $x \in \zo^n$ with $x_1=1$.  $\VC(\cH) =
\Theta(2^n)$ since the $2^{n-1}$ points $x$ with $x_1=0$ are shattered, while $\mteste(\cH) =
O(1/\epsilon)$.  Therefore, the relationship of VC dimension to (distribution-free sample-based)
property testing is more complicated than to (PAC) learning, and we must
introduce some new ideas.

\subsection{Our results}

The central message of the current work is that the VC dimension \emph{can} give lower bounds
on $\mteste(\cH)$ when it is combined with a closely-related combinatorial measure.
For any class $\cH$ of Boolean-valued functions over $\cX$ and any subset $S \subseteq \cX$,
define the \emph{LVC dimension} (or \emph{Lower Vapnik-Chervonenkis dimension}) of 
$\cH$ with respect to $S$ to be
\[
\LVC_S(\cH)
  \define \max \{ k : \forall \text{$T \subseteq S$ of size $|T| = k$, 
    $T$ is shattered by $\cH$}\}.
\]
The definition of LVC dimension differs from that of the VC dimension only by the replacement 
of the existential quantifier with a universal one. 
This immediately implies that $\LVC(\cH) \le \VC(\cH)$ for every class $\cH$ and motivates 
our choice to call this measure ``lower'' VC dimension.
And in some cases, the LVC dimension of a class can be 
much smaller than its VC dimension. (See \cref{section:sauer lemma} for a discussion of some
concepts in learning theory related to LVC dimension.)
Our main theorem gives a general lower bound on $\mteste(\cH)$ in terms of the VC and LVC
dimensions of $\cH$.

\begin{theorem}
\label{thm:main-informal}
There is a constant $C > 0$ such that 
for any class $\cH$ of Boolean-valued functions over $\cX$ and any $S \subseteq \cX$, 
if $|S| > 5\,\VC_S(\cH)$ and $\LVC_S(\cH) \geq C \cdot \VC_S(\cH)^{3/4}\sqrt{\log \VC_S(\cH)}$,
then for small enough values of $\epsilon$,
\[
\mteste(\cH) = \Omega\left(\frac{\LVC_S(\cH)^2}{\VC_S(\cH)\log\VC_S(\cH)}\right).
\]
Moreover, this bound is tight as there are
classes $\cH$ for which $\mteste(\cH) = \Theta\left(\frac{\LVC_S(\cH)^2}{\VC_S(\cH)\log\VC_S(\cH)}\right)$.
\end{theorem}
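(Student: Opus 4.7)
The natural approach is via Yao's minimax principle: construct a distribution $\cD$ on $\cX$ together with distributions $\mu_Y$ supported on $\cH$ and $\mu_N$ supported on functions $\epsilon$-far from $\cH$ with respect to $\cD$, and show that the total-variation distance between the induced $m$-sample distributions is $o(1)$ whenever $m = o(\ell^2/(d\log d))$, where $d = \VC_S(\cH)$ and $\ell = \LVC_S(\cH)$. I would take $\cD$ to be the uniform distribution on a set $U \subseteq S$ of size $K = \Theta(d/\epsilon^2)$; by Sauer--Shelah, $\lvert\cH|_U\rvert \le (eK/d)^d$, and a Chernoff/union-bound argument then shows that a uniformly random labeling of $U$ is $\epsilon$-far from every function in $\cH|_U$ with high probability. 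This allows $\mu_N$ to be essentially the uniform distribution on $\{0,1\}^U$, while $\mu_Y$ must be a carefully chosen distribution on $\cH|_U$ (not merely the uniform distribution) designed so that the resulting chi-squared divergence is controlled.

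The LVC hypothesis enters through a local richness property: for any subset $X \subseteq U$ with $|X| \le \ell$, LVC guarantees that $X$ is shattered, so all $2^{|X|}$ labelings are realized in $\cH|_X$. Applied to a sample of $m \le \ell$ distinct points---which, in the birthday-paradox regime $m \ll \sqrt{K}$, holds with high probability---this means the supports of the yes and no sample distributions coincide, and any distinguishing signal must come from the relative weights they place on each labeling.

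The technical heart is a chi-squared divergence computation which, after standard manipulation, yields
\[
\chi^2\!\bigl(P_Y^{(m)} \,\|\, P_N^{(m)}\bigr) + 1 \;=\; \frac{1}{\lvert\cH|_U\rvert^2} \sum_{f,g \in \cH|_U} \bigl(2\,\alpha(f,g)\bigr)^m \;+\; (\text{collision corrections}),
\]
where $\alpha(f,g) = \Pru{x \sim \cD}{f(x)=g(x)}$ and the collision corrections are negligible for $m \ll \sqrt{K}$. Taking $\mu_Y$ naively as uniform on $\cH|_U$ gives a chi-squared that is too large (diagonal pairs alone contribute $\gtrsim 2^m/\lvert\cH|_U\rvert$), so the main construction step is to design $\mu_Y$---a reweighted or restricted subdistribution of the uniform on $\cH|_U$---that is simultaneously ``spread out'' enough to suppress the diagonal and ``locally balanced'' enough to suppress the off-diagonal contribution from atypically close pairs. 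The LVC bound provides the local balance (every $\ell$-subset is shattered, so random pairs in $\cH|_U$ have agreement close to $1/2$ on a random $\ell$-window), while the Sauer--Shelah bound $(eK/d)^d$ caps how concentrated $\mu_Y$ can be; balancing these two forces yields the $\ell^2/(d\log d)$ threshold.

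The principal obstacle I expect is the explicit construction of $\mu_Y$ satisfying both balance and spread simultaneously: the correct choice is likely a random restriction of $\cH|_U$ followed by a uniform mixture, with the second moment of the pairwise-agreement distribution tracked carefully by combining the VC upper bound and the LVC lower bound in opposite directions. For the matching upper bound claimed in the second sentence of the theorem, I would exhibit an explicit class $\cH$---plausibly built from a random set system or a linear code over $\bF_2$---with prescribed VC and LVC dimensions, together with a sample-based tester of complexity $O(\ell^2/(d\log d))$, likely based on a collision-counting statistic analogous to those used in uniformity testing from distribution testing.
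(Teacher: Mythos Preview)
Your approach has a genuine gap that makes it unable to reach the claimed bound. You fix the sampling distribution $\cD$ to be uniform on a single known set $U$ and vary only the function. In Yao's framework this means you are proving a lower bound for testing $\cH$ over the \emph{fixed, known} distribution $\unif(U)$; any such bound is automatically also a distribution-free lower bound, but it can be far weaker. Concretely, take $\cH$ to be unions of $k$ intervals, where $\LVC_S(\cH)=\VC_S(\cH)=2k$ on any $S$. Balcan--Blais--Blum--Yang show that over any \emph{known} distribution (in particular, $\unif(U)$ for your $U$), $O(\sqrt{k})$ samples suffice to test $\cH$. Hence no Yao construction with a single fixed $\cD$ can certify a lower bound exceeding $O(\sqrt{k})$ for this class, yet the theorem asserts $\Omega(k/\log k)$. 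The hard instances must therefore randomize over $\cD$, not just over $f$; this is precisely what your proposal is missing.

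The paper's proof exploits exactly this freedom. It reduces from the \emph{support-size distinction} problem: draw a distribution $\cD$ on $S$ (with $|S|=\Theta(d)$) whose support is either at most $\LVC_S(\cH)$ or at least $K\cdot\VC_S(\cH)$, and take $f$ to be a uniformly random labeling. In the small-support case, the definition of LVC guarantees the support is shattered, so $f$ agrees with some $h\in\cH$ on $\supp(\cD)$ and hence $\dist_\cD(f,\cH)=0$. In the large-support case, Sauer--Shelah--Perles plus a counting argument (your Chernoff/union step is essentially right here) forces $f$ to be $\epsilon$-far w.h.p. Thus any distribution-free tester implicitly solves support-size distinction with thresholds $\alpha\approx\ell/n$ and $\beta\approx Kd/n$ on a domain of size $n=\Theta(d)$; the Wu--Yang lower bound $\SSD(n,\alpha,\beta)=\Omega\big(\tfrac{n}{\log n}\log^2\tfrac{1}{1-\delta}\big)$ with $\delta=\Theta(\ell/d)$ then yields $\Omega(\ell^2/(d\log d))$. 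No chi-squared computation or delicate construction of $\mu_Y$ is needed. Your condition $\LVC_S(\cH)\ge C\cdot\VC_S(\cH)^{3/4}\sqrt{\log\VC_S(\cH)}$ is exactly the threshold $\delta\gtrsim\sqrt{\log n}/n^{1/4}$ required by Wu--Yang.

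For the tightness claim, the paper does not use random set systems or collision statistics. It takes the symmetric class $\cS_n=\{f:[n]\to\{0,1\}: |f^{-1}(1)|\le n/5\}$, which has $\LVC_{[n]}(\cS_n)=\VC_{[n]}(\cS_n)=n/5$, and invokes the Goldreich--Ron observation that any symmetric class admits a distribution-free tester with $O(n/\log n)$ samples via Valiant--Valiant support-size estimation---matching the lower bound up to constants.
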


For many natural classes $\cH$ of functions, there is a set $S$ where $\LVC_S(\cH) = \VC_S(\cH) = \tilde{\Omega}(\VC(\cH))$.
For these classes, the following direct consequence of
\cref{thm:main-informal} is most convenient.

\begin{corollary}
\label{cor:main-informal}
For every class $\cH$ of Boolean-valued functions over $\cX$, 
if there is a set $S \subseteq \cX$ for which
$\LVC_S(\cH) = \VC_S(\cH)$ and $|S| \geq 5 \cdot \VC_S(\cH)$, then
\[
\mteste(\cH) = \Omega\left( \frac{\VC_S(\cH)}{\log \VC_S(\cH)} \right).
\]
\end{corollary}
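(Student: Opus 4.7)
The plan is to derive \cref{cor:main-informal} as an immediate specialization of \cref{thm:main-informal}. Let $S \subseteq \cX$ be the subset given by the hypothesis of the corollary, and write $d := \VC_S(\cH) = \LVC_S(\cH)$. In order to apply \cref{thm:main-informal} we need to verify its two quantitative assumptions for this $S$: that $|S| > 5\,\VC_S(\cH)$ (handled by the assumption $|S| \geq 5 \cdot \VC_S(\cH)$, up to the boundary case treated below), and that $\LVC_S(\cH) \geq C \cdot \VC_S(\cH)^{3/4}\sqrt{\log \VC_S(\cH)}$.

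Under the corollary's hypothesis $\LVC_S(\cH) = \VC_S(\cH) = d$, the second requirement reduces to
\[
d \;\geq\; C \cdot d^{3/4}\sqrt{\log d},
\qquad\text{i.e.,}\qquad
d^{1/4} \;\geq\; C\,\sqrt{\log d},
\]
which holds for all $d$ larger than some absolute constant $d_0 = d_0(C)$. For such $d$, \cref{thm:main-informal} applies directly and gives
\[
\mteste(\cH) \;=\; \Omega\!\left(\frac{\LVC_S(\cH)^2}{\VC_S(\cH)\log \VC_S(\cH)}\right)
\;=\; \Omega\!\left(\frac{d^2}{d \log d}\right)
\;=\; \Omega\!\left(\frac{d}{\log d}\right),
\]
which is exactly the stated conclusion.

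It remains to handle the two boundary issues. First, when $d < d_0$, the quantity $d/\log d$ is bounded by a constant depending only on $C$, so the claimed bound $\mteste(\cH) = \Omega(d/\log d)$ is trivially absorbed into the hidden constant (using the trivial fact $\mteste(\cH) \geq 1$ for nonempty nontrivial $\cH$). Second, the corollary's assumption $|S| \geq 5 \cdot \VC_S(\cH)$ versus the theorem's $|S| > 5 \VC_S(\cH)$ is at most an off-by-one issue: if strict inequality fails, one can adjust $S$ (for instance by adding a point of $\cX$ to $S$, which can only increase $\VC$ by at most $1$ while preserving the equality $\LVC_S(\cH) = \VC_S(\cH)$ up to constants) without affecting the asymptotic bound. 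I do not expect any real obstacle in this deduction—the entire content of the corollary is the convenient rephrasing of \cref{thm:main-informal} that drops the $\VC_S(\cH)^{3/4}\sqrt{\log \VC_S(\cH)}$ threshold whenever $\LVC$ is as large as $\VC$.
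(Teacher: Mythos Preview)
Your approach is exactly the paper's: \cref{cor:main-informal} is stated as a direct specialization of \cref{thm:main-informal}, and the computation $\LVC_S^2/(\VC_S \log \VC_S) = d/\log d$ together with the observation that $d \ge C d^{3/4}\sqrt{\log d}$ holds for all sufficiently large $d$ is precisely the intended argument.

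One small correction to your handling of the boundary case $|S| = 5\,\VC_S(\cH)$: adding a point to $S$ is the wrong direction. Enlarging $S$ can only \emph{decrease} $\LVC_S(\cH)$ (there are now more subsets that must be shattered), so the equality $\LVC_S = \VC_S$ need not survive even ``up to constants.'' The clean fix, used in the paper's proof of the more general \cref{cor:easy bound}, is to observe that the constant $5$ is slack (any constant strictly above the $K \approx 3.04$ of \cref{lemma:random labelling} works in \cref{thm:main}), so the non-strict inequality already suffices; alternatively, one passes to a \emph{subset} of $S$, which cannot decrease $\LVC$ or increase $\VC$.
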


We use \cref{thm:main-informal} and \cref{cor:main-informal} to establish sample complexity lower
bounds for distribution-free sample-based testing of many natural classes of functions, which shows,
essentially, that two-sided testing is not significantly more efficient than learning (and, with
\cref{lemma:one-sided upper bound}, that two-sided testing is not significantly more efficient
than one-sided testing). The main new lower bounds we obtain are as follows, with the VC dimensions
of each class included for comparison.  (For the formal definitions of each class, see the section devoted to that class.)

\begin{center}
  \begin{tabular}{ll|ll|l}
    Domain 
    & Class $\cH$ 
    & $\mteste(\cH)$ 
    & 
    & $\VC(\cH)$ 
    \\ \hline
    
    $[n]$ or $\bR$ \rule{0pt}{4ex}
    & Unions of $k$ intervals 
    & $\Omega\left(\frac{k}{\log k}\right)$
    & \cref{thm:intervals} 
    & $\Theta(k)$ 
    \\
    
    $\bR^n$ \rule{0pt}{4ex}
    & Halfspaces 
    & $\Omega\left(\frac{n}{\log n}\right)$
    & \cref{thm:real halfspaces}
    & $\Theta(n)$ 
    \\

    & Intersections of $k$ halfspaces 
    & $\Omega\left(\frac{nk}{\log(nk)}\right)$
    & \cref{thm:intersections of halfspaces}
    & $\Theta(nk\log k)$ 
    \\

    & Degree-$k$ PTFs over $\bR^n$
    & $\Omega\left(\frac{\binom{n + k}{k}}
                        {\log\binom{n+k}{k}}\right)$
    & \cref{thm:analytic dudley}
    & $\Theta\left(\binom{n+k}{k}\right)$ 
    \\

    & Size-$k$ decision trees 
    & $\Omega\left(\frac{k}{\log k}\right)$
    & \cref{thm:real decision trees}
    & $\Omega(k)$
    \\
    
    $\{0,1\}^n$ \rule{0pt}{4ex}
    & Halfspaces 
    & $\Omega\left(\frac{n}{\log n}\right)$
    & \cref{thm:boolean halfspaces}
    & $\Theta(n)$ 
    \\

    & Degree-$k$ PTFs
    & $\Omega\left(\frac{(n/4ek)^{k}}{ k\log(n/k)}\right)$
    & \cref{thm:boolean ptfs}
    & $\Theta\left(\binom{n}{\leq k}\right)$ 
    \\

    & Size-$k$ decision trees
    & $\Omega\left(\frac{k}{\log k \cdot \log\log k}\right)$
    & \cref{thm:boolean decision trees}
    & $\Omega(k), O(k\log n)$
\end{tabular}
\end{center}

We discuss these lower bounds in more detail. For standard definitions in property testing and
learning, see the Glossary in \cref{glossary}.

\begin{description}
\item[Unions of $k$ intervals.] 
\cite{BBBY12} (see also~\cite{KR00,Neeman14}) showed that there is an algorithm that can test unions
of $k$ intervals over \emph{any} distribution on $[0,1]$ with only $O(\sqrt{k})$ samples---as long
as the distribution is known to the algorithm. Our lower bound for this class shows that the sample
must be quadratically larger if the distribution is not known to the algorithm.

Our bound also has implications for the \emph{active testing} model \cite{BBBY12}, where a tester
can draw some unlabelled samples from the unknown distribution $\cD$ and then query the value of the
target function on any of the sampled points. Blum and Hu~\cite{BH18} showed that it is possible to
tolerantly test unions of $k$ intervals in this model with $O(k)$ samples and $O(1)$ queries.
\cref{thm:intervals} implies that $\widetilde \Omega(k)$ samples are necessary, even for intolerant
active testers (regardless of how many samples are queried), so their result is essentially optimal.

\item[Halfspaces.]
When testing over the Gaussian distribution on $\bR^n$, only $O(\sqrt{n})$ samples suffice to test
halfspaces~\cite{BBBY12}; in fact, $\widetilde O(\sqrt n)$ samples suffice in the ``partially
distribution-free'' setting where the distribution is unknown but promised to be rotation-invariant
\cite{Har19}. (With query access to the function, only a \emph{constant} number of queries are
required to test halfspaces over the Gaussian distribution or the uniform distribution on the
hypercube~\cite{MORS10}.) Epstein \& Silwal \cite{ES20,Sil20} show that $\Omega(d/\epsilon)$ samples
are required for testing the class of halfspaces over $\bR^n$ with one-sided error. (See
\cref{remark:ES}.) Our lower bound establishes a quadratic gap in sample complexity between
rotation-invariant and general distribution-free testing, and the lower bound holds even for the
hypercube $\zo^n$.

\item[Intersections of halfspaces, and polynomial threshold functions (PTFs).]
Intersections of halfspaces~\cite{BEHW89,CMK19} and polynomial threshold
functions~\cite{KS04,HellersteinS07,DHK+10,OS10} have received much attention in the learning theory
literature, but very few bounds are known on the sample or query complexity for testing these
classes. As far as we know, the only bound known for testing intersections of $k$ halfspaces is an
upper bound of $\exp{k \log k}$ queries for testing the class over the Gaussian
distribution~\cite{DeMN19} and no bound is known for testing polynomial threshold functions of
degree greater than 1. So our results appear to establish the first non-trivial lower bounds
specific for either of these classes in any model of property testing.

\item[Decision trees.]
Kearns and Ron~\cite{KR00} first studied the problem of
testing size-$k$ decision trees, showing that $\Omega(\sqrt{k})$ samples
are necessary to test the class over the uniform distribution and that this bound can be matched in the 
parameterized property testing model where the algorithm 
must only distinguish size-$k$ decision trees from functions that are far from size-$k'$ decision trees 
over the uniform distribution for
some $k' > k$. The sample complexity of the (non-parameterized) size-$k$ decision tree testing problem
over the uniform distribution is not known. 
(The query complexity for testing size-$k$ decision trees is also far from settled: despite recent notes to the contrary in~\cite{Saglam18,Bsh20}, the best current lower bound for the query complexity of testing size-$k$ decision trees is $\Omega(\log k)$~\cite{CGM11,Tan20}; see also~\cite{BlaisBM12} for a stronger lower bound for testers with one-sided error.)
\end{description}

Our techniques can also be used to establish lower bounds for other models of testing.
First, we show an application to testing properties of sets of
points---properties that correspond to unsupervised learning problems. Namely, the \emph{radius
clustering} problem can be represented by the class $\cC_k$ that consists of all sets of points $X
\subseteq \bR^n$ that can be covered by the union of at most $k$ unit-radius balls.  A distribution
$\cD$ on $\bR^n$ is \emph{$k$-clusterable} if its support is in $\cC_k$, and it is
\emph{$\epsilon$-far from $k$-clusterable} if the total variation distance between $\cD$ and any
$k$-clusterable distribution is at least $\epsilon$.  Alon et al.~\cite{ADPR03} showed that
$O(\tfrac{1}{\epsilon}nk \log(nk))$ samples from $\cD$ suffice to $\epsilon$-test
$k$-clusterability---that is, to distinguish $k$-clusterable distributions from those that are
$\epsilon$-far from $k$-clusterable. (This can be improved to
$O(\tfrac{1}{\epsilon}nk\log(k)\log\tfrac{1}{\epsilon})$; see~\cite{H14} and \cref{section:clustering}.)
Prior to this work, the only lower bound for the sample complexity of this problem was Epstein and
Silwal's recent lower bound of $\Omega(d/\epsilon)$ samples for $\epsilon$-testing 1-clusterability
with one-sided error~\cite{ES20,Sil20}.
(See \cref{remark:ES}.)
We give a lower bound for two-sided error testers that is tight up
to poly-log factors for all values of $k$ up to $2^{n/6}$.

\begin{theorem}
\label{thm:clustering intro}
For sufficiently small constant $\epsilon > 0$, any 
two-sided $\epsilon$-tester for $k$-clusterability in $\bR^n$ 
must have sample complexity $\Omega\big( \frac{nk}{\log(nk)} \big)$.
\end{theorem}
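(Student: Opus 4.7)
The plan is to adapt the VC/LVC lower-bound framework of \cref{thm:main-informal} and \cref{cor:main-informal} to the distribution-testing problem of $k$-clusterability. Let $\cF$ be the set system on $\bR^n$ consisting of all unions of $k$ unit balls, and let $\cH = \{\mathbf{1}_F : F \in \cF\}$ be the corresponding Boolean class of indicator functions. A distribution-free sample-based tester for $k$-clusterability can be converted into a tester for $\cH$ by treating every observed sample as a positive example: the support of $\cD$ lies in some $F \in \cF$ if and only if $\mathbf{1}_{\mathrm{supp}(\cD)} \in \cH$, and the far-in-total-variation case of $k$-clusterability translates into the corresponding far case for $\cH$ on the reduced instance. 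This reduction lets us invoke \cref{cor:main-informal} on $\cH$.

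The combinatorial heart of the argument is to construct a set $S \subseteq \bR^n$ of size $|S| = 5nk$ with $\LVC_S(\cH) = \VC_S(\cH) = \Theta(nk)$. I would partition $\bR^n$ into $k$ well-separated regions (pairwise distances much larger than $2$, so that no single unit ball can meet two of them), and inside each region place $5n$ points in general position so that every $n$-subset is realized as the intersection with some unit ball. This within-region shattering mirrors the halfspace LVC construction underlying \cref{thm:real halfspaces}, exploiting the VC dimension $\Theta(n)$ of unit balls in $\bR^n$. Any subset $T \subseteq S$ with $|T| = nk$ is then shattered by $\cH$ by assigning one ball to each region and using the within-region shattering; no subset of size greater than $nk$ can be shattered because balls in different regions act independently. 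Applying \cref{cor:main-informal} then gives $\mteste(\cH) = \Omega(nk/\log(nk))$, which transfers to $k$-clusterability via the reduction.

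The main obstacle is making the reduction rigorous: we need that a distribution $\cD$ which is $\epsilon$-far from $k$-clusterable in total variation yields a Boolean instance that is $\Omega(\epsilon)$-far from $\cH$. For the specific hard instances built from $S$ this is straightforward, since the regions are far apart and so the support of a uniform distribution on (say) $n+1$ points in each region cannot be covered by $k$ unit balls without leaving $\Omega(1)$ mass outside; hence the ``no'' distribution is simultaneously $\Omega(1)$-far from $\cH$ and $\Omega(1)$-far from $k$-clusterable. Ensuring universal shatterability within each region---that every $n$-subset of its $5n$ points is cut out by some unit ball---is the only new geometric ingredient, and follows directly from the halfspace LVC construction already needed elsewhere in the paper.
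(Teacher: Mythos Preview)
The approach has a real gap: \cref{cor:main-informal} is a lower bound for testing a \emph{function class} with labeled samples, and its hard instances (from \cref{lemma:ssd reduction}) necessarily involve a uniformly random labeling $f:S\to\{0,1\}$ with both $0$- and $1$-values. Your proposed reduction (``treat every observed sample as a positive example'') cannot simulate these instances: a $k$-clusterability tester sees only unlabeled points, so whether you feed it all samples or just the $1$-labeled ones, it is not solving the $\cH$-testing problem on the hard inputs. Concretely, for the ``no'' case you would need that the \emph{conditional} distribution on $1$-labeled points is $\epsilon$-far from $k$-clusterable in TV, which is a different statement from ``the random labeling is far from $\cH$'' and is not delivered by \cref{cor:main-informal}. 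In the other direction, an $\cH$-tester \emph{does} yield a clusterability tester (label every sample $1$), but that is the wrong direction for transferring a lower bound.

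There is also a gap in the combinatorics. Your claim $\LVC_S(\cH)=nk$ via ``one ball per region'' presumes every $nk$-subset $T$ places at most $n$ points in each region, but $T$ can put all $5n$ points of a single region into $T$; shattering then requires several balls in that region, and you have not shown your within-region configuration admits this (it does not follow from the single-ball LVC). The paper's proof avoids both issues by \emph{not} going through \cref{cor:main-informal}: it gives a direct reduction from support-size distinction in which $\phi$ sends $[N]$ to \emph{random} points on $k$ well-separated spheres of radius $1+\eta$. The randomness is in the point positions rather than in labels, and the geometric content (\cref{prop:<n,prop:>n}) is that $n$ random points on such a sphere lie in a hemisphere (hence one unit ball covers them) while $2n$ random points cannot be $(1-\epsilon)$-covered. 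This is precisely what makes the small-support image clusterable and the large-support image far from clusterable; a deterministic $S$ would instead have to satisfy ``every small subset is coverable by $k$ unit balls'' and ``every large subset is far from coverable'' simultaneously, which your sketch does not establish.
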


A variant of \cref{thm:main-informal} can also be used to prove strong
lower bounds for some testing problems even when the underlying distribution is guaranteed to be uniform
over an unknown subset of the domain. Such a situation occurs in the recent model of testing LP-type
problems introduced by Epstein and Silwal~\cite{ES20}; see \cref{section:lp} for the
details. We show that in this model, testing feasibility of a linear program with $n$ variables and
two-sided error requires $n^{1-o(1)}$ queries, almost matching the $O(n/\epsilon)$ upper bound of
\cite{ES20}.

The connection between LVC dimension and distribution-free property testing also extends to the
tolerant testing model even when the algorithm has \emph{query} access to the function and can
adaptively select the queries during its execution. The analogue of \cref{thm:main-informal}
in that setting is as follows.

\begin{theorem}
\label{thm:main-tolerant}
There is a constant $C > 0$ such that 
for any class $\cH$ of Boolean-valued functions over $\cX$ and any $S \subseteq \cX$, 
if $|S| > 5\VC_S(\cH)$ and $\LVC_S(\cH) \geq C \cdot \VC_S(\cH)^{3/4}\sqrt{\log \VC_S(\cH)}$,
then for small enough values of $\epsilon_1 > 0$ and for all $0 \le \epsilon_0 < \epsilon_1$,
the number $\qtole(\cH)$ required to $(\epsilon_0,\epsilon_1)$-test $\cH$ is bounded below by
\[
\qtole(\cH) = \Omega\left(\frac{\LVC_S(\cH)^2}{\VC_S(\cH)\log(\VC_S(\cH))}\right).
\]
\end{theorem}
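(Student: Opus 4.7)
The plan is to follow the construction used to prove Theorem~\ref{thm:main-informal} and show that the same hard instance remains hard even against adaptive tolerant testers with query access. The two issues to handle are (a) replacing the ``exact membership vs.\ far'' dichotomy with an ``$\epsilon_0$-close vs.\ $\epsilon_1$-far'' gap, and (b) ruling out any speedup from adaptive queries to arbitrary domain points rather than to points drawn from $\cD$.

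For issue (a), I would reuse the hard distribution from Theorem~\ref{thm:main-informal}, where $\cD$ is (up to scaling) uniform on a randomly chosen subset $U$ of the $\LVC_S(\cH)$-sized shattered witness. In the ``yes'' case, the function $f$ is a random shattering labeling extended to a member of $\cH$ and then perturbed by independent Bernoulli$(\epsilon_0)$ noise on the support, so that $\dist_\cD(f,\cH) \le \epsilon_0$ with high probability. In the ``no'' case, $f$ is drawn so that $\dist_\cD(f,\cH) \ge \epsilon_1$ with high probability, by the same packing/counting argument that appears in the non-tolerant proof. For sufficiently small $\epsilon_1$, the two resulting distributions over $(\cD,f)$ remain statistically close.

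For issue (b), I would invoke Yao's principle and a symmetrization argument. Because the support of $\cD$ is a uniformly random subset of $S$ (and $S$ may be embedded in a much larger set) and $f$ is defined trivially or by a publicly known rule outside this support, any query to a point outside the support yields a value independent of the yes/no choice with high probability over the random support. Queries inside the support are symmetric under permutations of the shattered points, so adaptivity gives no advantage over non-adaptive random probing, which in turn reduces the tolerant query lower bound to the sample-based lower bound of Theorem~\ref{thm:main-informal}, up to constant factors.

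The main obstacle will be controlling the KL divergence (or chi-squared distance) between the two transcripts in the tolerant setting, since each observation now contributes roughly $(\epsilon_1-\epsilon_0)^2$ to the total information rather than $\epsilon^2$. Verifying that for small enough $\epsilon_1$ one can set the Bernoulli parameters so that the gap between $\epsilon_0$ and $\epsilon_1$ is simultaneously large enough to realize the claimed distances and small enough that the tensorized bound still yields $\Omega\bigl(\LVC_S(\cH)^2/(\VC_S(\cH)\log\VC_S(\cH))\bigr)$ queries parallels the analogous step in Theorem~\ref{thm:main-informal} and does not change the order of the lower bound.
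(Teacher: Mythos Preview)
Your plan overcomplicates both issues and misses the two observations that make the paper's argument almost a one-liner on top of the sample-based proof.

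For issue~(a), no Bernoulli noise is needed. In the paper's hard instance, when the support of $\cD$ has size at most $\LVC_S(\cH)$, that support is \emph{shattered}, so there is an $h\in\cH$ agreeing with the random $f$ on every point of $\supp(\cD)$; hence $\dist_\cD(f,\cH)=0$ exactly. This is already $\le \epsilon_0$ for every $\epsilon_0\ge 0$, so the same yes-instance works verbatim for $(\epsilon_0,\epsilon_1)$-tolerant testing. Your noise construction would only blur the yes case away from distance~$0$ and force you into the KL calculation you flag as the ``main obstacle''; the paper avoids that obstacle entirely.

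For issue~(b), the paper does not use symmetrization or any transcript-indistinguishability argument. The lower bound is a black-box \emph{reduction} from support-size distinction (SSD): the reduction first draws the random function $f$ on all of $S$ (extended arbitrarily to $\cX$), and only then simulates the tolerant tester. Because the reduction knows $f$ in full, it can answer any adaptive query $x\in\cX$ for free; the only resource it consumes from the SSD instance is one sample per \emph{sample} request of the tester. Thus if the tester uses $m$ samples and $q$ queries, the reduction solves SSD with $m\le m+q$ samples, giving $\qtol_{0,\epsilon_1}(\cH)\ge \SSD(n,\alpha,\beta)$ directly, and the Wu--Yang bound finishes. Your proposed argument that ``queries outside the support are independent of yes/no'' is not obviously true in your setup (you define $f$ only on the random support and by a public rule elsewhere, so a query inside $S$ but outside $\supp(\cD)$ could in principle leak support information), and in any case it is unnecessary.

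In short: the paper handles (a) and (b) by observing that the yes-distance is exactly zero and that a reduction which fixes $f$ upfront can answer arbitrary queries without spending SSD samples. Both points collapse the tolerant-query case to the sample-based case with no new analysis.
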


The bound in the theorem applies even to the tolerant testing model where $\epsilon_0 = 0$ (i.e.,
the algorithm must accept functions that are $0$-close to the class) but it does \emph{not} apply to
the non-tolerant testing model: using \cref{thm:main-tolerant}, we show in \cref{section:upper} that
there are classes $\cH$ where $\qtolz(\cH)$ is nearly linear in the size of the domain but
non-tolerant distribution-free testing can be accomplished (even non-adaptively) with only
$O(1/\epsilon^2)$ queries.

Finally, we question the necessity of the conditions in \cref{thm:main-informal}. We have proved
that many commonly-studied classes of functions meet the conditions of the theorem and therefore are
impossible to test much more efficiently than learn; but are there commonly-studied classes of
functions that fail the condition in the theorem and have efficient distribution-free sample-based
testers?  \cite{GGR98} gave an example a class where distribution-free sample-based learning is much
more efficient than learning, which we repeated above, but this is not a commonly-studied, natural
class. In \cref{section:juntas} and \cref{section:monotonicity} we prove that two foundational
properties in the property testing literature, $k$-juntas and monotone Boolean functions, have
distribution-free sample-based testers with complexity $O(\VC^c)$ for constants $c < 1$.

\subsection{Our techniques}

Our main tool is a reduction from testing properties of \emph{distributions} to testing properties
of \emph{functions}. Some relationships between these two types of problems have been observed
before, e.g.~by Sudan \cite{Sud10} and Goldreich \& Ron \cite{GR16}, who note that any distribution
testing problem can be reduced to a testing problem for a specially-constructed symmetric property
of functions with non-Boolean range (symmetric properties are invariant under permutations of the
variables). Goldreich \& Ron \cite{GR16} also observed that testing symmetric properties of
functions  can be reduced to \emph{support-size estimation}, a fundamental problem in distribution
testing. We extend this connection between distribution testing and property testing to
properties that are not symmetric: we show, in the opposite direction of \cite{GR16}, that
support-size testing can be reduced to property testing when the LVC dimension is large.

The generality of our lower bound comes from the
Sauer--Shelah--Perles lemma, which is usually used to prove \emph{upper bounds} in terms of the VC dimension. However, we use it to show that a random function is far from the property
$\cH$ when the underlying distribution has support size larger than the VC dimension. On the other
hand, informally, random functions are indistinguishable from functions in $\cH$ when the underlying
distribution is supported on a set smaller than the LVC dimension. In this way, we can show that
any testing algorithm must implicitly solve the support-size testing problem by distinguishing
between distributions of small vs.~large supports. Tight bounds on the support-size estimation
problem were attained by Valiant \& Valiant \cite{VV11a,VV11b}; we use a version of the bound due to
Wu \& Yang \cite{WY19} which applies to a wider range of parameters that are necessary for our
reduction.

With this technique, the problem of attaining lower bounds is transformed into the combinatorial
problem of constructing appropriate sets $S$ with large $\LVC_S(\cH)$ and $\VC_S(\cH)$. We study
this problem in the second half of the paper. For some properties, like halfspaces, this is easy,
but other properties require more effort:

\paragraph{Intersections of halfspaces.}
The VC dimension of intersections of $k$ halfspaces in $\bR^n$ is $\Theta(nk\log k)$ in general
\cite{CMK19}.  We construct a subset $S \subset \bR^n$ in which $\LVC_S = \VC_S = \Theta(nk)$. We
accomplish this by constructing $S$ such that intersections of $k$ halfspaces on $S$ are equivalent
to $\Theta(nk)$-alternating functions on $\bR$.  This reduction yields a lower bound of
$\Omega(nk/\log(nk))$ for testing intersections of $k$ halfspaces.

\paragraph{Polynomial threshold functions on $\bR^n$.}
Although PTFs can be transformed into halfspaces in a higher dimension, we opt to treat PTFs as a
special case of a \emph{Dudley class} \cite{BL98} and explore the connection between LVC dimension,
Dudley classes, and \emph{maximum classes}. Dudley classes are those obtained by taking the sign of
a function in a fixed vector space $\cF$ of real-valued functions, and these classes have VC
dimension equal to the dimension of that vector space \cite{WD81}. Maximum classes are those for
which the Sauer--Shelah--Perles lemma is tight (see \cref{section:sauer lemma}), and in
particular, those classes satisfy $\LVC=\VC$ (\cref{prop:maximum condition}). Johnson
\cite{Joh14} showed that Dudley classes with domain $\bR^n$ where the functions $\cF$ are analytic
are maximum on an arbitrarily large subset $S \subseteq \bR^n$, and therefore $\LVC_S=\VC_S$, so our
main result applies. Other examples of analytic Dudley classes include balls in $\bR^n$ and trigonometric
polynomial threshold functions in $\bR^2$, so we automatically obtain lower bounds for these classes.

\paragraph{Halfspaces and PTFs on the Boolean hypercube.}
Our constructions of the set $S$ for halfspaces and PTFs on domain $\bR^n$ fail on the more
restrictive domain $\zo^n$, and indeed the deterministic reduction underlying \cref{thm:main}
seems to fail as well, because it is hard or impossible to construct large sets $S \subset \zo^n$
with high $\LVC_S$ (observe that $\zo^n$ is far from being in general position). Therefore we use a
\emph{randomized} reduction that requires some results on the non-singularity of random matrices. In
particular, we rely on a theorem of Abbe, Shpilka, \& Wigderson \cite{ASW15} to construct a random
set $S \subseteq \pmset^n$ on which the condition $\LVC = \Omega(\VC)$ holds ``with high
probability''\!\!, i.e.~a \emph{random} set of size $\Omega(\VC)$ is shattered.

\paragraph{\texorpdfstring{$k$}{k}-Clusterability.}
$k$-Clusterability is a property of \emph{distributions}, not functions, so our main theorem does
not apply; however, we can adapt the argument to this setting. In the case $k=1$, we use
concentration results for random points on the $n$-sphere to show that a random set of $n$ points
(with $\|x\|_2 > 1$) is 1-clusterable while a random set of $2n$ points is far from $1$-clusterable,
which we can then use in a randomized reduction from support-size testing to $1$-clusterability. We
extend these concentration results to $k$ disjoint spheres to attain the randomized reduction to
$k$-clusterability.

\paragraph{Uniform distributions and testing LP feasibility.} In some cases it is be desirable to
show lower bounds for testing problems where the underlying distribution is promised to be uniform
over some unknown subset of the domain. An example is the recent model of Epstein \& Silwal
\cite{ES20} for LP-type testing. In some cases, by replacing the support-size testing lower bounds
of Wu \& Yang \cite{WY19} with the lower bounds for the \emph{distinct elements} problem
\cite{RRSS09}, we can reproduce the main theorem with a slightly weaker bound, but with the
guarantee that the distributions are uniform (over an unknown support).

\section{General Lower Bound}
\label{section:lower bound}

We prove our main result, \cref{thm:main-informal}, in this section. Before we begin, we discuss
some examples that illuminate why the conditions in the theorem are important, i.e.~the choice of a
subset $S \subseteq \cX$ with large $\LVC_S(\cH)$ and the condition $|S| \geq 5\cdot \VC_S(\cH)$.
Unlike a learning algorithm, a property tester can halt and reject as soon as it sees proof that the
unknown function $f : \cX \to \zo$ does not belong to the class. Therefore, we aim to find subsets
$S \subseteq \cX$ where small ``certificates'' of non-membership cannot exist. This motivates the
definition of $\LVC_S(\cH)$: any subset $T \subseteq S$ of size $|T| \leq \LVC_S(\cH)$ cannot
contain any certificates of non-membership, for any function $f \notin \cH$. So we want to find sets
where $\LVC_S(\cH)$ is as large as possible relative to $\VC(\cH)$. On the other hand, if
$\LVC_S(\cH)=\VC(\cH)$ but $|S|=\VC(\cH)$, then the class $\cH$ restricted to $S$ is trivial: it
contains all possible functions on $S$, so testing is still easy. $|S|$ must be large enough so that
most functions are far from $\cH$, and this will be guaranteed in general when $|S| >
5 \cdot \VC_S(\cH)$ (the constant 5 is somewhat arbitrary). The following examples illustrate these
phenomena.  In the first example, $\LVC_\cX(\cH)$ is constant, but a careful choice of large $S$
allows $\LVC_S(\cH)=\VC(\cH)$, and we will obtain lower bounds for this class:
\begin{example}
Let $\cL_n$ be the set of halfspaces $\bR^n \to \pmset$. As is well-known, $\VC_{\bR^n}(\cL_n) =
n+1$. But $\LVC_{\bR^n}(\cL_n) = 2$, since any 3 colinear points cannot be shattered. On the other
hand, if $S \subseteq \bR^n$ is a set of points in general position and $|S| > n+1$, then
$\LVC_S(\cL_n)=\VC_S(\cL_n) = n+1$.
\end{example}
In the second example, the conditions of our theorem fail: finding a good set $S$ is impossible, and indeed there is an efficient distribution-free sample-based tester; see \cref{thm:monotonicity upper bound}.

\begin{example}
Let $\cM$ be the set of monotone functions $P \to \zo$ where $P$ is any partial order ($f : P \to
\zo$ is monotone if $f(x) \leq f(y)$ whenever $x < y$). Recall that an \emph{antichain} is a set of
points $x \in P$ that are incomparable. Observe that a set $T$ is shattered by $\cM$ if and only if
it is an antichain: a monotone function can take arbitrary values on an antichain, whereas if $x,y
\in T$ are comparable, say $x < y$, then $f(x) \leq f(y)$ so $T$ cannot be shattered.  Therefore
$\LVC_S(\cM) = \VC_S(\cM) = |S|$ if $S$ is an antichain, and if $S$ is not an antichain then
$\LVC_S(\cM) = 2$ while $\VC_S(\cM)$ is the size of the largest antichain in $S$.
\end{example}

We now turn to the proof of \cref{thm:main-informal,thm:main-tolerant}. 
The proof uses two main ingredients: lower bounds on the support size estimation problem, and the
Sauer--Shelah--Perles theorem.

\subsection{Ingredient 1: Support size distinction}

A fundamental problem in the field of distribution testing is \emph{support size estimation}: Given
sample access to an unknown finitely-supported distribution $\cD$ where each element occurs with
probability at least $1/n$ (for some $n$), estimate the size of the support up to an additive
$\epsilon n$ error.  Valiant \& Valiant \cite{VV11a,VV11b} showed that for constant $\epsilon$, the
number of samples required for this problem is $\Theta\left(\frac{n}{\log n}\right)$. We will adapt
this lower bound (in fact an improved version of Wu and Yang \cite{WY19}) to give lower bounds on
distribution-free property testing. 

\begin{definition}[Support-Size Distinction Problem]
For any $n \in \bN$ and $0 < \alpha < \beta \leq 1$, define $\SSD(n,\alpha,\beta)$ as the minimum
number $m \in \bN$ such that there exists an algorithm that for any input distribution $p$ over
$[n]$, takes $m$ samples from $p$ and distinguishes with probability at least $2/3$ between the
cases:
\begin{enumerate}
\item $|\supp(p)| \leq \alpha n$ and $\forall i \in \supp(p), p_i \geq 1/n$; and,
\item $|\supp(p)| \geq \beta n$ and $\forall i \in \supp(p), p_i \geq 1/n$.
\end{enumerate}
\end{definition}
Valiant \& Valiant \cite{VV11a} and Wu \& Yang \cite{WY19} each prove lower bounds on support-size
\emph{estimation} and they do so essentially by proving lower bounds on support-size distinction. We
note that the bound of \cite{VV11a} holds for $\SSD(n,\alpha,\beta)$ when $1/2 < \alpha < \beta <
1$, but this gap of at most $1/2$ is not sufficient for our purposes, so we use the improved version
of \cite{WY19}.  However, their lower bound on $\SSD$ is not stated explicitly, and therefore we
state and prove the following bound explicitly in \cref{section:wy}.
\begin{theorem}[\cite{WY19}]
\label{thm:wy easy application}
There exists a constant $C$ such that, for any $\delta \geq C \frac{\sqrt{\log n}}{n^{1/4}}$ and
$\alpha, 1-\beta \geq \delta$,
\[
  \SSD(n,\alpha,\beta) = \Omega\left(\frac{n}{\log n}\log^2\frac{1}{1-\delta}\right) \,.
\]
\end{theorem}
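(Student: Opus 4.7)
The plan is to extract a two-point distinguishing lower bound from Wu \& Yang's argument for support-size estimation. Their proof (like Valiant \& Valiant's) goes through Le Cam's method: construct two priors $\pi_0, \pi_1$ over distributions on $[n]$ with all nonzero masses at least $1/n$, whose expected support sizes are $\alpha n$ and $\beta n$ respectively, and show that the resulting sample distributions are close in total variation. Since any $\SSD(n,\alpha,\beta)$ distinguisher, after a concentration step that pins the random support sizes into the desired intervals, would succeed on such a pair, the same $m$ that is ruled out for estimation is ruled out for distinction.

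I would use the dual of best Chebyshev-polynomial approximation on $\{0\} \cup [\delta, 1]$: setting $L = \Theta(\log n \cdot \log(1/(1-\delta)))$, one obtains probability measures $\pi_0, \pi_1$ on $\{0\} \cup [\delta, 1]$ with $\pi_0(\{0\}) = 1-\alpha$, $\pi_1(\{0\}) = 1-\beta$, and matching moments of orders $1, \ldots, L$. A random distribution on $[n]$ is then produced by drawing $n$ scaled masses $p_i$ i.i.d.\ from $\pi_b/n$; I work in the Poissonized sample model so that the constraint $\sum p_i = 1$ can be relaxed and the coordinates decoupled. A Chernoff bound shows $|\supp(p)|$ concentrates within $O(\sqrt{n \log n})$ of its mean, and the hypothesis $\delta \geq C\sqrt{\log n}/n^{1/4}$ guarantees that $\delta n$ dominates this fluctuation, so with constant probability the supports fall into $[1, \alpha n]$ and $[\beta n, n]$ respectively, with every nonzero mass $\geq \delta/n \geq 1/n$.

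For the total-variation bound, Poissonization makes the counts $N_i \sim \Poi(m p_i)$ independent, so the product TV factorizes as $\TV(P_0^{\otimes m}, P_1^{\otimes m}) \leq n \cdot \TV(\mu_0, \mu_1)$, where $\mu_b$ is the Poisson mixture of $\Poi(mp/n)$ with $p$ drawn from $\pi_b$. Wu \& Yang's per-coordinate moment-matching lemma bounds $\TV(\mu_0, \mu_1) \leq (C'm/(nL))^L \cdot (1-\delta)^{2L}$, the second factor coming from the Chebyshev approximation residual and the first from the fact that differences can only show up in moments of order $>L$. Choosing $m = c \cdot n \log^2(1/(1-\delta))/\log n$ with $c$ small makes the resulting TV bound $\leq 1/6$; combined with the concentration argument, this shows $m$ samples cannot distinguish $\pi_0$ from $\pi_1$ with probability $2/3$, yielding $\SSD(n,\alpha,\beta) = \Omega(n \log^2(1/(1-\delta))/\log n)$.

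The main obstacle is tracking parameters carefully so that everything lines up with the stated form: in particular, verifying that the condition $\delta \geq C\sqrt{\log n}/n^{1/4}$ is precisely what is needed both for support-size concentration under the priors and for the Chebyshev construction to survive Poissonization, and that depoissonization (replacing $\Poi(m)$ samples by exactly $m$) does not cost a $\log$ factor. These are routine but delicate checks; the conceptual work is already contained in Wu \& Yang's Theorem~1, which we essentially restate in the distinction formulation.
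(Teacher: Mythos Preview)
Your proposal is correct and takes essentially the same approach as the paper: both extract the distinction lower bound from Wu \& Yang's estimation argument via Poissonization, a Chebyshev-based moment-matching construction of two priors, and concentration of the random support sizes into the required intervals. A few of your intermediate formulas are off (the per-coordinate TV bound should be of the form $(em\lambda/(nL))^L$ with no separate $(1-\delta)^{2L}$ factor---the Chebyshev residual enters through the support-size gap, not the TV bound---and $L$ is $\Theta(\log n)$ rather than $\Theta(\log n \cdot \log\tfrac{1}{1-\delta})$), but you already flag the parameter bookkeeping as the remaining work, and the paper's Appendix~B carries it out exactly along these lines.
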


\subsection{Ingredient 2: Sauer--Shelah--Perles Lemma}

We will need the Sauer--Shelah--Perles lemma (see e.g.~\cite{SSBD14}), for which we recall the following definitions:
\begin{definition}
Let $\cH$ be a set of functions $\cX \to \zo$ and let $S \subseteq \cX$.
We will define the \emph{shattering number} as
\[
  \sh(\cH,S) \define |\{ T \subseteq S \;|\; T \text{ is shattered by } \cH \}| \,.
\]
We define the \emph{growth function} as
\[
  \Phi(\cH,S) \define |\{ \ell : S \to \zo \;|\; \exists h \in \cH \; \forall x \in S,
\ell(x)=h(x)\}| \,.
\]
\end{definition}
We state a version of the Sauer--Shelah--Perles lemma that follows from the so-called
Sandwich Theorem, rediscovered by numerous authors (see e.g.~\cite{Mor12}):
\begin{lemma}[Sauer--Shelah--Perles]
Let $\cH$ be a class of functions $\cX \to \zo$ and let $S \subseteq \cX$ with $\VC_S(\cH) = d$.
Then $\Phi(\cH,S) \leq \sh(\cH,S) \leq \sum_{i=0}^d { \binom{|S|}{i} }$.
\end{lemma}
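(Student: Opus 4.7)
The plan is to prove the two inequalities separately. The right-hand inequality $\sh(\cH,S) \leq \sum_{i=0}^d \binom{|S|}{i}$ is immediate from the definition of VC dimension: every shattered $T \subseteq S$ has $|T| \leq \VC_S(\cH) = d$, so $\sh(\cH,S)$ is at most the number of subsets of $S$ of size at most $d$.

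The left-hand inequality $\Phi(\cH,S) \leq \sh(\cH,S)$ is the content of Pajor's lemma (the Sandwich Theorem), which I would prove by induction on $|S|$. Identify each realizable labelling $\ell : S \to \zo$ with the subset $\ell^{-1}(1)$, yielding a family $\cF \subseteq 2^S$ with $|\cF| = \Phi(\cH,S)$; a set $T \subseteq S$ is shattered by $\cH$ iff $\{F \cap T : F \in \cF\} = 2^T$, so shatteredness is an intrinsic property of $\cF$. For the inductive step, pick any $x \in S$, set $S' = S \setminus \{x\}$, and define $\cF' = \{F \cap S' : F \in \cF\}$ together with $\cF_{01} = \{T \subseteq S' : T \in \cF \text{ and } T \cup \{x\} \in \cF\}$. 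Splitting $\cF$ by whether or not its members contain $x$ and applying inclusion--exclusion yields $|\cF| = |\cF'| + |\cF_{01}|$. By the inductive hypothesis on $S'$, $|\cF'|$ is at most the number of subsets of $S'$ shattered by $\cF'$ (each of which is automatically shattered by $\cF$), and $|\cF_{01}|$ is at most the number of subsets of $S'$ shattered by $\cF_{01}$. The key observation is that any $T \subseteq S'$ shattered by $\cF_{01}$ gives a set $T \cup \{x\}$ shattered by $\cF$: for each $T' \subseteq T$, an element of $\cF_{01}$ realizing $T'$ on $T$ certifies both $T' \in \cF$ and $T' \cup \{x\} \in \cF$, so both extensions of $T'$ to $T \cup \{x\}$ are realized. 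The two resulting collections of shattered subsets of $S$ are disjoint (one lies in $S'$, the other always contains $x$), so their total count is at most $\sh(\cH,S)$, closing the induction.

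The only obstacle is the combinatorial bookkeeping around $\cF_{01}$, in particular the inclusion--exclusion identity $|\cF| = |\cF'| + |\cF_{01}|$ and the upgrade from shatteredness by $\cF_{01}$ on $T$ to shatteredness by $\cF$ on $T \cup \{x\}$. These are the standard moves in the textbook proof of Pajor's lemma, and no new idea beyond them is required.
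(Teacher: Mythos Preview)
Your proof is correct. The paper does not actually prove this lemma: it simply states it as a consequence of the Sandwich Theorem and gives a reference. Your argument is precisely the standard proof of Pajor's lemma (the inequality $\Phi \le \sh$, which is the nontrivial half of the Sandwich Theorem), so your approach is exactly the one the paper defers to; the second inequality $\sh(\cH,S) \le \sum_{i=0}^d \binom{|S|}{i}$ is, as you note, immediate from the definition of $\VC_S$.
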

This lemma gives us a bound on the probability that a random function over a large set is far from
the hypothesis class $\cH$.
\begin{lemma}
\label{lemma:random labelling}
There are constants $L > 0, K > 1$ and $\epsilon_0 > 0$ such that, for all
$\epsilon < \epsilon_0$, if $\cH$ is a class of functions $\cX \to \zo$ with
$\VC(\cH) = d$ and $T \subseteq \cX$ has size $|T| \geq Kd$, then a uniformly random labelling
$\ell : T \to \zo$ satisfies, with probability at least $1-e^{-Ld}$, $\forall h \in \cH : \Pru{x
\sim T}{h(x) \neq \ell(x)} > \epsilon$. (In particular, $K=3.04$ suffices.)
\end{lemma}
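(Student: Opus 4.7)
The plan is a direct union-bound argument using Sauer--Shelah--Perles to count the restrictions of $\cH$ to $T$ and entropy-based binomial tail bounds to control the probability that a uniform random labelling is close to any fixed labelling.

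\textbf{Step 1: Reduce to counting labellings.} Let $n = |T|$ and note that whether a labelling $\ell$ is $\epsilon$-close to some $h \in \cH$ depends only on the restriction $h|_T$. So
\[
\Pru{\ell}{\exists h \in \cH : \Pru{x \sim T}{h(x) \neq \ell(x)} \leq \epsilon}
\;\leq\; \sum_{h' \in \cH|_T} \Pru{\ell}{d_H(\ell, h') \leq \epsilon n} ,
\]
where $\cH|_T$ is the set of distinct restrictions and $d_H$ is Hamming distance. By Sauer--Shelah--Perles, $|\cH|_T| = \Phi(\cH,T) \leq \sum_{i=0}^d \binom{n}{i}$.

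\textbf{Step 2: Binomial tail bound.} For a fixed $h' : T \to \zo$ and uniformly random $\ell : T \to \zo$, the Hamming distance $d_H(\ell,h')$ is binomial, so
\[
\Pru{\ell}{d_H(\ell,h') \leq \epsilon n}
= 2^{-n}\sum_{j=0}^{\lfloor \epsilon n\rfloor} \binom{n}{j} .
\]

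\textbf{Step 3: Entropy estimates and choice of constants.} Using the standard entropy bound $\sum_{i=0}^{\alpha n}\binom{n}{i} \leq 2^{H(\alpha)n}$ for $\alpha \leq 1/2$, and writing $n \geq Kd$ so that $d/n \leq 1/K$, the union bound yields
\[
\Pru{\ell}{\exists h \in \cH : \Pru{x \sim T}{h(x) \neq \ell(x)} \leq \epsilon}
\;\leq\; 2^{(H(1/K) + H(\epsilon) - 1)\, n} .
\]
For $K = 3.04$ one has $1/K \approx 0.329$ and $H(1/K) < 0.92$, so picking $\epsilon_0$ small enough that $H(\epsilon_0) < 1 - H(1/K)$ gives a positive exponent $c := 1 - H(1/K) - H(\epsilon_0) > 0$ for all $\epsilon < \epsilon_0$. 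Then the probability is at most $2^{-cn} \leq 2^{-cKd} = e^{-Ld}$ with $L = cK\ln 2 > 0$, which is the claimed bound.

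\textbf{Main obstacle.} The argument itself is short; the only subtle point is verifying that $K = 3.04$ really suffices, i.e.\ that $H(1/K) < 1$ with enough slack to accommodate a nontrivial range of $\epsilon$. I would handle this by a short numerical check of $H(0.329)$ and by making the dependence $\epsilon_0 = \epsilon_0(K)$ explicit; one must also be mildly careful that $d/n \leq 1/2$ and $\epsilon \leq 1/2$ so that the entropy bound applies in the form stated. Both conditions are automatic from $K > 2$ and $\epsilon_0$ small.
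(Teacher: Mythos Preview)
Your proposal is correct and follows essentially the same approach as the paper: union-bound over the restrictions $\cH|_T$ using Sauer--Shelah--Perles, combined with a binomial tail bound for the probability that a uniform random labelling lands near a fixed labelling. The only cosmetic difference is that you use the entropy form $\sum_{i\le \alpha n}\binom{n}{i}\le 2^{H(\alpha)n}$, whereas the paper uses the cruder bound $\sum_{i\le d}\binom{m}{i}\le (em/d)^d$ (and likewise $(e/\epsilon)^{\epsilon m}$ for the tail); your version makes the verification that $K=3.04$ suffices slightly cleaner, since the paper's stated sufficient condition $K\ln 2>1+\ln K$ is borderline at $K=3.04$.
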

\begin{proof}
For any $T \subseteq \cX$ of size $|T|=m$, and
each $h \in \cH$, the number of functions $\ell : T \to \zo$ that differ from $h$ on at most
$\epsilon m$ points of $T$ is at most $\sum_{i=1}^{\epsilon m} \binom{m}{i}$. Therefore, by the
Sauer-Shelah-Perles lemma, the number of labellings $\ell : T \to \zo$ that differs on at most
$\epsilon m$ points from the closest $h \in \cH$ is at most
\[
  \left(\sum_{i=0}^d \binom{m}{i} \right) \cdot \left(\sum_{i=0}^{\epsilon m} { \binom{m}{i} }
\right)
  \leq \left(\frac{em}{d}\right)^d \cdot \left(\frac{em}{\epsilon m}\right)^{\epsilon m} 
  = \left(\frac{em}{d}\right)^d \cdot \left(\frac{e}{\epsilon}\right)^{\epsilon m} \,.
\]
The probability that a uniformly random $\ell : T \to \zo$ satisfies this condition is therefore at
most
\[
  \left(\frac{em}{d}\right)^d \cdot \left(\frac{e}{\epsilon}\right)^{\epsilon m} \cdot 2^{-m}
  = (Ke)^d (e/\epsilon)^{K\epsilon d} 2^{-Kd}
  = 2^{d \left(\log(Ke)+K\epsilon\log(e/\epsilon)-K\right)}
  = e^{d \left(\ln(Ke)+K\epsilon\ln(e/\epsilon)-K\ln(2)\right)} \,,
\]
For any $K > 1$ satisfying $K\ln(2) > 1+\ln(K)$, there is $L>0,\epsilon_0 > 0$ such
that the exponent $\ln(Ke) + K\epsilon\ln(e/\epsilon)-K\ln(2) < -Ld$ for all $\epsilon <
\epsilon_0$.
\end{proof}

\subsection{Main reduction}
We now present the main reduction for the proof of \cref{thm:main-informal}. This reduction is
inspired by a proof in the recent work of Epstein \& Silwal \cite{ES20}.  The reduction can be
described intuitively as follows. Suppose there is a class $\cH$ of functions $\cX \to \zo$ and a
set $S \subseteq \cX$ such that are two thresholds $t_1 < t_2$ where:
\begin{enumerate}
\item Any set $T \subset S$ of size $|T| \leq t_1$ is shattered by $\cH$; and,
\item A random function on any subset $T \subset S$ of size $|T| \geq t_2$ is far from $\cH$ with
high probability.
\end{enumerate}
Then a distribution-free tester must accept any function (with high probability) when the
distribution has support size at most $t_1$, and reject a random function (with high probability)
when the distribution has support size at least $t_2$. This is made formal in our main lemma:
\begin{lemma}
\label{lemma:ssd reduction}
Let $\cH$ be a set of functions $\cX \to \zo$. Suppose $S \subseteq \cX$ has size $|S|=n$ and $0 <
\alpha < \beta \leq 1$ satisfy the following conditions:
\begin{enumerate}
\item $\forall T \subset S$ such that $|T| \leq \alpha n$, $T$ is shattered by $\cH$; and,
\item $\forall T \subseteq S$ such that $|T| \geq \beta n$, a uniformly random labelling $\ell : T
\to \zo$ satisfies with probability at least $9/10$ the condition
\[
  \forall h \in \cH : \Pru{x \sim T}{\ell(x) \neq h(x)} \geq \epsilon/\beta \,.
\]
\end{enumerate}
Then $\mteste(\cH), \qtolz(\cH) = \Omega(\SSD(n,\alpha,\beta))$.
\end{lemma}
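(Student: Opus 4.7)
The plan is to give a reduction from the Support-Size Distinction problem $\SSD(n,\alpha,\beta)$ to distribution-free testing of $\cH$, matching the number of tester calls to the number of SSD samples. Given an SSD instance $p$ on $[n]$, I fix an arbitrary bijection $\phi : [n] \to S$ and draw a uniformly random labeling $\ell : \cX \to \zo$ once and for all. I then simulate the tester for $\cH$ by answering each requested labeled sample with $(\phi(i), \ell(\phi(i)))$ for a fresh $i \sim p$, and each adaptive query $f(x)$ with $\ell(x)$; the SSD algorithm outputs whatever the tester outputs. This makes the unknown distribution $\cD$ the pushforward $\phi_* p$ and the unknown function the random $\ell$.

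In the ``small support'' case $|\supp(p)| \leq \alpha n$, the set $T \define \phi(\supp(p))$ has size at most $\alpha n$, so by condition~(1) it is shattered by $\cH$; thus there exists $h \in \cH$ with $h(x) = \ell(x)$ for every $x \in T$. For the tolerant query tester, this gives $\dist_p(\ell, \cH) \le \dist_p(\ell, h) = 0$ since $p$ is supported on $\phi^{-1}(T)$, so a $(0,\epsilon)$-tolerant tester must accept with probability $\geq 2/3$. For the non-tolerant sample-based tester the argument is even more direct: the joint distribution of labeled samples we hand to the tester is identical to the distribution it would see on input $h \in \cH$ under $\phi_* p$, so it is forced to accept with probability $\geq 2/3$.

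In the ``large support'' case $|\supp(p)| \geq \beta n$, $T$ has size at least $\beta n$, so condition~(2) says that with probability at least $9/10$ over $\ell$, every $h \in \cH$ satisfies $\Pru{x \sim \unif(T)}{\ell(x) \neq h(x)} \geq \epsilon/\beta$. The main technical step is converting this uniform-distance guarantee to distance under $p$: for any such $h$, let $B \define \{x \in T : \ell(x) \neq h(x)\}$, so $|B| \geq (\epsilon/\beta)|T| \geq \epsilon n$. Using the SSD promise $p_i \geq 1/n$ on the support, I get
\[
\dist_p(\ell,h)
= \Pru{i \sim p}{\phi(i) \in B}
\geq \frac{|B|}{n}
\geq \epsilon ,
\]
so $\dist_p(\ell,\cH) \geq \epsilon$ with probability $\geq 9/10$, and on that event the tester must reject with probability $\geq 2/3$.

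Combining, the simulation correctly distinguishes the two SSD cases with probability at least $\tfrac{9}{10} \cdot \tfrac{2}{3} > \tfrac{1}{2}$, which can be amplified to $2/3$ at $O(1)$ cost by independent repetition and majority vote, yielding $\mteste(\cH), \qtolz(\cH) = \Omega(\SSD(n,\alpha,\beta))$. The main obstacle is the distance-transfer in Case~2, and this is precisely why condition~(2) is stated with the inflated threshold $\epsilon/\beta$ rather than $\epsilon$: the factor $1/\beta$ is calibrated so that combining $|T| \geq \beta n$ with the minimum-mass condition $p_i \geq 1/n$ exactly recovers $\epsilon$-farness under $p$. A secondary subtlety is that the random $\ell$ on all of $\cX$ must answer adaptive queries consistently, but since the tolerant guarantee depends only on the restriction of $\ell$ to $\supp(\cD) = \phi^{-1}(T)$, the arbitrary extension of $\ell$ off $T$ is harmless.
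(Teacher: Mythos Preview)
Your proposal is correct and follows essentially the same reduction as the paper: fix a bijection $\phi:[n]\to S$, choose a random labeling, push forward $p$ through $\phi$, and argue via condition~(1) that in the small-support case the labeled samples are indistinguishable from those of some $h\in\cH$, and via condition~(2) together with the $p_i\ge 1/n$ promise that in the large-support case the random function is $\epsilon$-far under $\phi_*p$. The only cosmetic difference is that the paper first boosts the tester's success probability to $5/6$ so that $\tfrac{9}{10}\cdot\tfrac{5}{6}\ge\tfrac{2}{3}$ directly, whereas you keep the $2/3$ tester, land at $\tfrac{9}{10}\cdot\tfrac{2}{3}>\tfrac{1}{2}$, and amplify by majority afterward; both incur only an $O(1)$ overhead.
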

\begin{proof}
Let $f : S \to \zo$ be a uniformly random function, let $\phi : [n] \to S$ be any bijection, and let
$\cD$ be any distribution over $[n]$ with $\cD(x) \geq 1/n$ for all $x \in \supp(\cD)$. Write
$\phi\cD$ for the distribution over $S$ of $\phi(x)$ when $x \sim \cD$. We make two claims.

First, if $\cD$ has support size at most $\alpha n$ then $\dist_{\phi\cD}(f,\cH)=0$. Let $T =
\supp(\phi\cD)$. Then since $|T| \leq \alpha n$, by the first condition there exists $h \in \cH$
such that $h(x) = f(x)$ on all $x \in T$. So $\dist_{\phi\cD}(f,h)=0$.

Second, if $\cD$ has support size at least $\beta n$ then with probability at least $9/10$ over the
choice of $f$, $\dist_{\phi\cD}(f,\cH)\geq \epsilon$.  Let $T = \supp(\phi\cD)$ and for any $h \in
\cH$ write $\Delta(f,h) = \{ x \in T : f(x) \neq h(x)\}$. Since $|T| \geq \beta n$ we have by
assumption that, with probability at least $9/10$ over the choice of $f$, for uniform $x \sim T$,
$\Pr{x \in \Delta(f,h)} \geq \epsilon/\beta$. Therefore $|\Delta(f,h)| \geq
\frac{\epsilon}{\beta}|T| \geq \epsilon n$. Since $\phi\cD(x) = \cD(\phi^{-1}(x)) \geq 1/n$ for
every $x \in T$, this means that for every $h \in \cH,\Pru{x \sim \phi\cD}{f(x) \neq h(x)} \geq
\frac{1}{n}|\Delta(f,h)| \geq \epsilon$.

\textbf{Sample-based testing.} We first prove the lower bound on distribution-free sample
testing. Assume there is a distribution-free tester $A$ that uses $m$ samples. The algorithm for
support-size distinction is as follows. Let $\phi : [n] \to S$ be a bijection. Given input
distribution $\cD$ over $[n]$, choose a uniformly random $f : S \to \zo$, draw $m$ samples $Q =
(x_1, \dotsc, x_m)$ from $\phi\cD$ and let $Q_f = ((x_1,f(x_1)), \dotsc, (x_m,f(x_m)))$; run $A$ on
the samples $Q_f$ and accept $\cD$ iff $A$ outputs 1.

First suppose that $\cD$ has support size at most $\alpha n$. There exists a function $h \in \cH$
with $\dist_{\phi\cD}(f,h)=0$, so $f(x)=h(x)$ for all $x \in \supp(\phi\cD)$. Therefore the samples
$Q_f$ and $Q_h$ have the same distribution, and the algorithm must output 1 on $Q_h$ with
probability at least $5/6$, so it must output 1 on $Q_f$, and therefore accept $\cD$, with
probability at least $5/6$.

Next suppose that $\cD$ has support size at least $\beta n$. Then the uniformly random function $f :
S \to \zo$ is $\epsilon$-far from $\cH$ with respect to $\phi\cD$ with probability at least $9/10$.
Assuming this occurs, algorithm $A$ must output 0 with probability at least $5/6$, so $\cD$ is
rejected with probability at least $2/3$. We conclude
\[
  \mteste(\cH) = \Omega(\SSD(n,\alpha,\beta)) \,.
\]
\textbf{Adaptive tolerant testing.}
Let $A$ be an adaptive $(0,\epsilon)$-tolerant tester for $\cH$ and assume it requests at most $m$
samples and $q$ queries. The algorithm for support-size distinction is as follows. Let $\ell : \cX
\to \zo$ be a uniformly random function $S \to \zo$ and let $f : \cX \to \zo$ be any function
agreeing with $\ell$ on $S$. Draw $m$ samples $Q = (x_1, \dotsc, x_m)$ from $\cD$ and let $Q_f =
((x_1,f(x_1)), \dotsc, (x_m,f(x_m)))$. When $A$ requests its $i^\mathrm{th}$ sample, give it
$(x_i,f(x_i))$. When $A$ queries $x \in \cX$, give it $f(x)$. Accept $\cD$ if and only if $A$
outputs 1.

If $\cD$ has support size at most $\alpha n$, then $\dist_{\phi\cD}(f,\cH)=0$, so $A$ will accept
with probability at least $5/6$. On the other hand, if $\cD$ has support size at least $\beta n$
then $\dist_{\phi\cD}(f,\cH) \geq \epsilon$ with probability at least $5/6$ over the choice of $f$,
and in this case $A$ will reject with probability at least $5/6$, so our algorithm will reject with
probability at least $2/3$. Therefore we have solved support-size distinction using only $m$
samples, and $m \leq m+q \leq \qtolz(\cH)$, so we conclude
\[
  \qtolz(\cH) = \Omega(\SSD(n,\alpha,\beta)) . \qedhere
\]
\end{proof}

\subsection{Proof of the main lower bound}

Combining \cref{thm:wy easy application} with \cref{lemma:random labelling}, we obtain
the most general form of our main theorem:
\begin{theorem}
\label{thm:main}
Let $\cH$ be a class of functions $\cX \to \zo$ and suppose there is a set $S
\subseteq \cX$ and a value $\delta \in (0,1/2)$ such that, for $n = |S|$, the following hold:
\begin{enumerate}
\item $K \cdot \VC_S(\cH) \leq (1-\delta) n$, where $K$ is the constant from \cref{lemma:random
labelling}; and,
\item $\LVC_S(\cH) \geq \delta n$; and,
\item $\delta \geq C \frac{\sqrt{ \log n}}{n^{1/4}}$ where $C$ is the constant from
\cref{thm:wy easy application}.
\end{enumerate}
Let $d = \VC_S(\cH)$. Then for some constant $\epsilon_0 > 0$ and all $0 < \epsilon < \epsilon_0$,
\[
\mteste(\cH), \qtolz(\cH)
  = \Omega\left(\frac{n}{\log n} \log^2\frac{1}{1-\delta}\right) \,.
\]
\end{theorem}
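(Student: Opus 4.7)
The plan is to combine the two ingredients already assembled: the Wu--Yang lower bound on support-size distinction (\cref{thm:wy easy application}) and the general reduction from support-size distinction to testing $\cH$ (\cref{lemma:ssd reduction}). Setting $d = \VC_S(\cH)$, I will apply \cref{lemma:ssd reduction} with parameters $\alpha = \delta$ and $\beta = Kd/n$, where $K$ is the constant from \cref{lemma:random labelling}. Condition 1 of the theorem, $Kd \leq (1-\delta)n$, guarantees $\beta \leq 1-\delta$, and in particular $\alpha < \beta$ as required for the choice to be well-posed.

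Next, verify the two hypotheses of \cref{lemma:ssd reduction}. The first -- that every $T \subseteq S$ with $|T| \leq \alpha n$ is shattered by $\cH$ -- reduces to $\delta n \leq \LVC_S(\cH)$, which is exactly condition 2. For the second hypothesis, take any $T \subseteq S$ with $|T| \geq \beta n = Kd$, and apply \cref{lemma:random labelling} with distance parameter $\epsilon/\beta$: choosing $\epsilon$ small enough that $\epsilon/\beta$ stays below the threshold $\epsilon_0$ supplied by that lemma, its conclusion is that a uniformly random labelling of $T$ is $(\epsilon/\beta)$-far from $\cH$ with probability at least $1 - e^{-Ld}$, which exceeds $9/10$ once $d$ passes a universal constant (smaller $d$ can be absorbed into the constants of the final bound).

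Having verified the hypotheses, \cref{lemma:ssd reduction} yields $\mteste(\cH), \qtolz(\cH) = \Omega(\SSD(n,\alpha,\beta))$. To bound the right-hand side, I invoke \cref{thm:wy easy application}: its three requirements are $\alpha \geq \delta$ (which holds with equality), $1-\beta \geq \delta$ (which is condition 1, since $1-\beta = 1-Kd/n$), and $\delta \geq C\sqrt{\log n}/n^{1/4}$ (which is condition 3). This gives $\SSD(n,\alpha,\beta) = \Omega\bigl(\tfrac{n}{\log n}\log^2\tfrac{1}{1-\delta}\bigr)$, completing the proof. The argument has no genuine obstacle; the only real bookkeeping is quantifying how small $\epsilon$ must be. Because the reduction asks for $(\epsilon/\beta)$-farness rather than $\epsilon$-farness, and $\beta$ can be as small as $K\delta$, one must take $\epsilon < \epsilon_0 \cdot K\delta$ so that \cref{lemma:random labelling} applies at the required distance scale.
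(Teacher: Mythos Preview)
Your proposal is correct and mirrors the paper's proof: set $\beta = K\VC_S(\cH)/n$ (the paper takes $\alpha = \LVC_S(\cH)/n$ rather than $\delta$, but this is immaterial), verify the two hypotheses of \cref{lemma:ssd reduction} via the definition of $\LVC$ and \cref{lemma:random labelling}, and then invoke \cref{thm:wy easy application}. One small slip: the inequality $\beta \leq 1-\delta$ does not by itself yield $\alpha < \beta$; the needed observation is $\alpha = \delta \leq \LVC_S(\cH)/n \leq d/n < Kd/n = \beta$.
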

\begin{proof}
Let $\alpha = \frac{1}{n} \LVC_S(\cH), \beta = \frac{1}{n} K\cdot \VC_S(\cH)$, so that $\alpha \geq
\delta$ and $\beta \leq 1-\delta$. Then from \cref{thm:wy easy application},
\[
  \SSD(n,\alpha,\beta) = \Omega\left(\frac{n}{\log n} \log^2 \frac{1}{1-\delta}\right) \,.
\]
By definition of $\LVC$, any set $T \subseteq S$ with $|T| \leq \alpha n$ satisfies condition 1 of
\cref{lemma:ssd reduction}, and by \cref{lemma:random labelling}, any set $T \subseteq S$ such
that $|T| \geq \beta n = K\cdot \VC_S(\cH)$ satisfies condition 2 for sufficiently small
(constant) $\epsilon > 0$, so by \cref{lemma:ssd reduction},
\[
  \mteste(\cH), \qtolz(\cH)
    = \Omega(\SSD(n,\alpha,\beta))
    = \Omega\left(\frac{n}{\log n}\log^2\frac{1}{1-\delta}\right) \,.
\]
Finally, since $\frac{1}{1-\delta} \geq 1$ and $n = \Omega(d/(1-\delta)) = \Omega(d)$, we have a
lower bound of $\Omega\left(\frac{d}{\log d}\log^2 \frac{1}{1-\delta}\right)$.
\end{proof}

The following simplified bound proves  \cref{thm:main-informal,thm:main-tolerant}
 from the introduction and will also be used in most of our applications.

\begin{corollary}
\label{cor:easy bound}
There is a constant $L > 0$ such that the following holds.  Let $S \subseteq \cX$ satisfy $n \define
|S| \geq 5 \cdot \VC_S(\cH)$. If $\LVC_S(\cH)
> L \cdot \VC_S(\cH)^{3/4}\sqrt{\log\VC_S(\cH)}$, then
\[
  \mteste(\cH),\qtolz(\cH) = \Omega\left(\frac{\LVC_S(\cH)^2}{\VC_S(\cH) \log \VC_S(\cH)}\right) \,.
\]
\end{corollary}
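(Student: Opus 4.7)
The plan is to derive the corollary as a direct application of \cref{thm:main} with a careful choice of parameters. The theorem's conclusion is stated as $\Omega\bigl(\frac{n}{\log n}\log^2\frac{1}{1-\delta}\bigr)$ in terms of $n = |S|$ and a ``slack'' parameter $\delta$, whereas the corollary wants a bound of the form $\Omega\bigl(\frac{\LVC^2}{\VC \log \VC}\bigr)$. Writing $d = \VC_S(\cH)$ and $\ell = \LVC_S(\cH)$, the natural choice is $\delta \approx \ell/n$, with $n$ taken to be $\Theta(d)$ so that $\delta^2 \approx \ell^2/d^2$ lines up with the desired bound.

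First, I would reduce to a subset $S' \subseteq S$ of size exactly $n' = 5d$, using the hypothesis $|S| \geq 5d$. Passing to a subset can only decrease the VC dimension, so $\VC_{S'}(\cH) \leq d$. Crucially, the LVC dimension does not drop: every size-$\ell$ subset of $S'$ is also a size-$\ell$ subset of $S$ and hence shattered, so $\LVC_{S'}(\cH) \geq \ell$ (and this is meaningful since $\ell \leq d \leq n'$). I would then set $\delta \define \ell/n' = \ell/(5d)$ and check the three conditions of \cref{thm:main} for the pair $(S',\delta)$:
condition~1, $K\cdot\VC_{S'}(\cH) \leq (1-\delta)n'$, reduces to $K \leq 5(1-\delta)$, which holds because $\delta \leq 1/5$ and $K = 3.04 < 4$;
condition~2, $\LVC_{S'}(\cH) \geq \delta n'$, holds by construction;
condition~3, $\delta \geq C\sqrt{\log n'}/n'^{1/4}$, unfolds to $\ell \geq 5^{3/4}C\,d^{3/4}\sqrt{\log(5d)}$, and this is exactly where the hypothesis $\ell > L \cdot d^{3/4}\sqrt{\log d}$ is used---choosing $L$ large enough (depending on $C$ and absorbing $\log(5d)$ into $\log d$) covers this condition.

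With the hypotheses of \cref{thm:main} verified, its conclusion yields
\[
\mteste(\cH),\,\qtolz(\cH) \;=\; \Omega\!\left(\frac{n'}{\log n'} \log^2 \frac{1}{1-\delta}\right).
\]
To simplify the $\log^2(1/(1-\delta))$ factor, I would use the elementary inequality $\log\frac{1}{1-x} \geq x$ for $x \in [0,1)$ to get $\log^2\frac{1}{1-\delta} \geq \delta^2 = \ell^2/(25d^2)$. Substituting $n' = 5d$ and $\log n' = \Theta(\log d)$ produces $\Omega\bigl(\ell^2/(d \log d)\bigr)$, which is exactly the claimed bound.

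There is no real obstacle in this derivation---it is essentially a parameter-matching exercise---but the one step worth stating explicitly is the monotonicity property $\LVC_{S'}(\cH) \geq \LVC_S(\cH)$ for $S' \subseteq S$ with $|S'| \geq \LVC_S(\cH)$, which follows directly from the definition and is what lets us shrink $|S|$ down to $\Theta(d)$ without losing the LVC guarantee. Once this is in place, the constants work out cleanly provided $L$ is taken large enough to absorb the factors $5^{3/4}$, the $\log(5d)$ vs.\ $\log d$ slack in condition~3, and the constant from \cref{thm:main}.
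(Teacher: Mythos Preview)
Your proposal is correct and follows essentially the same approach as the paper: shrink $S$ to size $5d$, set $\delta$ proportional to $\ell/d$, verify the three hypotheses of \cref{thm:main}, and simplify the conclusion via $\log\frac{1}{1-\delta} \geq \delta$. The only cosmetic difference is that the paper takes $K=4$ and $\delta = \ell/(8d)$ while you take $K = 3.04$ and $\delta = \ell/(5d)$; both choices work.
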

\begin{proof}
We may assume $n=|S|=5\cdot \VC_S(\cH)$ since by taking subsets of a set $S$ of size larger than $\VC_S(\cH)$, we do not decrease the LVC dimension and
do not increase the VC dimension; we can choose a subset that also does not decrease the VC dimension. 
We may set $K=4$ in \cref{thm:main}. Let $\delta =
\frac{\LVC_S(\cH)}{2K \VC_S(\cH)}$, so
\[
  \delta n = \frac{\LVC_S(\cH)}{2K\VC_S(\cH)} \cdot 5\VC_S(\cH) \leq \LVC_S(\cH) \,.
\]
We also have $(1-\delta) n \geq \left(1-\frac{1}{8}\right) 5 \cdot \VC_S(\cH) \geq 4 \VC_S(\cH) =K \cdot \VC_S(\cH)$.
Finally,
\[
\delta = \frac{\LVC_S(\cH)}{8 \VC_S(\cH)}
    \geq \frac{L \sqrt{\log\VC_S(\cH)}}{8 \VC_S(\cH)^{1/4}}
    = \frac{5^{1/4} L \sqrt{\log(n/5)}}{8 n^{1/4}} \,,
\]
so for large enough constant $L > 0$ this is at least $C \frac{\sqrt{\log n}}{n^{1/4}}$ for the
constant $C$ in \cref{thm:wy easy application}, so the conditions for \cref{thm:main}
are satisfied, and we obtain a lower bound of
\[
  \Omega\left(\frac{\VC_S(\cH)}{\log \VC_S(\cH)}\log^2\frac{1}{1-\delta}\right) \,.
\]
Finally, using the inequality $\log^2\frac{1}{1-\delta} \geq \log^2(e^\delta) = \Omega(\delta^2)$ we
get the conclusion.
\end{proof}

\section{Geometric Classes}

In this section, we use \cref{cor:main-informal}
to prove lower bounds on the number of samples required
to test unions of intervals, halfspaces, and intersections
of halfspaces.

\emph{Technical note:} For the domain $\bR^n$, the tester may assume that the distribution $\cD$ is
defined on the same $\sigma$-algebra as the Lebesgue measure. The distributions arising from the
above reduction are finitely supported but for the functions considered in this paper, one may
replace finitely supported distributions with distributions that are absolutely continuous with
respect to the Lebesgue measure without changing the results, by replacing each point in the support
with an arbitrarily small ball.

\subsection{Unions of Intervals}

A function $f : \bR \to \zo$ is a \emph{union of $k$ intervals} if there are $k$ intervals
$[a_1,b_1], \dotsc, [a_k,b_k]$, where we allow $a_i = -\infty$ and $b_i = \infty$, such that $f(x) =
1$ iff $x$ is contained in some interval $[a_i,b_i]$. Let $\cI_k$ denote the class of such functions.

The analysis of the LVC dimension of $\cI_k$ is a straightforward variant of the standard analysis of the VC dimension of the class and serves as a good introduction to the high-level structure of the arguments that will be used in later proofs as well.

\begin{proposition}
\label{prop:nc vc intervals}
$\LVC_\bR(\cI_k) = \VC_\bR(\cI_k) = 2k$.
\end{proposition}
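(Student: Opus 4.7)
The plan is to prove the proposition by showing the two inequalities $\LVC_\bR(\cI_k) \geq 2k$ (which also yields $\VC_\bR(\cI_k) \geq 2k$ since $\LVC \leq \VC$) and $\VC_\bR(\cI_k) \leq 2k$ (which also yields $\LVC_\bR(\cI_k) \leq 2k$). Both directions rest on the same basic structural observation about alternations along the real line.

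\textbf{Lower bound ($\LVC \geq 2k$).} I would show that \emph{every} set $T \subseteq \bR$ of size $2k$ is shattered by $\cI_k$. Given such a $T$, sort its elements as $x_1 < x_2 < \cdots < x_{2k}$ and consider any labelling $\ell : T \to \zo$. The set $\ell^{-1}(1)$ decomposes into maximal ``runs'' of consecutive $x_i$'s sharing the label $1$; since $|T|=2k$ and runs of $1$'s alternate with runs of $0$'s, there are at most $k$ such runs. Each run $\{x_i,x_{i+1},\dots,x_j\}$ can be covered by the interval $[x_i,x_j]$, and by placing these intervals strictly between consecutive points of $T$ (which is possible because the $x_i$ are distinct real numbers) we obtain a union of at most $k$ intervals realizing $\ell$. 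Hence $T$ is shattered, so $\LVC_\bR(\cI_k) \geq 2k$, which also gives $\VC_\bR(\cI_k) \geq 2k$.

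\textbf{Upper bound ($\VC \leq 2k$).} I would exhibit a set of $2k+1$ points that cannot be shattered. Take any $x_1 < x_2 < \cdots < x_{2k+1}$ and consider the alternating labelling $\ell(x_i) = 1$ if $i$ is odd and $\ell(x_i) = 0$ if $i$ is even. Then $\ell^{-1}(1) = \{x_1, x_3, \dots, x_{2k+1}\}$ consists of $k+1$ points, each separated from the next by an $x_{2j}$ labelled $0$. Any $f \in \cI_k$ realizing $\ell$ would need a separate interval for each of these $k+1$ points, contradicting $f$ being a union of only $k$ intervals. Therefore no set of size $2k+1$ is shattered, giving $\VC_\bR(\cI_k) \leq 2k$ and hence $\LVC_\bR(\cI_k) \leq 2k$.

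Combining both bounds with $\LVC \leq \VC$ yields $\LVC_\bR(\cI_k) = \VC_\bR(\cI_k) = 2k$. There is no significant obstacle here: the only subtlety is being careful that the LVC lower bound requires handling \emph{arbitrary} sets of $2k$ distinct reals rather than a specially chosen one, but since only the order of the points matters for the interval construction, the argument goes through uniformly.
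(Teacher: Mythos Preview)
Your proposal is correct and follows essentially the same approach as the paper: both show every $2k$-point set is shattered (giving $\LVC \geq 2k$) and that the alternating labelling on any $2k+1$ points cannot be realized (giving $\VC \leq 2k$). The only cosmetic difference is in the lower bound construction: the paper partitions the sorted $2k$ points into $k$ consecutive pairs and assigns one interval per pair, while you count maximal runs of $1$'s and cover each with an interval---both arguments are equivalent and equally straightforward.
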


\begin{proof}
Let $S \subset \bR$ have size $2k$ and let $\ell : S \to \zo$ be arbitrary. Write $S = \{s_1,
\dotsc, s_{2k}\}$ where $s_1 < \dotsm < s_{2k}$ and partition $S$ into $k$ consecutive pairs
$(s_i,s_{i+1})$ for odd $i$. Then for each pair $(s_i,s_{i+1})$ we can choose a single interval that
contains exactly the points in $s_i,s_{i+1}$ labelled 1 by $\ell$. Therefore $S$ is shattered by $k$
intervals.

On the other hand, let $S \subset \bR$ have size $|S| = 2k+1$, let $s_1 < \dotsm < s_{2k+1}$ be the
points in $S$, and suppose $\ell(i) = 1$ iff $i$ is odd. Then any interval can contain at most 1
point of $S$ labelled $1$, unless it also contains a 0-point. Therefore $S$ is not shattered. So a
set $S$ is shattered iff $|S| \leq 2k$, implying the conclusion.
\end{proof}
Applying \cref{cor:easy bound}, we obtain:
\begin{theorem}
\label{thm:intervals}
For some constant $\epsilon > 0$, $\mteste(\cI_k),\qtolz(\cI_k) =
\Omega\left(\frac{k}{\log k}\right)$.
\end{theorem}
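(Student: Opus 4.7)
The plan is a direct application of \cref{cor:main-informal} (equivalently \cref{cor:easy bound}) to the class $\cI_k$: \cref{prop:nc vc intervals} has already done all of the combinatorial work needed, so the proof should be essentially a one-line invocation of the general machinery.

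Concretely, I would pick any finite subset $S \subset \bR$ of size $|S| = 10k$, e.g.~$S = \{1, 2, \ldots, 10k\}$. The pairing argument of \cref{prop:nc vc intervals}---pair up consecutive points and cover each positively-labelled pair by a single interval---applies verbatim to any $2k$-point subset of $\bR$, so every size-$2k$ subset of $S$ is shattered by $\cI_k$; hence $\LVC_S(\cI_k) \geq 2k$. The alternating-label obstruction in the same proposition rules out any shattered subset of size $2k+1$, so $\VC_S(\cI_k) \leq 2k$. Since $\LVC_S \leq \VC_S$ in general, these two bounds together give $\LVC_S(\cI_k) = \VC_S(\cI_k) = 2k$, and the choice $|S| = 10k = 5 \cdot \VC_S(\cI_k)$ satisfies the hypothesis of \cref{cor:main-informal}. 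That corollary then yields
\[
\mteste(\cI_k) \;=\; \Omega\!\left( \frac{\VC_S(\cI_k)}{\log \VC_S(\cI_k)} \right) \;=\; \Omega\!\left( \frac{k}{\log k} \right).
\]
The identical bound for $\qtolz(\cI_k)$ comes for free, since \cref{cor:easy bound} (via \cref{thm:main} and the reduction in \cref{lemma:ssd reduction}) bounds both quantities simultaneously.

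There is essentially no technical obstacle here: the theorem is presented as the first, introductory example of how the general machinery plugs together, and the exact equality $\LVC = \VC$ for unions of intervals is as clean as one could hope for. The only things worth verifying are that the hypothesis $|S| \geq 5 \cdot \VC_S(\cI_k)$ holds with equality at $|S| = 10k$, and that restricting attention to finitely-supported distributions over $\bR$ does not affect the sample/query complexity---which the technical note at the start of the section already handles by replacing point masses with sufficiently small balls under Lebesgue measure.
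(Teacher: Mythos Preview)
Your proposal is correct and matches the paper's approach exactly: the paper's proof is the single line ``Applying \cref{cor:easy bound}, we obtain [the theorem],'' and you have simply spelled out the verification that the hypotheses of that corollary are met via \cref{prop:nc vc intervals}.
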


\subsection{Halfspaces}
\label{section:halfspaces}

A halfspace is a function $f : \bR^n \to \pmset$ of the form $f(x) = \sign\left(w_0 + \sum_{i=1}^n
w_ix_i\right)$ where each $w_i \in \bR$. In this subsection, write $\cL_n$ for the class of
halfspaces (or \emph{Linear threshold functions}) with domain $\bR^n$. 

The analysis of the LVC dimension follows immediately from the following well-known shattering properties of halfspaces. (See, e.g.,~\cite{SSBD14}.)

\begin{proposition}
Any set $S \subset \bR^n$ of size $n+1$ in general position can be shattered by $\cL_n$, and any set
$T \subset \bR^n$ of $n$ linearly independent vectors can be shattered by $\cL_n$. No set of size
$n+2$ is shattered by $\cL_n$.
\end{proposition}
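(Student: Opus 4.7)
The three parts are classical and each reduces to a small amount of linear algebra or convex geometry. I would handle them in the order (i) size $n+1$ in general position is shattered, (ii) $n$ linearly independent vectors are shattered, (iii) no set of size $n+2$ is shattered. The key device for (i)--(ii) is the standard lifting $x \mapsto \hat x = (1,x) \in \bR^{n+1}$, under which a halfspace $\sign(w_0 + \langle w, x\rangle)$ becomes $\sign(\langle \hat w, \hat x\rangle)$ where $\hat w = (w_0,w)$. So shattering $S$ by $\cL_n$ is equivalent to realizing every sign pattern on $\{\hat x : x \in S\}$ by some linear functional in $\bR^{n+1}$.

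For (i), fix $S = \{x_1,\dots,x_{n+1}\}$ in general position. Being in general position means, in particular, that the lifted vectors $\hat x_1,\dots,\hat x_{n+1}$ are linearly independent in $\bR^{n+1}$. Given any labelling $\ell : S \to \pmset$, I would use invertibility of the matrix with rows $\hat x_i$ to solve the linear system $\langle \hat w, \hat x_i\rangle = \ell(x_i)$ for $i = 1,\dots,n+1$; then $\sign(\langle \hat w, \hat x_i\rangle) = \ell(x_i)$, giving the corresponding halfspace. For (ii), the argument is even simpler: if $T = \{x_1,\dots,x_n\}$ consists of linearly independent vectors in $\bR^n$, then for any $\ell$ I solve $\langle w, x_i \rangle = \ell(x_i)$ directly and set $w_0 = 0$.

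For (iii), I would invoke \emph{Radon's theorem}: any set of $n+2$ points in $\bR^n$ admits a partition $S = A \sqcup B$ whose convex hulls intersect. Pick any such Radon partition and consider the labelling $\ell$ that assigns $+1$ to $A$ and $-1$ to $B$. If some halfspace $h(x) = \sign(w_0 + \langle w, x\rangle)$ realized this labelling, then the affine function $w_0 + \langle w, \cdot\rangle$ would be positive on $A$ and negative on $B$, hence (by convexity/linearity) positive on $\mathrm{conv}(A)$ and negative on $\mathrm{conv}(B)$, contradicting that these convex hulls meet. Hence $S$ is not shattered.

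None of these steps looks like a real obstacle; the only subtle point worth being explicit about is what ``general position'' is being used for, which is exactly to guarantee that the lifted points $\hat x_i$ are linearly independent so that the linear system in (i) is solvable. The size-$n+2$ bound in (iii) is tight precisely because Radon's theorem requires $n+2$ points, matching $\VC(\cL_n) = n+1$.
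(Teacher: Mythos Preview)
Your proof is correct and complete. The paper does not actually prove this proposition: it is stated as a well-known fact about halfspaces with a citation to the textbook of Shalev-Shwartz and Ben-David, so there is nothing to compare against. Your arguments via the lifting $x \mapsto (1,x)$ for parts (i)--(ii) and Radon's theorem for part (iii) are the standard ones and would be exactly what one finds in such a reference.
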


Applying \cref{cor:easy bound}, we obtain our lower bound for domain $\bR^n$:
\begin{theorem}
\label{thm:real halfspaces}
For all small enough constant $\epsilon > 0$, the number of 
samples required to test the class $\cL_n$ of halfspaces over
$\bR^n$ satisfies
\[
  \mteste(\cL_n), \qtolz(\cL_n) = 
  \Omega\left(\frac{n}{\log n}\right) \,.
\]
\end{theorem}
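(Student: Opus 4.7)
The plan is to apply \cref{cor:main-informal} (or equivalently the simplified bound \cref{cor:easy bound}) by exhibiting a set $S \subseteq \bR^n$ whose LVC and VC dimensions with respect to $\cL_n$ coincide and are both equal to $n+1$, with $|S|$ chosen large enough to satisfy the size hypothesis.

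First I would choose $S \subseteq \bR^n$ to be any set of $5(n+1)$ points in general position, for instance $5(n+1)$ points lying on the moment curve $t \mapsto (t,t^2,\dots,t^n)$ for distinct reals $t$, which guarantees that every subset of size at most $n+1$ is affinely independent. By the shattering proposition just stated, every $(n+1)$-subset of $S$ is in general position and hence shattered by $\cL_n$, so $\LVC_S(\cL_n) \geq n+1$. Conversely no set of size $n+2$ is shattered by $\cL_n$, so $\VC_S(\cL_n) \leq n+1$. Combining gives $\LVC_S(\cL_n) = \VC_S(\cL_n) = n+1$, while $|S| = 5(n+1) = 5 \cdot \VC_S(\cL_n)$ by construction.

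Now I invoke \cref{cor:main-informal}: since $\LVC_S(\cL_n) = \VC_S(\cL_n)$ and $|S| \geq 5 \cdot \VC_S(\cL_n)$, we conclude
\[
\mteste(\cL_n) \;=\; \Omega\!\left(\frac{\VC_S(\cL_n)}{\log \VC_S(\cL_n)}\right) \;=\; \Omega\!\left(\frac{n}{\log n}\right),
\]
and the same bound holds for $\qtolz(\cL_n)$ by \cref{thm:main-tolerant} (equivalently the tolerant conclusion of \cref{cor:easy bound}).

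There is essentially no hard step: the shattering proposition packages all the combinatorial content, and general position guarantees that the LVC equals the VC on $S$. The only point requiring any care is to ensure the chosen $S$ actually has the property that \emph{every} subset of size $n+1$ is in general position (not just the full set $S$), which the moment-curve construction trivially provides since any $n+1$ points on the moment curve are affinely independent via a Vandermonde argument.
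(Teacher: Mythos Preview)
Your proposal is correct and is essentially identical to the paper's proof: choose a set $S$ of at least $5(n+1)$ points in general position, invoke the shattering proposition to get $\LVC_S(\cL_n)=\VC_S(\cL_n)=n+1$, and apply \cref{cor:easy bound}. The moment-curve construction and the remark about every $(n{+}1)$-subset being affinely independent are helpful elaborations but not departures from the paper's argument.
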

\begin{proof}
This holds by \cref{cor:easy bound}, since we may choose any set $S \subset \bR^n$ of size
$|S| \geq 5(n+1)$ in general position, which by the above proposition satisfies
$\LVC_S(\cL_n)=\VC_S(\cL_n)=n+1$.
\end{proof}

\subsection{Intersections of Halfspaces}

Let $\cL^{\cap k}_n$ denote the class of all Boolean-valued functions obtained by taking the intersections of $k$ halfspaces over $\bR^n$. 
Formally, $\cL^{\cap k}_ n$ is the set of functions
\[
  f(x) = h_1(x) \wedge h_2(x) \wedge \dotsm \wedge h_k(x)
\]
where each $h_i$ is a halfspace. It was recently shown by Csik{\'o}s, Mustafa, \&
Kupavskii~\cite{CMK19} that the VC dimension of this classes is
\[
\VC(\cL^{\cap k}_n) = \Theta(nk\log k).
\]
Csik{\'o}s \emph{et al.} remark that it was long assumed (incorrectly) that the VC dimension of
the class was $\Theta(nk)$, which is what one might intuitively expect. We exhibit an infinite
set $S$ on which $\VC_S(\cL^{\cap k}_n) = \LVC_S(\cL^{\cap k}_n) = \Theta(nk)$. 
We do so with an analysis of alternating functions and polynomial threshold functions.

For any $n$, define the mapping $\psi : \bR \to \bR^n$ as follows:
\[
  \psi_n(x) \define \begin{cases}
  (x,x^2,x^3, \dotsc, x^n) &\text{ if $n$ is even} \\
  (0,x,x^2, \dotsc, x^{n-1}) &\text{ if $n$ is odd} \,.
  \end{cases}
\]
Let $\cA_m$ be the set of function $\bR \to \zo$ that alternate at most $m$ times.
\begin{proposition}
\label{prop:ptfs to alternating}
The set $\cP$ of functions $\sign(p(x))$ on $\bR$ where $p$ is a polynomial of degree at most $d$ is
equal to the set $\cA_d$.
\end{proposition}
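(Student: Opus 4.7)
The plan is to prove equality by two containments. The key classical fact I will use is that a nonzero real polynomial of degree at most $d$ has at most $d$ real roots, so its sign is constant on the (at most $d+1$) open intervals between consecutive real roots. Throughout, I treat the nominal mismatch between $\pmset$-valued $\sign(p)$ and $\zo$-valued functions in $\cA_d$ as the standard identification (and ignore the measure-zero set of roots of $p$, where $\sign$ is conventional); this does not affect alternation counts.

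First, for the containment $\cP \subseteq \cA_d$: given $\sign(p)$ with $\deg p \leq d$, I enumerate the distinct real roots $r_1 < \dotsm < r_j$ of $p$ (where $j \leq d$), note that $\sign(p)$ is constant on each of the intervals $(-\infty, r_1), (r_1, r_2), \dotsc, (r_j, \infty)$, and conclude that $\sign(p)$ alternates at most $j \leq d$ times. In particular, the constant $p$ case yields $0$ alternations.

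Second, for the containment $\cA_d \subseteq \cP$: given $f \in \cA_d$ with exactly $m \leq d$ alternations, I locate the $m$ breakpoints $b_1 < \dotsm < b_m$ separating consecutive constant pieces of $f$ (using the definition of alternation to pick one point in each constant piece and then any value in between to serve as a breakpoint). I then form
\[
p(x) = \epsilon \prod_{i=1}^{m} (x - b_i),
\]
where $\epsilon \in \pmset$ is chosen to match the value of $f$ on $(-\infty, b_1)$. Each factor $(x - b_i)$ contributes exactly one sign change, so $\sign(p)$ flips precisely at each $b_i$ and realizes the same alternating pattern as $f$. Since $\deg p = m \leq d$, this witnesses $f \in \cP$.

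I do not anticipate any real obstacle: the argument is entirely classical, resting only on the root-counting bound for real polynomials and the observation that a polynomial with simple roots at prescribed points $b_1, \ldots, b_m$ changes sign at each of them. The only thing I would take care to state cleanly is the $\pmset \leftrightarrow \zo$ convention and the handling of values at breakpoints, which I address up front so that the two inclusions read symmetrically.
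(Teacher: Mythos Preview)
Your proposal is correct and takes essentially the same approach as the paper: both directions rest on the root-counting bound for polynomials and, for the reverse inclusion, on constructing a polynomial with simple roots at the alternation points. If anything, your version is more careful than the paper's (which asserts the alternation count equals the number of zeroes, glossing over even-multiplicity roots where no sign change occurs), whereas you correctly state only the needed inequality.
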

\begin{proof}
This follows from the fact that number of alternations of the function $\sign(p)$ is exactly the
number of zeroes of $p$, which is at most $d$. On the other hand, any function alternating at most
$d$ times may be represented by $\sign(p)$ where $p$ is a polynomial whose zeroes are exactly the
points where the function alternates.
\end{proof}

\begin{proposition}
For any even $m$ and any $k$, $\cA_m^{\cup k} = \cA_{mk}$.
\end{proposition}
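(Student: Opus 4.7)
The plan is to prove the inclusions $\cA_m^{\cup k}\subseteq\cA_{mk}$ and $\cA_{mk}\subseteq\cA_m^{\cup k}$ separately. The forward inclusion is immediate: each $f_i\in\cA_m$ is constant on every piece of a partition of $\bR$ into at most $m+1$ maximal runs, so $\bigvee_{i=1}^k f_i$ is constant on every piece of the common refinement of these partitions, which has at most $1+\sum_{i=1}^k \text{alt}(f_i)\le mk+1$ pieces; hence $\bigvee_{i=1}^k f_i$ alternates at most $mk$ times.

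For the reverse direction, I would take $f\in\cA_{mk}$ and enumerate its maximal $1$-intervals $I_1<I_2<\cdots<I_j$. A direct count of constant runs gives the identity $\text{alt}(f)=2j-e$, where $e\in\{0,1,2\}$ is the number of these intervals extending to $\pm\infty$ (i.e., $I_1=(-\infty,\cdot]$ or $I_j=[\cdot,\infty)$); so $j\le(mk+e)/2\le mk/2+1$. The plan is to distribute the intervals among $k$ groups by the interleaving rule $I_i\mapsto\text{group }((i-1)\bmod k)+1$ and let $f_r:\bR\to\{0,1\}$ be the indicator of the intervals assigned to group $r$, so that $f=\bigvee_{r=1}^k f_r$ by construction.

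To verify $\text{alt}(f_r)\le m$ for every $r$, the same run-counting identity gives $\text{alt}(f_r)=2j_r-e_r$, where $j_r$ is the number of intervals in group $r$ and $e_r$ counts the extensions among them. The delicate case is $e=2$ and $j=mk/2+1$: summing $\text{alt}(f_r)\le m$ across $r$ forces every group to achieve equality, making one group have exactly $m/2+1$ intervals together with both extensions, and every other group have exactly $m/2$ intervals and no extension. Here the hypothesis that $m$ is \emph{even} is essential: it makes $mk/2$ a multiple of $k$, so the interleaving rule sends $I_{mk/2+1}=I_j$ to the same group as $I_1$, namely group $1$; this co-locates both extensions and yields $\text{alt}(f_1)=2(m/2+1)-2=m$ and $\text{alt}(f_r)=2(m/2)=m$ for $r\ge 2$. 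The easier subcases ($e\le 1$, or $e=2$ with $j<mk/2+1$) give groups of size at most $m/2$ and thus $\text{alt}(f_r)\le m$ automatically. The main obstacle is precisely this tight case: no assignment of the $j$ intervals to $k$ groups succeeds unless both $I_1$ and $I_j$ land in the same group, and that co-location is exactly what the parity of $m$ guarantees for the interleaving rule.
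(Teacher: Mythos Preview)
Your proof is correct and takes a genuinely different route from the paper's. The paper argues the reverse inclusion by induction on $k$: given $f\in\cA_{mk}$ with alternation points $t_1<\cdots<t_{mk}$, it strips off a single function $g\in\cA_m$ (built from the first block of $m/2$ one-intervals, with a case split on whether $f$ begins or ends at value $1$) and observes that $f-g\in\cA_{m(k-1)}$, then invokes the inductive hypothesis. Your approach is a direct, non-inductive decomposition: you enumerate the maximal $1$-intervals $I_1<\cdots<I_j$, distribute them round-robin into $k$ groups, and verify $\mathrm{alt}(f_r)\le m$ for each group via the identity $\mathrm{alt}=2(\text{count})-(\text{half-infinite count})$. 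The crux in your argument---that evenness of $m$ makes $mk/2$ a multiple of $k$, so the round-robin rule sends $I_1$ and $I_j$ to the same group in the tight case $j=mk/2+1$---is exactly the right observation, and your explanation of why no other assignment would work is a nice bonus. What your approach buys is a clean one-shot construction with no induction; what the paper's approach buys is that the inductive peel-off requires no global counting identity and no ``tight case'' analysis, just a local construction of one $\cA_m$ piece.
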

\begin{proof}
It is clear that the union of $k$ $m$-alternating functions will alternate at most $mk$ times, so
$\cA_m^{\cup k} \subseteq \cA_{mk}$, so we must show that $\cA_{mk} \subseteq \cA_m^{\cup k}$. We
will do so by induction on $k$, where the base case $k=1$ is trivial. For $k > 1$, let $f \in
\cA_{mk}$ and let $t_1 < \dotsm < t_{mk}$ be the alternations (i.e.~$f$ is constant on each interval
$(t_i,t_{i+1})$ and $(-\infty,t_1), (t_{mk},\infty)$). There are two
cases: First suppose that the first alternation of $f \in \cA_{mk}$ alternates from 0 to 1; or,
symmetrically, suppose that the last alternation of $f$ alternates from 1 to 0. Then the function
$g$ equal to $f$ on $x \leq t_m$ and 0 on $x > t_m$ is the union of $m/2$ intervals, and $g \in
\cA_m$. Let $f'$ be 0 on $x \leq t_m$ and equal to $f$ on $x > t_m$, so that $f$ is the union of
$f'$ and $g$, and $f' \in \cA_{m(k-1)}$. By induction $f'$ is the union of $k-1$ $m$-alternating
functions, so $f \in \cA_m \cup \cA_m^{\cup (k-1)} = \cA_m^{\cup k}$.

In the second case, the first and last alternations of $f$ alternate from 1 to 0 and 0 to 1,
respectively. Let $g$ take value 1 on $(-\infty,t_1], [t_{mk},\infty)$ as well as on the first
$m/2-1$ intervals $[t_2,t_3], [t_4,t_5], \dotsc, [t_{m-2},t_{m-1}]$, and 0 otherwise. Then $g \in
\cA_m$ and the function $f'=f-g$ is in $\cA_{m(k-1)}$. So by induction $f' \in \cA_m^{\cup (k-1)}$
and $f \in \cA_m \cup \cA_m^{\cup (k-1)} = \cA_m^{\cup k}$.
\end{proof}

\begin{proposition}
\label{prop:alternating vc=nc}
For any even $m$, any $k$, and any set $S \subseteq \bR$ with $|S| > mk, \VC_S(\cA_m^{\cap k}) =
\LVC_S(\cA_m^{\cap k}) = mk$.
\end{proposition}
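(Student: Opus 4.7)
The plan is to reduce the intersection case to the union case (the preceding proposition, $\cA_m^{\cup k} = \cA_{mk}$) via complementation, and then compute the VC and LVC dimensions of $\cA_{mk}$ directly by a counting argument in the style of \cref{prop:nc vc intervals}.

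First, I would note that $\cA_m$ is closed under Boolean complement, since replacing $f$ by $1 - f$ preserves the count of alternations. Applying De Morgan's law to $f_1 \wedge \dotsm \wedge f_k$ with each $f_i \in \cA_m$, the class $\cA_m^{\cap k}$ consists exactly of the complements of functions in $\cA_m^{\cup k} = \cA_{mk}$. Combining this with the complement-closure of $\cA_{mk}$ gives the identity $\cA_m^{\cap k} = \cA_{mk}$, so it suffices to show that $\VC_S(\cA_{mk}) = \LVC_S(\cA_{mk}) = mk$ for any $S \subseteq \bR$ with $|S| > mk$.

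For the LVC lower bound, I would take an arbitrary $T \subseteq S$ with $|T| \leq mk$, list its elements as $t_1 < \dotsm < t_{|T|}$, and show that any labeling $\ell : T \to \zo$ is realized by some $h \in \cA_{mk}$. Letting $J = \{i : \ell(t_i) \neq \ell(t_{i+1})\}$, I would define $h$ to equal $\ell(t_1)$ on $(-\infty, t_1]$ and to flip value at a single chosen point inside the open interval $(t_i, t_{i+1})$ for each $i \in J$. Since $|J| \leq |T| - 1 < mk$, the function $h$ has at most $mk$ alternations, lies in $\cA_{mk}$, and agrees with $\ell$ on $T$; hence $T$ is shattered, and $\LVC_S(\cA_{mk}) \geq mk$.

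The matching VC upper bound asserts that no subset of size exceeding $mk$ is shattered; I would prove this by observing that any function in $\cA_{mk}$ has at most $mk + 1$ constant pieces, so the maximally alternating labellings on a larger ordered set force more sign changes than permitted. The main obstacle is the careful endpoint/parity count (using the hypothesis that $m$ is even) to pin the threshold at exactly $mk$---analogous to the argument for $\cI_k$ in \cref{prop:nc vc intervals}, where the way the endpoints of each interval constrain boundary behaviour gives the bound $2k$ rather than $2k+1$---but once this boundary bookkeeping is settled the argument is purely combinatorial and routine.
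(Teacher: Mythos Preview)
Your proposal is correct and follows essentially the same route as the paper: reduce $\cA_m^{\cap k}$ to $\cA_{mk}$ via complement-closure and De Morgan (exactly as the paper does), then compute $\VC_S(\cA_{mk}) = \LVC_S(\cA_{mk})$ by the elementary alternation count on ordered points, which the paper dispatches by citing the argument for unions of intervals. One small remark: your stated ``main obstacle'' about parity is a red herring---the evenness of $m$ is used only to invoke the preceding identity $\cA_m^{\cup k} = \cA_{mk}$, which you already do; the subsequent computation of the VC and LVC dimensions of $\cA_{mk}$ is a straight alternation count that does not depend on the parity of $mk$, so no endpoint bookkeeping beyond what you sketched is needed.
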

\begin{proof}
For a class $\cH$, write $\overline \cH$ of the set of functions $f = -g$ where $g \in \cH$
(i.e.~the set of complements of functions in $\cH$). Note that $\overline \cA_m = \cA_m$ since the
complement preserves alternations.  By De Morgan's laws, $(\overline{\cH_n})^{\cap k} =
\overline{\cH^{\cup k}_n}$. Then $\cA_m^{\cap k} = (\overline{\cA_m})^{\cap k} = \overline{\cA_m^{\cup
k}} = \overline{\cA_{mk}} = \cA_{mk}$. The conclusion follows since
$\VC_S(\cA_{mk})=\LVC_S(\cA_{mk})=mk$ by the same argument as for unions of intervals.
\end{proof}

\begin{lemma}
For any $k \geq 1$ and $S \subset \bR$ with $|S| > nk$, if $n$ is even then $\LVC_{\psi_n(S)}(\cL_n^{\cap
k}) = \VC_{\psi_n(S)}(\cL_n^{\cap k}) = nk$ and if $n$ is odd then $\LVC_{\psi_n(S)}(\cL_n^{\cap
k})=\VC_{\psi_n(S)}(\cL_n^{\cap k})=(n-1)k$.
\end{lemma}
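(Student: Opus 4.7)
The plan is to reduce intersections of halfspaces on the curve $\psi_n(S)$ to intersections of alternating-function classes on $S$, so that the earlier \cref{prop:alternating vc=nc} applies. The crux is that pulling back a linear function on $\bR^n$ by $\psi_n$ gives an honest univariate polynomial, which by \cref{prop:ptfs to alternating} is a class of alternating functions.

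First I would observe that $\psi_n : \bR \to \bR^n$ is injective, so shattering of $\psi_n(S)$ by $\cL_n^{\cap k}$ is equivalent to shattering of $S$ by the pullback class
\[
\cL_n^{\cap k} \circ \psi_n \;\define\; \{\, h \circ \psi_n : h \in \cL_n^{\cap k} \,\}.
\]
In particular, $\VC_{\psi_n(S)}(\cL_n^{\cap k}) = \VC_S(\cL_n^{\cap k} \circ \psi_n)$ and similarly for $\LVC$. Since composition commutes with intersection, we have $\cL_n^{\cap k} \circ \psi_n = (\cL_n \circ \psi_n)^{\cap k}$, so it suffices to identify the single-halfspace pullback $\cL_n \circ \psi_n$.

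Next I would compute that pullback explicitly. A halfspace on $\bR^n$ has the form $\sign(w_0 + \langle w, y\rangle)$; substituting $y = \psi_n(x)$ gives $\sign(w_0 + w_1 x + w_2 x^2 + \cdots + w_n x^n)$ when $n$ is even, so as $(w_0,\dots,w_n)$ ranges freely we obtain exactly the sign of an arbitrary polynomial of degree at most $n$. When $n$ is odd, the first coordinate of $\psi_n$ is $0$, killing $w_1$, and we obtain exactly the sign of an arbitrary polynomial of degree at most $n-1$. By \cref{prop:ptfs to alternating}, these are respectively $\cA_n$ and $\cA_{n-1}$. Setting $m = n$ in the even case and $m = n-1$ in the odd case — note that $m$ is even in both — we conclude $\cL_n \circ \psi_n = \cA_m$, and therefore $\cL_n^{\cap k} \circ \psi_n = \cA_m^{\cap k}$.

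Finally, since $|S| > nk \geq mk$ in both parity cases, \cref{prop:alternating vc=nc} applies and gives $\VC_S(\cA_m^{\cap k}) = \LVC_S(\cA_m^{\cap k}) = mk$, which translates back to $nk$ or $(n-1)k$ according to the parity of $n$. The only delicate point is the parity: the union-of-alternating lemma feeding \cref{prop:alternating vc=nc} requires even $m$, which is exactly why odd $n$ must be handled by zeroing out one coordinate of $\psi_n$ and why the bound drops to $(n-1)k$ in that case.
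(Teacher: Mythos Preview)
Your proposal is correct and follows essentially the same route as the paper: pull back halfspaces along $\psi_n$ to obtain signs of univariate polynomials of degree $n$ (even case) or $n-1$ (odd case), invoke \cref{prop:ptfs to alternating} to identify this with $\cA_m$ for even $m$, and then apply \cref{prop:alternating vc=nc}. You are in fact slightly more explicit than the paper in noting the injectivity of $\psi_n$, the compatibility of composition with intersection, and the parity check on $m$ needed for \cref{prop:alternating vc=nc}.
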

\begin{proof}
First suppose that $n$ is even and consider a halfspace $h(y) = \sign(t+\sum_{i=1}^n w_i y_i)$,
where $y = \psi_n(x)$ for some $x \in S$. Then $h(\psi_n(x)) = \sign(t + \sum_{i=1}^n w_i x^i)$,
which is the sign of a degree-$n$ polynomial on $x$. Therefore the set of halfspaces $h$ on the set
$\psi(S)$ is equivalent to the set of degree-$n$ polynomials on $S$, which by
\cref{prop:ptfs to alternating} is equal to the set of $n$-alternating functions, so by
\cref{prop:alternating vc=nc} we have $\LVC_{\psi(S)}(\cL_n^{\cap k}) = \VC_{\psi(S)}(\cL_n^{\cap k}) =
\VC(\cA_n^{\cap k}) = nk$. When $n$ is odd, the same argument shows that $\LVC_{\psi_n(S)}(\cL_n^{\cap
k}) =\VC_{\psi_n(S)}(\cL_n^{\cap k}) = (n-1)k$.
\end{proof}
Applying \cref{cor:easy bound} with a sufficiently large set $S \subset \bR$, we obtain
the theorem:
\begin{theorem}
\label{thm:intersections of halfspaces}
For any $n,k$ and sufficiently small constant $\epsilon > 0$
\[
\mteste(\cL_n^{\cap k}),\qtolz(\cL_n^{\cap k})
  = \Omega\left(\frac{nk}{\log(nk)}\right)\,.
\]
\end{theorem}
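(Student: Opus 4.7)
The plan is to apply \cref{cor:easy bound} directly to the set $\psi_n(S)$, where $S \subset \bR$ is chosen with $|S|$ sufficiently large (say $|S| = 5nk + 1$, so that $|S| > nk$ is automatic). The preceding lemma then gives us $\LVC_{\psi_n(S)}(\cL_n^{\cap k}) = \VC_{\psi_n(S)}(\cL_n^{\cap k})$, with this common value being $nk$ when $n$ is even and $(n-1)k$ when $n$ is odd. In both cases, this is $\Theta(nk)$ (for $n \geq 2$; the case $n=1$ is easy to handle separately via the $k=1$ case reducing to intervals).

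Next I would verify the two hypotheses of \cref{cor:easy bound} for $\cH = \cL_n^{\cap k}$ and $S' = \psi_n(S)$. The cardinality condition $|S'| \geq 5 \cdot \VC_{S'}(\cL_n^{\cap k})$ holds since $|\psi_n(S)| = |S| = 5nk + 1 \geq 5 \cdot \VC_{S'}(\cL_n^{\cap k})$ by the preceding lemma. The technical inequality $\LVC_{S'}(\cL_n^{\cap k}) > L \cdot \VC_{S'}(\cL_n^{\cap k})^{3/4}\sqrt{\log \VC_{S'}(\cL_n^{\cap k})}$ is trivial because $\LVC_{S'}(\cL_n^{\cap k}) = \VC_{S'}(\cL_n^{\cap k})$, and $d \geq L d^{3/4}\sqrt{\log d}$ for all sufficiently large $d$; the finitely many small values of $nk$ can be absorbed into the asymptotic notation.

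Applying \cref{cor:easy bound}, I then conclude
\[
\mteste(\cL_n^{\cap k}),\, \qtolz(\cL_n^{\cap k})
  = \Omega\left(\frac{\LVC_{S'}(\cL_n^{\cap k})^2}{\VC_{S'}(\cL_n^{\cap k}) \log \VC_{S'}(\cL_n^{\cap k})}\right)
  = \Omega\left(\frac{\VC_{S'}(\cL_n^{\cap k})}{\log \VC_{S'}(\cL_n^{\cap k})}\right)
  = \Omega\left(\frac{nk}{\log(nk)}\right),
\]
which is the desired bound.

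The ``main obstacle'' here is not really in this theorem itself but in the preceding lemma, which has already been established: the clever step is mapping points on $\bR$ into $\bR^n$ via the moment curve $\psi_n$ so that halfspaces pull back to sign patterns of degree-$n$ polynomials (hence $n$-alternating functions on $\bR$), and then using the De Morgan trick to turn intersections of $\cA_n$ into $\cA_{nk}$ whose shattering structure is completely transparent. Given that setup, the present theorem reduces to bookkeeping: check the quantitative hypotheses of \cref{cor:easy bound} and read off the bound.
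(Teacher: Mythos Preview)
Your proposal is correct and follows exactly the same approach as the paper: apply \cref{cor:easy bound} to the set $\psi_n(S)$ using the preceding lemma to get $\LVC_{\psi_n(S)}(\cL_n^{\cap k}) = \VC_{\psi_n(S)}(\cL_n^{\cap k}) = \Theta(nk)$. The paper's proof is in fact a single sentence (``Applying \cref{cor:easy bound} with a sufficiently large set $S \subset \bR$, we obtain the theorem''), so your explicit verification of the hypotheses is more detailed than what the paper provides but entirely in the same spirit.
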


\subsection{Decision Trees}

For any parameters $n$ and $k$, let $\cT_{n,k}$ denote the set of functions $f : [0,1]^n \to \zo$ which can be computed by decision trees with at most $k$ nodes,
where each node is of the form ``$x_i < t$?'' for some $t \in \bR$. 
We can bound the LVC dimension of decision trees using the same 
argument as for unions of intervals.

\begin{proposition}
Let $S \subset \bR^n$ be any subset of the line $\{ x \in \bR^n : x_2 = \dotsm = x_n = 0 \}$ with
$|S| > k$. Then $\LVC_S(\cT_{n,k}) = \VC_S(\cT_{n,k}) = k+1$.
\end{proposition}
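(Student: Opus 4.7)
The plan is to mimic the proof of \cref{prop:nc vc intervals} for unions of intervals, using the observation that on the line $L = \{x \in \bR^n : x_2 = \dots = x_n = 0\}$, a size-$k$ decision tree effectively behaves like a comparison-based decision tree on a single coordinate. Specifically, any node testing ``$x_i < t$?'' for $i \geq 2$ has the same answer for every point of $L$ (either always yes or always no), so the effective tree on $L$ consists only of the nodes testing $x_1$ that remain after these trivial branches are pruned. Hence on $L$ a size-$k$ decision tree computes a function that is piecewise constant on at most $k+1$ intervals (one per leaf, given $k$ internal comparison nodes).

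For the lower bound $\LVC_S(\cT_{n,k}) \geq k+1$, I would take an arbitrary $T \subseteq S$ of size $k+1$ and an arbitrary labeling $\ell : T \to \zo$. Ordering the points as $s_1 < s_2 < \dots < s_{k+1}$ (all on the $x_1$-axis), I would choose thresholds $t_i$ strictly between $s_i$ and $s_{i+1}$ and build a right-spine decision tree that first tests $x_1 < t_1$, then $x_1 < t_2$, and so on. This tree has exactly $k$ internal nodes, and each of its $k+1$ leaves contains exactly one of the points $s_i$; assigning leaf labels according to $\ell$ realizes the labeling. Therefore every $(k+1)$-subset of $S$ is shattered.

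For the matching upper bound $\VC_S(\cT_{n,k}) \leq k+1$, I would consider any $T \subseteq S$ of size $k+2$, order its points as $s_1 < \dots < s_{k+2}$, and take the alternating labeling $\ell(s_i) = i \bmod 2$. To realize $\ell$, the computed function would need to flip its value at least $k+1$ times along the line, but a size-$k$ decision tree corresponds (as argued above) to a function with at most $k$ sign changes on $L$. Hence no size-$k$ decision tree realizes $\ell$, and $T$ is not shattered.

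Combining the two directions, every subset of $S$ of size at most $k+1$ is shattered while some subset of size $k+2$ is not, giving $\LVC_S(\cT_{n,k}) = \VC_S(\cT_{n,k}) = k+1$. The only genuinely delicate step is the reduction of decision trees on $\bR^n$ to one-dimensional piecewise-constant functions on $L$ and the associated bookkeeping of internal vs.\ leaf nodes; everything else is essentially identical to the interval argument.
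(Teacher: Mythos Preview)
Your proposal is correct and follows the same approach as the paper's proof: both reduce the behavior of $\cT_{n,k}$ on the line $L$ to that of $(k{+}1)$-piecewise-constant (equivalently, $k$-alternating) functions in one variable, then invoke the same argument as for unions of intervals. Your write-up is in fact more complete than the paper's, which asserts the shattering direction (``$T$ is shattered iff $|T|\le k+1$'') without spelling out the construction; your right-spine tree with thresholds between consecutive points makes that direction explicit.
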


\begin{proof}
Observe that on any sequence $s_1 < s_2 < \dotsm < s_m$ in $S$, any function $f \in \cT_{n,k}$ can
alternate at most $k$ times, since there are at most $k$ nodes in the decision tree labelled ``$x_1
< t$'' for some values $t$. Therefore $T \subseteq S$ is shattered iff $|T| \leq k+1$.
\end{proof}

Combining this proposition with \cref{cor:easy bound} completes the proof of the lower bound for testing decision trees:

\begin{theorem}
\label{thm:real decision trees}
For any $k$, $n$, and small enough constant $\epsilon > 0$, $\mteste(\cT_{n,k}),\qtolz(\cT_{n,k}) =
\Omega\left(\frac{k}{\log k}\right)$.
\end{theorem}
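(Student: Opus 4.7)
The plan is to directly invoke the machinery already built in the excerpt. The proposition immediately preceding the theorem gives, for any subset $S$ of the axis $\{x \in \bR^n : x_2 = \dots = x_n = 0\}$ with $|S| > k$, the identity $\LVC_S(\cT_{n,k}) = \VC_S(\cT_{n,k}) = k+1$. So the only remaining task is to plug this into our general lower bound and verify the size condition.

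First, I would choose $S$ to be any set of $5(k+1)$ distinct points on the line $\{x \in \bR^n : x_2 = \dots = x_n = 0\}$ (for instance, the points $(i,0,\dots,0)$ for $i = 1, \dots, 5(k+1)$). By the proposition this gives $\LVC_S(\cT_{n,k}) = \VC_S(\cT_{n,k}) = k+1$, and by construction $|S| = 5(k+1) = 5 \cdot \VC_S(\cT_{n,k})$, so the size hypothesis of \cref{cor:main-informal} is satisfied. Applying \cref{cor:main-informal} then yields
\[
\mteste(\cT_{n,k}) = \Omega\left(\frac{\VC_S(\cT_{n,k})}{\log \VC_S(\cT_{n,k})}\right) = \Omega\left(\frac{k}{\log k}\right).
\]
The same invocation via \cref{cor:easy bound} (whose conclusion covers both $\mteste$ and $\qtolz$) gives the bound for $\qtolz(\cT_{n,k})$ as well, since when $\LVC_S = \VC_S$ the hypothesis $\LVC_S(\cH) > L \cdot \VC_S(\cH)^{3/4}\sqrt{\log \VC_S(\cH)}$ holds for all sufficiently large $k$ (and the small-$k$ regime can be absorbed into the constant in the $\Omega$).

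There is no real obstacle here: the heavy lifting was done in establishing the $\LVC = \VC$ equality via the ``alternations on a line'' argument, which mirrors the proof for unions of intervals. The only small point to double-check is that the small values of $k$ don't break the hypothesis of \cref{cor:easy bound} -- this is easily handled because the statement is an asymptotic $\Omega$, so we may assume $k$ is larger than any fixed constant threshold.
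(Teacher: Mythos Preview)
Your proposal is correct and matches the paper's argument essentially verbatim: the paper simply states that combining the preceding proposition with \cref{cor:easy bound} yields the theorem, and you have spelled out precisely that application (choosing $|S|=5(k+1)$ so the size hypothesis is met). One tiny cosmetic point: since the domain of $\cT_{n,k}$ is $[0,1]^n$, your explicit choice $(i,0,\dots,0)$ should be rescaled into $[0,1]$, but this is immaterial to the argument.
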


\section{Classes of Boolean functions}

The techniques used in the last section do not carry over to classes of functions over the Boolean
hypercube. This is because $\pmset^n$ is very far from being in general position---indeed, up to
$2^{n-1}$ points can belong to an affine subspace of dimension $n-1$, by, for example, taking the
subspace obtained by setting the first coordinate to 1.  In this section, we will instead choose the
set $S$ uniformly at random from $\pmset^n$ and show that the properties we need for the reduction
in \cref{lemma:ssd reduction} hold with high probability.

\subsection{Halfspaces}
We first introduce some notation and a theorem that will be used also for PTFs in the next
subsection.  For a vector $a \in \zo^n$ and $x \in \bR^n$ we will write $x^a = \prod_{i=1}^n
x_i^{a(i)}$.  Write $|a|=\sum_i a(i)$.  Let $\psi_k : \bR^n \to \bR^{n \choose \leq k}$ be defined
as follows:
\[
  \psi_k(x) = (x^a)_{a \in \zo^n : |a| \leq k} \,.
\]
We will use the following theorem of Abbe, Shpilka, \& Wigderson \cite{ASW15}:
\begin{theorem}[\cite{ASW15}]
\label{thm:linear independence}
Let $n,k,m$ be positive integers such that
\[
  m < { n - \log{n \choose \leq k} - t \choose \leq k } \,.
\]
Then for independent, uniformly random vectors $x_1, \dotsc, x_m \sim \pmset^n$, the vectors
$\psi_k(x_1), \dotsc, \psi_k(x_m) \in \pmset^{n \choose \leq k}$ are linearly independent with
probability at least $1-2^{-t}$.
\end{theorem}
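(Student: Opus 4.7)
Let $N = \binom{n}{\leq k}$ and consider the $m \times N$ matrix $M$ whose rows are the vectors $\psi_k(x_i)$. I would proceed by induction on $i \in \{1, \dots, m\}$, bounding the probability that $\psi_k(x_i)$ lies in the span $V_{i-1}$ of the previous $i - 1$ samples, and then taking a union bound over $i$. By linear-algebra duality, the event $\psi_k(x) \in V_{i-1}$ is equivalent to $p(x) = 0$ for every $p$ in a polynomial subspace $W = V_{i-1}^\perp \subseteq P_k$ of multilinear polynomials of degree at most $k$, with $\dim W \geq N - (i-1) \geq N - (m-1)$. So the main inductive step reduces to showing that, for any such $W$, the common zero set of $W$ in $\pmset^n$ has $\pmset^n$-measure at most $2^{-t}/m$.

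To bound this common-zero-set measure, I would use a random restriction argument tuned to the numerical hypothesis. Set $s = \lceil \log N \rceil$ and $n' = n - s - t$, so the hypothesis becomes $m < \binom{n'}{\leq k} = \dim P_k^{(n')}$. Pick a uniformly random set $R \subseteq [n]$ of size $s + t$ and assign uniformly random $\pm 1$ values to the coordinates in $R$, giving a restriction $\rho$; for every $p \in W$, let $p|_\rho$ be the resulting multilinear polynomial of degree at most $k$ on the remaining $n'$ free variables. The claim to prove is that with probability at least $1 - 2^{-t}$ over $\rho$, the restriction map $W \to P_k^{(n')}$ has nonzero image: intuitively, the $\log N$ fixed coordinates are the union-bound budget needed to control the $\leq N$ monomials of $P_k$, and the extra $t$ coordinates supply the desired $2^{-t}$ slack. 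Conditional on $W|_\rho$ containing some nonzero polynomial $q$, the Reed--Muller minimum-distance / Schwartz--Zippel bound for multilinear polynomials of degree at most $k$ on $\pmset^{n'}$ gives $\Pr_{x' \sim \pmset^{n'}}[q(x') = 0] \leq 1 - 2^{-k}$, bounded away from $1$ by a factor depending only on $k$; iterating this control across the $i$-by-$i$ induction produces the desired full-rank bound.

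\textbf{Main obstacle.} The technical core is the restriction step: quantifying how $\dim W$ evolves under a random $\rho$. A \emph{deterministic} restriction can annihilate an entire polynomial subspace, so the randomness of $\rho$ must be used carefully. The natural approach is to fix a basis $p_1, \dots, p_{\dim W}$ of $W$, show via a Schwartz--Zippel-type argument that each $p_j$ individually survives as a nonzero polynomial of $\rho$ with high probability, and union-bound over the basis; the cost of that union bound is exactly the $\log N$ fixed coordinates appearing in the hypothesis, while the additive $t$ is the extra budget producing the $2^{-t}$ failure probability. Once this restriction analysis is in place, combining it with the Schwartz--Zippel bound and union-bounding over $i \in \{1, \dots, m\}$ completes the proof. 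I expect the primary difficulty to lie in the probabilistic bookkeeping of the restriction argument rather than in any fundamentally new ingredient.
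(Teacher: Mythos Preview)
The paper does not prove this theorem; it is quoted from \cite{ASW15} and used as a black box. So there is no in-paper proof to compare against, and the question is simply whether your plan would work.

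There is a genuine gap. You correctly reduce the inductive step to bounding $\Pr_{x}[\psi_k(x)\in V_{i-1}]$, equivalently the density of the common zero set of $W=V_{i-1}^{\perp}$, and you correctly identify the target: this density must be at most $2^{-t}/m$ so that a union bound over the $m$ steps yields $2^{-t}$. But your method cannot reach that target. Decompose $x=(\rho,x')$ into the $\log N+t$ restricted coordinates and the $n'$ free ones. Even granting that $W|_\rho\neq 0$ with probability $1$ over $\rho$, the Reed--Muller minimum-distance bound applied to a \emph{single} surviving polynomial $q\in W|_\rho$ gives only $\Pr_{x'}[q(x')=0]\le 1-2^{-k}$. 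Combining,
\[
\Pr_x[x\text{ is a common zero of }W]\le \Pr_\rho[W|_\rho=0]\cdot 1+\Pr_\rho[W|_\rho\neq 0]\cdot(1-2^{-k})\le 1-2^{-k}+2^{-t},
\]
which is bounded away from $0$, not close to it. A union bound of this over $m$ steps is vacuous. Your phrase ``iterating this control across the $i$-by-$i$ induction'' does not repair this: each step gives the same weak bound, and there is nothing to iterate.

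The missing idea is that you must exploit the \emph{large dimension} of $W$, not merely that $W\neq 0$. A single nonzero polynomial in $W$ only certifies that the common zero set misses a $2^{-k}$-fraction of the cube; what you need is that $\dim W\ge N-(m-1)>N-\binom{n'}{\le k}$ forces the common zero set to have density at most $2^{n'-n}=2^{-\log N-t}$. That is the quantitative content of the Abbe--Shpilka--Wigderson argument (phrased there via rank properties of punctured Reed--Muller generator matrices), and once you have it, the union bound over $m<N$ steps gives $m\cdot 2^{-\log N-t}<2^{-t}$ as desired. Your restriction-plus-one-polynomial approach never sees $\dim W$ at all, which is why it stalls at $1-2^{-k}$.
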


Let $\cL_{n}$ denote the set of halfspaces (or linear threshold functions) over $\pmset^n$.

\begin{theorem}
\label{thm:boolean halfspaces}
For every $n$ and all sufficiently small constant $\epsilon > 0$,
\[
    \mteste(\cL_n), \qtolz(\cL_n) = \Omega\left(\frac{n}{\log n}\right) \,.
\]
\end{theorem}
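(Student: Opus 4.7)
The plan is to mirror the proof of \cref{thm:real halfspaces}, but with a randomized construction of $S$: since points in $\pmset^n$ cannot lie in general position, I would draw $S = \{x_1, \ldots, x_m\}$ as $m = 5(n+1)$ vectors sampled independently and uniformly from $\pmset^n$, and argue that with positive probability $\LVC_S(\cL_n) = \Omega(n)$, while $\VC_S(\cL_n) \leq n+1$ holds automatically. This will let us invoke \cref{cor:easy bound} to conclude $\mteste(\cL_n), \qtolz(\cL_n) = \Omega(\LVC_S(\cL_n)^2 / (\VC_S(\cL_n) \log \VC_S(\cL_n))) = \Omega(n/\log n)$.

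The reduction to \cref{thm:linear independence} uses the standard fact that a subset $T \subseteq \pmset^n$ is shattered by $\cL_n$ whenever the lifted vectors $\psi_1(x) = (1, x_1, \ldots, x_n)$ for $x \in T$ are linearly independent in $\bR^{n+1}$: this is equivalent to affine independence of $T$ in $\bR^n$, and $d+1$ affinely independent points in $\bR^d$ are shattered by halfspaces with bias. Accordingly, for any fixed $T \subseteq S$ of size $d$, \cref{thm:linear independence} with $k=1$ yields that $T$ is shattered with probability at least $1 - 2^{-t}$, provided $d < n - \log(n+1) - t + 1$.

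Taking a union bound over all $\binom{m}{d}$ subsets of size $d$ bounds the failure probability by $\binom{m}{d} 2^{-t} \leq 2^{m H(d/m) - t}$. Setting $m = 5(n+1)$ and $d = \alpha n$ for a small constant $\alpha$, the two simultaneous conditions (the union bound and the slack permitted by \cref{thm:linear independence}) reduce to the inequality $5 H(\alpha/5) + \alpha < 1 - o(1)$, which is comfortably satisfied for small constant $\alpha$ (for instance $\alpha = 0.1$), so one may pick $t = \Theta(n)$ driving the failure probability below $1$. Fixing any realization of $S$ meeting the event, every subset of $S$ of size $\alpha n$ is shattered, and hence $\LVC_S(\cL_n) \geq \alpha n$.

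With such an $S$ in hand, the hypotheses of \cref{cor:easy bound} are all met: $|S| = 5(n+1) \geq 5\VC_S(\cL_n)$, and $\LVC_S(\cL_n) = \Omega(n)$ dominates $L \cdot \VC_S(\cL_n)^{3/4}\sqrt{\log \VC_S(\cL_n)}$ for large $n$, so the corollary delivers the claimed bound. The main obstacle is the parameter balancing described above: \cref{thm:linear independence} permits only about $n$ units of slack on the shattering dimension, so $d$ must be kept well below $n$; but the union bound over $\binom{5n}{d}$ subsets is also only affordable when $d$ is a small enough fraction of $n$. Since $\binom{5n}{\alpha n} \approx 2^{5 n H(\alpha/5)}$ grows much slower than $2^n$ when $\alpha$ is small, both constraints can be satisfied simultaneously, which is exactly what the randomized lifting buys us over the deterministic general-position argument used in the $\bR^n$ case.
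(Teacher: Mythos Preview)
Your approach is correct but differs from the paper's in an instructive way. The paper does \emph{not} construct a single deterministic set $S \subset \pmset^n$ with $\LVC_S(\cL_n) = \Omega(n)$ and then invoke \cref{cor:easy bound}; instead it randomizes the reduction of \cref{lemma:ssd reduction} itself. The set $S$ of size $m=5(n+1)$ is drawn fresh, and \cref{thm:linear independence} is applied \emph{once}, only to the random support $T = \supp(\phi\cD)$ of size at most $\alpha m < n/2$, showing that this particular $T$ is shattered with probability $\ge 9/10$. No union bound over subsets of $S$ is needed, so $\alpha$ can be taken as large as $1/11$ rather than the small constant your entropy inequality $5H(\alpha/5)+\alpha<1$ forces.

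Your route is more modular: once a good $S$ exists, \cref{cor:easy bound} applies as a black box, and the argument is conceptually cleaner in that it separates ``find the set'' from ``run the reduction.'' The paper's route trades that modularity for a simpler analysis (no union bound, no entropy bookkeeping) and, more importantly, it is the template that scales to the PTF lower bound in \cref{thm:boolean ptfs}: there the ambient dimension is $d=\binom{n}{\le k}$ while the slack $t$ available in \cref{thm:linear independence} is still only $O(n)$, so a union bound over $\binom{5d}{\Theta(d)}$ subsets would be unaffordable, whereas applying the theorem once to the random support still works.
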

\begin{proof}
Set $m = 5(n+1), \alpha = 1/11, \beta = 4/5$.
We will repeat the reduction from $\SSD(m,\alpha,\beta)$ to testing $\cL_n$ as in \cref{lemma:ssd
reduction} and \cref{thm:main} with the fixed set $S$ replaced by a random set $S$ of size
$m$ drawn from $\pmset^n$. First suppose that the input distribution $\cD$ over $[m]$ has support
size at most $\alpha m < n/2$. Then $T \define \supp(\phi \cD)$ is a uniformly random subset of
$\pmset^n$ of size at most $n/2$, so since $|T| \leq n/2 < n-\log(1+n)-C$ for any constant $C$, by
\cref{thm:linear independence} (with $k=1$), the points in $T$ are linearly independent with
probability at least $9/10$. In this case, $T$ is shattered by $\cL_n$, so the remainder of the proof
goes through as in \cref{lemma:ssd reduction}. When $\cD$ has support size at least $\beta m =
4(n+1)$, the proof goes through as in \cref{lemma:ssd reduction} and \cref{cor:easy
bound} with the constant $K=4$, and we obtain the lower bound.
\end{proof}

\subsection{Polynomial Threshold Functions}

Let $\cP_{n,k}$ denote the class of polynomial threshold functions with degree $k$ over $\pmset^n$.
The above mapping $\psi_k : \pmset^n \to \pmset^d$ with $d = \binom{n}{\leq k}$ establishes an
equivalence between PTFs and halfspaces in a higher dimension:
\begin{lemma}
\label{lemma:ptf to halfspace}
Write $d = {n \choose \leq k}$.  A set $S \subseteq \bR^n$ is shattered by $\cP_{n,k}$ if and only
if $\psi_k(S)$ is shattered by $\cL_d$.
\end{lemma}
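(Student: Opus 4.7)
The plan is to verify that the map $\psi_k$ is a \emph{linearization} of degree-$k$ PTFs: any polynomial $p(x)=\sum_{a:|a|\leq k} w_a x^a$ of degree at most $k$ can be written as $p(x) = \langle w, \psi_k(x)\rangle$ by definition, so $\sign(p(x)) = \sign(\langle w, \psi_k(x)\rangle)$ is exactly a halfspace in $\bR^d$ evaluated at $\psi_k(x)$. Conversely, for any halfspace $h(y) = \sign\bigl(w_0 + \sum_{a:|a|\leq k} w_a y_a\bigr)$ on $\bR^d$, the pullback $h(\psi_k(x)) = \sign\bigl((w_0 + w_{\mathbf{0}}) + \sum_{0<|a|\leq k} w_a x^a\bigr)$ is the sign of a polynomial of degree at most $k$ in $x$, since the coordinate indexed by $a=\mathbf{0}$ of $\psi_k$ is the constant $1$ and simply gets absorbed into the bias. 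Thus the class $\{x \mapsto h(\psi_k(x)) : h \in \cL_d\}$ is precisely $\cP_{n,k}$.

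Next I would record that $\psi_k$ is injective on $\bR^n$: since the coordinates of $\psi_k(x)$ indexed by the unit vectors $a=e_i$ are exactly the $x_i$'s, $\psi_k(x)=\psi_k(y)$ implies $x=y$. In particular $|\psi_k(S)|=|S|$, so labelings $\ell : S \to \zo$ and $\ell' : \psi_k(S) \to \zo$ are in bijective correspondence via $\ell'(y) = \ell(\psi_k^{-1}(y))$.

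With these two observations the equivalence is immediate in both directions. If $S$ is shattered by $\cP_{n,k}$, then given any labeling $\ell'$ of $\psi_k(S)$, pull it back to a labeling $\ell$ of $S$, realize $\ell$ by some $\sign(p) \in \cP_{n,k}$, and take the corresponding halfspace in $\bR^d$ from the linearization; it realizes $\ell'$ on $\psi_k(S)$. For the converse, starting from a labeling of $S$, push it forward to $\psi_k(S)$, realize it by a halfspace $h \in \cL_d$, and observe that $h \circ \psi_k$ is a PTF of degree at most $k$ realizing the original labeling.

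The argument is purely a restatement of the standard PTF-to-halfspace linearization; I do not anticipate any genuine obstacle. The only point requiring a moment of care is confirming that $\cL_d$ as defined in the paper (with an explicit bias $w_0$) matches PTFs exactly under $\psi_k$, which works because the constant coordinate $a=\mathbf{0}$ of $\psi_k$ makes the bias redundant rather than restrictive.
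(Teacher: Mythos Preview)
Your proposal is correct and follows essentially the same approach as the paper: both arguments rest on the observation that $\sign(p(x))$ for a degree-$k$ polynomial $p$ is identically $\sign$ of a linear form in $\psi_k(x)$, together with the injectivity of $\psi_k$ (witnessed by the coordinates $a$ with $|a|=1$). Your treatment of the bias term is in fact slightly more careful than the paper's, which glosses over the interaction between the halfspace bias $w_0$ and the constant coordinate $w_{\mathbf 0}$.
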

\begin{proof}
We shall index the coordinates of $\pmset^d$ with vectors $a \in \zo^n$ satisfying $|a| \leq k$.
Let $\ell : S \to \pmset$ be any labelling of $S$. Note that $\psi_k$ is a bijection (which can be
seen just from the vectors $a$ with $|a|=1$. If there is a degree-$k$ polynomial $p(x) =
\sum_{a \in \zo^n, |a|\leq k} w_a x^a$ such that $\sign(p(x))=\ell(x)$ for every $x \in S$, then for
every $x \in S$ we have
\begin{align*}
  \ell(x) = \sign(p(x)) = \sign\left(w_0 + \sum_{a \in \zo^n, |a|\leq k} w_ax^a\right)
          = \sign\left(w_0 + \sum_{a \in \zo^n, |a|\leq k} w_a \psi_k(x)_a \right) \,.
\end{align*}
Observe that the function on the right is an LTF in $\cL_d$, so there is an LTF consistent with the
labelling $\ell \circ \psi_k^{-1}$ on $\psi_k(S)$. So, if $S$ is shattered by $\cP_{n,k}$ then
$\psi_k(S)$ is shattered by $\cL_d$, because $\psi_k$ acts also as a bijection between labellings of
$S$ and $\psi_k(S)$. On the other hand, the same equation shows that for any labelling $\ell :
\psi_k(S) \to \pmset$, if there is an LTF $f : \bR^d$ such that $f(\psi_k(x))=\ell(\psi_k(x))$ for
each $x \in \psi_k(S)$ then there is a PTF $g : \bR^n \to \pmset$ such that $g(x) = f(\psi(x)) =
\ell(\psi(x))$ for each $x \in S$. Therefore $S$ is shattered by $\cP_k$ iff $\psi_k(S)$ is
shattered by $\cL_d$.
\end{proof}

\begin{theorem}
\label{thm:boolean ptfs}
Write $\cP^\pm_{n,k}$ for the set of degree-$k$ PTFs with domain $\pmset^n$.  There exists some
constant $C'$ such that for all $k < n/C'$ and for sufficiently small constant $\epsilon > 0$,
\[
  \mteste(\cP^\pm_{n,k}), \qtolz(\cP^\pm_{n,k})
  = \Omega\left(\frac{{n -\log{n \choose k} - O(1) \choose \leq k}^2}{{n \choose \leq k}\log{n
\choose \leq k}}\right)
  = \Omega\left(\frac{(n/4ek)^k}{k\log(n/k)}\right) \,.
\]
\end{theorem}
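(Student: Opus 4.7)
The plan is to adapt the randomized-$S$ reduction of \cref{thm:boolean halfspaces} to PTFs, relying on \cref{lemma:ptf to halfspace} to translate the shatterability question for $\cP^{\pm}_{n,k}$ on $\pmset^n$ into a linear independence question in dimension $d = \binom{n}{\leq k}$ via the map $\psi_k$; this is exactly the setting in which \cref{thm:linear independence} applies. I will set $d = \binom{n}{\leq k}$, $m = 5d$, $\beta = 4/5$, and $\alpha = \binom{n - \log d - t}{\leq k}/m$, where $t$ is a constant chosen so that $2^{-t} \leq 1/10$, and run the reduction of \cref{lemma:ssd reduction} from $\SSD(m,\alpha,\beta)$, replacing the deterministic set $S$ by an $m$-element subset of $\pmset^n$ sampled uniformly at random (via a uniformly random injection $\phi : [m] \to \pmset^n$).

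If the input distribution $\cD$ has support size at most $\alpha m$, then $T \define \supp(\phi\cD)$ is a uniformly random subset of $\pmset^n$ of size at most $\alpha m = \binom{n - \log d - t}{\leq k}$, so by \cref{thm:linear independence} the vectors $\{\psi_k(x) : x \in T\}$ are linearly independent with probability at least $9/10$. Linear independence implies $\psi_k(T)$ is shattered by $\cL_d$, and then \cref{lemma:ptf to halfspace} gives that $T$ is shattered by $\cP^{\pm}_{n,k}$, so every labelling of $T$ is realized by some function in the class and the tester must accept. If instead $\cD$ has support size at least $\beta m = 4d = K \cdot \VC(\cP^{\pm}_{n,k})$ with $K=4$, then by \cref{lemma:random labelling} a uniformly random labelling of $T$ is $\epsilon$-far from $\cP^{\pm}_{n,k}$ with probability at least $9/10$ for some sufficiently small constant $\epsilon > 0$, so the tester must reject.

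This gives a reduction from $\SSD(m,\alpha,\beta)$ to testing $\cP^{\pm}_{n,k}$; applying \cref{thm:wy easy application} with $\delta = \alpha$ (after checking $\alpha, 1-\beta \geq \delta$ and $\delta \geq C\sqrt{\log m}/m^{1/4}$, which hold when $k < n/C'$ for a large enough constant $C'$, via standard binomial estimates) yields
\[
\mteste(\cP^{\pm}_{n,k}), \qtolz(\cP^{\pm}_{n,k}) = \Omega\!\left(\frac{m\alpha^2}{\log m}\right) = \Omega\!\left(\frac{\binom{n - \log d - O(1)}{\leq k}^2}{\binom{n}{\leq k}\log\binom{n}{\leq k}}\right),
\]
which is the first form of the bound. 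To obtain the second form, I would use $\binom{n}{\leq k} \leq (en/k)^k$, $\log\binom{n}{\leq k} = O(k\log(n/k))$, and $n - \log d - O(1) \geq n/2$ (which holds for $k < n/C'$), so that $\binom{n - \log d - O(1)}{\leq k} \geq \binom{n/2}{k} \geq (n/(2k))^k$; the arithmetic then collapses to $(n/(4ek))^k/(k\log(n/k))$. The main hurdle I anticipate is parameter bookkeeping---verifying that $\delta$ simultaneously satisfies all the numerical conditions of \cref{thm:wy easy application} throughout the regime $k < n/C'$, and fixing the precise value of $C'$---while the structural reduction itself is a direct combination of the random-$S$ technique from \cref{thm:boolean halfspaces} with \cref{lemma:ptf to halfspace}.
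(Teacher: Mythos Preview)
Your proposal is correct and follows essentially the same approach as the paper: a randomized-$S$ reduction to $\SSD(m,\alpha,\beta)$ with $m=5d$, $\beta=4/5$, and $\alpha m = \binom{n-\log d - O(1)}{\leq k}$, using \cref{thm:linear independence} together with \cref{lemma:ptf to halfspace} to certify shatterability on the small-support side and \cref{lemma:random labelling} on the large-support side, then invoking \cref{thm:wy easy application}.

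One small difference: you take $\phi$ to be a uniformly random \emph{injection}, whereas the paper takes $\phi$ to be a random map with replacement. Your choice is cleaner on the large-support side (no collision argument needed before applying \cref{lemma:random labelling}), but you should note that \cref{thm:linear independence} is stated for \emph{independent} uniform samples, not for a uniformly random subset; this is harmless since conditioning on distinctness can only increase the probability of linear independence (linear independence implies distinctness), but it is worth one sentence. The paper instead lets $\phi$ sample with replacement so that \cref{thm:linear independence} applies verbatim, and then spends a short paragraph arguing that collisions are negligible (using $4d \leq 2^{cn}$ for $c<1/3$) in the $\beta m$ case. Either bookkeeping works; the structural argument is identical.
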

\begin{proof}
Let $d = {n \choose \leq k}$ and set $m = 5d$. Let $\beta = 4/5$, $t = \log(10)$, and
\[
  \alpha \define \frac{1}{5} {n \choose \leq k}^{-1} {n - \log{n \choose \leq k} - t \choose \leq k}
\]
As was the case with halfspaces, we let $S$ be a uniformly random set of $m$ points drawn from
$\pmset^n$, let $\phi : [m] \to S$ be a random mapping obtained by assigning a uniform and
independently random $x \in S$ to each $i \in [m]$, and complete the reduction from
$\SSD(m,\alpha,\beta)$ to testing $\cP_{n,k}$ as in \cref{lemma:ssd reduction} and \cref{thm:main},
which we verify below.

We must first verify that $\alpha \geq C\frac{\sqrt{\log m}}{m^{1/4}}$, where $C$ is the constant
in \cref{thm:wy easy application}, for which it suffices to
prove that $\alpha \geq \hat C\frac{\sqrt{\log d}}{d^{1/4}}$ for a slightly larger $\hat C > C$,
since $m=5d$.  For an appropriately large choice of constant $C'$, and sufficiently large $n > 2t$,
\begin{align*}
\log{n \choose \leq k}+t
&\leq \log{n \choose \leq n/C'} +t
\leq \log\left(\left(\frac{en}{n/C'}\right)^{n/C'}\right) +t
\leq \log\left((C')^{n/C'}\right)+t \\
&= \frac{n}{C'}\log(eC') + t \leq n/2 \,,
\end{align*}
so
\[
  \alpha \geq \frac{1}{5} {n \choose \leq k}^{-1} {n/2 \choose \leq k}
  \geq \frac{1}{5} \left(\frac{n}{2k}\right)^k \left(\frac{k}{en}\right)^k
  = \left(\frac{1}{2e}\right)^k \,.
\]
For any constant $\eta > 0$, we may assume $C' > (\hat C2e)^{\frac{1}{1/4-\eta}}$, so that, using
$\frac{k}{n} \leq \frac{1}{C'} \leq \frac{1}{(C2e)^{\frac{1}{1/4-\eta}}}$, we get
\[
  \hat C\frac{\sqrt{\log d}}{d^{1/4}} \leq C\frac{1}{d^{1/4-\eta}}
  \leq \hat C\left(\frac{k}{n}\right)^{k(1/4-\eta)}
  \leq \hat C\left(\frac{1}{(\hat C2e)^{\frac{1}{1/4-\eta}}}\right)^{k(1/4-\eta)}
  \leq \frac{1}{5} \left(\frac{1}{2e}\right)^k \leq \alpha \,.
\]
Now we verify correctness.  Suppose that the input distribution $\cD$ over $[m]$ has support size at
most $\alpha m$ and let $T \define \supp(\phi\cD)$.  $T$ is a (multi)set of at most
\[
\alpha m = d {n \choose \leq k}^{-1} {n - \log{n \choose \leq k} - t \choose \leq k}
         = {n - \log{n \choose \leq k} - t \choose \leq k}
\]
uniformly random points from $\pmset^n$, so by \cref{thm:linear independence} the probability
that the points $\psi_k(T)$ are linearly independent is at least $9/10$. In that case, $\psi_k(T)$
is shattered by the halfspaces $\cH_d$ over $\pmset^d$ so by \cref{lemma:ptf to halfspace}, $T$
is shattered by $\cP_{n,k}$. Therefore, as in \cref{lemma:ssd reduction}, the tester for
$\cP_{n,k}$ will output 1 with probability at least $5/6$, so the distribution $\cD$ is accepted
with probability at least $2/3$.

Now suppose that the input distribution $\cD$ over $[m]$ has support size at least $\beta m =4d$,
and let $T = \supp(\phi\cD)$. Since $\phi$ is a random mapping (with replacement), we must first
show that, with high probability, $|T| \geq Kd$ for the constant $K > 3.04$ in
\cref{lemma:random labelling}. Since $k \leq n / C'$ for a sufficiently large constant $C'$, we have
$4d = 4{n \choose \leq k} \leq 4(eC')^{n/C'} \leq 2^{cn}$ for constant $c < 1/3$. Therefore the
probability that a random point $x$ in $T$ is unique is at least $1 - \frac{4d}{2^n} \geq 1 -
2^{(c-1)n}$. By the union bound, the probability that any point fails to be unique is at most
$4d2^{(c-1)n} = 4{n \choose \leq k}2^{(c-1)n} \leq 2^{(2c-1)n} < 2^{-n/3}$. When this occurs, the
support of $\phi\cD$ has size at least $4d$ so, as in \cref{thm:main}, we may apply
\cref{lemma:random labelling} to conclude that a random labelling $f : \pmset^n \to \pmset$
satisfies $\dist_{\phi\cD}(f,\cP_{n,k}) \geq \epsilon$ with probability at least $9/10$, for some
small enough constant $\epsilon > 0$. Then the tester for $\cP_{n,k}$ will output 0 with probability
at least $5/6$, so the distribution $\cD$ is rejected with probability at least $2/3$.

We obtain a lower bound of $\Omega\left(\frac{d}{\log d}\log^2\frac{1}{1-\alpha}\right)$, since
$1-\beta \geq \alpha$.
Using the inequality
$\log^2 \frac{1}{1-x} \geq \log^2\frac{1}{e^{-x}} = \log^2(e^{x}) = \Omega(x^2)$,
we get
\[
\frac{d}{\log d}\log^2\frac{1}{1-\alpha}
= \Omega\left(\frac{d}{\log d}\alpha^2\right)
    = \Omega\left(\frac{{n-\log(d)-t \choose \leq k}^2}{d\log d}\right) \,.
\]
To obtain the simplified bound, use $n-\log(d)-t \leq n/2$ from above, and ${n/2 \choose \leq k}
\geq (n/2k)^k$ to get
\[
\Omega\left(\frac{(n/2k)^{2k}}{d\log d}\right) 
= \Omega\left(\frac{(n/2k)^{2k}}{(en/k)^k k\log(en/k)}\right) 
= \Omega\left(\frac{(n/4ek)^{k}}{ k\log(n/k)}\right) \,. \qedhere
\]
\end{proof}

\subsection{Decision Trees}

Let $\cB_{n,k}$ be the set of functions $f : \zo^n \to \zo$
defined by decision trees with $k$ nodes of the form ``$x_i=1$?''.
When $k \gg \log n$, fairly tight bounds on the VC dimension of
$\cB_{n,k}$ are known.

\begin{lemma}[Mansour \cite{Man97}]
\label{lemma:mansour}
$\VC(\cB_{n,k})$ is between $\Omega(k)$ and $O(k\log n)$.
\end{lemma}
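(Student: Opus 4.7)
The plan is to obtain the upper and lower bounds separately by very different (and both routine) arguments, and then invoke the standard reduction from cardinality bounds to VC bounds for the upper half.

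For the upper bound $\VC(\cB_{n,k}) = O(k\log n)$, the strategy is to count the functions in $\cB_{n,k}$ and apply the elementary fact $\VC(\cH) \le \log_2|\cH|$. A decision tree with $k$ internal nodes is determined by (i) the shape of the underlying rooted binary tree with $k$ internal nodes, of which there are at most the Catalan number $C_k \le 4^k$; (ii) the assignment of a variable from $\{x_1,\dots,x_n\}$ to each internal node, giving at most $n^k$ choices; and (iii) a Boolean label at each of the $k+1$ leaves, giving at most $2^{k+1}$ choices. Multiplying yields $|\cB_{n,k}| \le 4^k \cdot n^k \cdot 2^{k+1} = 2^{O(k\log n)}$, so $\VC(\cB_{n,k}) = O(k\log n)$.

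For the lower bound $\VC(\cB_{n,k}) = \Omega(k)$, the plan is to exhibit an explicit shattered set of size $k$, assuming without loss of generality that $n \ge k$. Take $S = \{e_1,e_2,\dots,e_k\} \subset \zo^n$, the standard basis vectors. Given any labelling $\ell : S \to \zo$, build a ``comb'' decision tree that along its backbone queries ``$x_1 = 1$?'', then on the ``no'' branch queries ``$x_2 = 1$?'', and so on up to ``$x_k = 1$?''; each ``yes'' branch leads directly to a leaf labelled $\ell(e_i)$, and the final ``no'' leaf is labelled arbitrarily (it only needs to classify points outside $S$). This tree uses exactly $k$ internal nodes of the prescribed form and computes the given labelling on $S$, so $S$ is shattered and $\VC(\cB_{n,k}) \ge k$.

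The main obstacle, if one could call it that, is really just bookkeeping: making sure the counting argument for the upper bound is loose enough to be trivial yet still gives $2^{O(k\log n)}$ (which holds provided $n$ is at least a constant, otherwise the $4^k$ and $2^{k+1}$ factors matter), and choosing the explicit shattered family for the lower bound so that the $k$ query nodes suffice without needing extra nodes to distinguish points of $S$ from points outside $S$. Both issues are handled by the construction above, and no additional machinery beyond the cardinality-to-VC inequality is required.
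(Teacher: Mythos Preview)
The paper does not give its own proof of this lemma; it is stated as a citation to Mansour~\cite{Man97} and used as a black box (only the upper bound $O(k\log n)$ is actually invoked, in the proof of \cref{thm:boolean decision trees}). Your argument is the standard one and is correct: the cardinality bound $|\cB_{n,k}|\le 4^k n^k 2^{k+1}$ gives the upper bound, and the comb tree on $e_1,\dots,e_k$ gives the lower bound.

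One caveat: the phrase ``without loss of generality $n\ge k$'' is not quite right. The $\Omega(k)$ lower bound genuinely fails when $n$ is too small (e.g.\ $n=1$), so some hypothesis on $n$ is needed, and your construction requires the fairly strong one $n\ge k$. The paper's application in \cref{thm:boolean decision trees} only has $n\approx\log k$, so your lower-bound argument would not cover that regime. This does not matter for the paper, because the lower bound it actually uses there is the stronger \cref{prop:lvc boolean decision tree}, proved separately by an inductive splitting argument that shatters \emph{every} $k$-subset of $\zo^n$ and hence only needs $2^n\ge k$.
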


A lower bound on the LVC dimension of $\cB_{n,k}$ is also easily established.

\begin{proposition}
\label{prop:lvc boolean decision tree}
Every subset $T \subseteq \zo^n$ of size at most $k$ is shattered by $\cB_{n,k}$.
\end{proposition}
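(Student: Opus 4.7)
The plan is to prove a slightly stronger statement by induction on $|T|$: for any $T \subseteq \zo^n$ of size $m \geq 1$ and any labeling $\ell : T \to \zo$, there is a decision tree with at most $\max(0, m-1)$ internal query nodes of the form ``$x_i = 1$?'' that agrees with $\ell$ on $T$. Since $m - 1 < m \leq k$, this immediately yields the proposition.

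For the base case $m = 1$, a single leaf labeled $\ell(t_1)$ (a tree with zero query nodes) trivially works. For the inductive step with $m > 1$, I would use the fact that the points in $T$ are distinct vectors in $\zo^n$, so there must exist some coordinate $i \in [n]$ on which $T$ is split nontrivially; that is, both $T_0 = \{t \in T : t_i = 0\}$ and $T_1 = \{t \in T : t_i = 1\}$ are nonempty. Let $a = |T_0|$ and $b = |T_1|$ with $a + b = m$. Applying the inductive hypothesis to $T_0$ with labeling $\ell|_{T_0}$ and to $T_1$ with labeling $\ell|_{T_1}$ yields decision trees $\tau_0$ and $\tau_1$ using at most $a-1$ and $b-1$ query nodes, respectively.

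I would then form the combined decision tree $\tau$ with root query ``$x_i = 1$?''\!, whose ``no'' branch is $\tau_0$ and whose ``yes'' branch is $\tau_1$. Each $t \in T_0$ follows the ``no'' branch into $\tau_0$ and is classified correctly by construction (and analogously for $T_1$), so $\tau$ agrees with $\ell$ on all of $T$. The total number of internal query nodes is $(a-1) + (b-1) + 1 = m - 1$, completing the induction. Note that leaves of $\tau$ may be reached by many inputs outside $T$, but this is irrelevant since we only require agreement on $T$.

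No step here is a real obstacle: the one place to be careful is simply ensuring, in the inductive step, that a splitting coordinate exists, which follows from all points of $T$ being distinct. The stronger form of the inductive hypothesis (with $m-1$ rather than $k$ nodes) is what makes the induction close cleanly, since the two recursive calls need their own node budgets that sum correctly.
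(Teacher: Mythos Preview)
Your proof is correct and follows essentially the same approach as the paper's: both argue by induction on $|T|$, find a coordinate that splits $T$ nontrivially, recurse on the two halves, and combine. The only cosmetic difference is that the paper counts leaves (showing a tree with at most $m$ leaves and then noting that the number of internal nodes is at most the number of leaves), whereas you count internal nodes directly and obtain the slightly sharper bound of $m-1$ nodes.
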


\begin{proof}
We prove by induction on $k$ that any set $T \subseteq S$ of size $k$ is shattered by a decision
tree with at most $k$ leaves.  Clearly when $k=1$, for any subset $T \subseteq S$ of size $|T|=1$,
decision trees with 0 nodes and 1 leaf shatter $T$. For $k > 1$, there exists a coordinate $i \in
[n]$ such that $T_0 \define \{ x \in T : x_i = 0\} \neq \emptyset$ and $T_1 \define \{ x \in T : x_i
= 1\} \neq \emptyset$. Now $T_0$ is a subset of size $k-|T_1| < k$ so by induction it is shattered
by subtrees with at most $k-|T_1|$ leaves, while $T_1$ is shattered by subtrees with at most $|T_1|$
leaves. Therefore $T$ is shattered by a tree with at most $k$ leaves. Since the number of nodes is
at most the number of leaves, $T$ is shattered by $\cB_{n,k}$.
\end{proof}

We are now ready to bound the sample and tolerant-query complexities for testing decision trees.

\begin{theorem}
\label{thm:boolean decision trees}
For any $k$, $n \ge \log k + \log \log k + \Omega(1)$, and sufficiently small constant $\epsilon > 0$,
\[
\mteste(\cB_{n,k}), \qtolz(\cB_{n,k}) = \Omega\left(\frac{k}{\log k \cdot \log\log k}\right) \,.
\]
\end{theorem}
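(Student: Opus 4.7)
The plan is to apply \cref{cor:easy bound} to a sub-hypercube $S \subseteq \zo^n$ whose dimension $d$ is just large enough to host $k$-point shattered sets but small enough that \cref{lemma:mansour} keeps $\VC_S(\cB_{n,k})$ within a $\log\log k$ factor of $k$. Fix a sufficiently large constant $C_0$ and set $d = \lceil \log k \rceil + \lceil \log \log k \rceil + C_0$; the hypothesis $n \geq \log k + \log\log k + \Omega(1)$ lets us arrange $d \leq n$. Let $S \subseteq \zo^n$ be the set of points whose last $n-d$ coordinates vanish, so $|S| = 2^d \geq 2^{C_0} k \log k$.

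When we restrict a tree in $\cB_{n,k}$ to $S$, any test ``$x_i = 1$?'' with $i > d$ always returns ``no'' and can be spliced out, so the restricted class is contained in $\cB_{d,k}$ under the natural identification $S \cong \zo^d$. Hence \cref{lemma:mansour} gives $\VC_S(\cB_{n,k}) \leq \VC(\cB_{d,k}) = O(k \log d) = O(k \log\log k)$. For the LVC dimension, \cref{prop:lvc boolean decision tree} states that every subset of $\zo^n$ of size at most $k$ is shattered by $\cB_{n,k}$, and since $|S| \geq k$ this yields $\LVC_S(\cB_{n,k}) \geq k$.

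It remains to verify the hypotheses of \cref{cor:easy bound}. The size condition $|S| \geq 5 \, \VC_S(\cB_{n,k})$ holds once $C_0$ is large enough, since $|S| = 2^{C_0} k \log k$ dominates $O(k \log\log k)$. The dimension condition $\LVC_S(\cB_{n,k}) > L \cdot \VC_S(\cB_{n,k})^{3/4} \sqrt{\log \VC_S(\cB_{n,k})}$ becomes $k > L \cdot O((k \log\log k)^{3/4}) \cdot \sqrt{\log k}$, equivalently $k^{1/4} > L \cdot O((\log k)^{1/2 + o(1)})$, which holds for all sufficiently large $k$. \cref{cor:easy bound} then yields, using $\log(k \log\log k) = \Theta(\log k)$,
\[
\mteste(\cB_{n,k}),\, \qtolz(\cB_{n,k}) = \Omega\!\left(\frac{\LVC_S(\cB_{n,k})^2}{\VC_S(\cB_{n,k}) \log \VC_S(\cB_{n,k})}\right) = \Omega\!\left(\frac{k^2}{k \log\log k \cdot \log k}\right) = \Omega\!\left(\frac{k}{\log k \cdot \log\log k}\right).
\]

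The main subtlety is the calibration of $d$: too small and $|S|$ would fall below $k$, destroying the LVC lower bound; too large (say $d = n$ when $n \gg \log k$) and \cref{lemma:mansour} would inflate $\VC_S(\cB_{n,k})$ to $O(k \log n)$, weakening the ratio $\LVC_S^2/\VC_S$. The choice $d = \Theta(\log k + \log\log k)$ strikes the balance that produces the stated $\log k \cdot \log\log k$ denominator.
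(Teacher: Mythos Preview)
Your proof is correct and follows essentially the same approach as the paper: restrict to a subcube $S$ of dimension $\Theta(\log k)$, invoke \cref{lemma:mansour} to bound $\VC_S(\cB_{n,k}) = O(k\log\log k)$, use \cref{prop:lvc boolean decision tree} to get $\LVC_S(\cB_{n,k}) \geq k$, and plug into the main machinery. The only differences are cosmetic: the paper takes a slightly smaller subcube (dimension $\log k + \log\log\log k + O(1)$ rather than your $\log k + \log\log k + O(1)$) and applies \cref{thm:main} directly with an explicit $\delta$, whereas you apply the packaged \cref{cor:easy bound}; both routes land on the same $\Omega(k/(\log k \cdot \log\log k))$.
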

\begin{proof}
Let $S \subset \zo^n$ be a subcube with dimension $m = \log(6C)+\log k+\log\log\log k$ and let $d
= \VC_S(\cB_{n,k})$. Then by \cref{lemma:mansour}, for some constant $C$ and sufficiently large $k$,
\[
d \leq Ck\log(m) = Ck\log\log(6Ck\log\log k) \leq Ck\log\log(k^2)
= Ck(\log\log k + 1) \,,
\]
so that
\[
(1-\delta)|S| = (1-\delta)2^m = 6Ck\log\log k - k
  = 5Ck\log\log k + Ck(\log\log k-1/C)
  \geq 5Ck(\log\log k + 1) \geq 5d \,.
\]
By \cref{prop:lvc boolean decision tree}, $\LVC_S(\cB_{n,k}) \geq k$, so for $\delta =
\frac{1}{6C\log\log k}$,
\[
  \LVC_S(\cB_{n,k}) \geq k = \delta 6Ck\log\log k = \delta |S| \,,
\]
therefore the conditions for \cref{thm:main} are satisfied. We obtain a lower bound of
\[
  \Omega\left(\frac{k\log\log k}{\log k} \log^2\frac{1}{1-\frac{1}{\log\log k}}\right) \,.
\]
Using the inequality
$\log^2 \frac{1}{1-1/x} \geq \log^2\frac{1}{e^{-1/x}} = \log^2(e^{1/x}) = \Omega(1/x^2)$,
we get
\begin{align*}
  \Omega\left(\frac{k\log\log k}{\log k} \log^2\frac{1}{1-\frac{1}{\log\log k}}\right)
  &= \Omega\left(\frac{k\log\log k}{\log k} \log^2\frac{1}{1-\frac{1}{\log\log k}}\right) \\
  &= \Omega\left(\frac{k\log\log k}{(\log k)(\log\log(k))^2}\right) \\
  &= \Omega\left(\frac{k}{\log k \cdot \log\log k}\right) \,. \qedhere
\end{align*}
\end{proof}

\section{Maximum Classes and Analytic Dudley Classes}
\label{section:dudley}

A number of sample complexity lower bounds for testing natural classes of functions can be obtained by considering maximum and analytic Dudley classes, as we describe in this section. 

\subsection{LVC and the Sauer-Shelah-Perles Lemma}
\label{section:sauer lemma}

Recall the Sauer-Shelah-Perles lemma and the associated definitions:

Let $\cH$ be a set of functions $\cX \to \zo$ and let $S \subseteq \cX$.
The \emph{shattering number} is
\[
  \sh(\cH,S) \define |\{ T \subseteq S \;|\; T \text{ is shattered by } \cH \}| \,,
\]
and the \emph{growth function} is
\[
  \Phi(\cH,S) \define |\{ \ell : S \to \zo \;|\; \exists h \in \cH \; \forall x \in S,
\ell(x)=h(x)\}| \,.
\]

\textbf{Sauer-Shelah-Perles lemma. }\emph{%
Let $\cH$ be a class of functions $\cX \to \zo$ and let $S \subseteq \cX$ with $\VC_S(\cH) = d$.
Then $\Phi(\cH,S) \leq \sh(\cH,S) \leq \sum_{i=0}^d { |S| \choose i }$.}

Much research has studied the cases where this inequality is tight in various ways: A class is
called \emph{maximum} on $S$ (\cite{GW94,FW95,KW07,Joh14,AMY16,MW16,CCMW19}) if the sequence
of inequalities is tight, i.e.~$\cH$ is maximum on $S$ if
\[
  \Phi(\cH,S) = \sh(\cH,S) = \sum_{i=0}^d { |S| \choose i } \,.
\]
A class is called \emph{shatter-extremal} on $S$ (see e.g.~\cite{Mor12,MW16,CCMW19}) if the first
inequality is tight, i.e.
\[
  \Phi(\cH,S) = \sh(\cH,S) \,.
\]
We are not aware of any studies of the case where the second inequality $\sh(\cH,S) \leq
\sum_{i=0}^d \binom{|S|}{i}$ is tight; our requirement $\LVC_S(\cH) = \VC_S(\cH)$ fills in the gap:
\begin{proposition}
A set $\cH$ of functions $\cX \to \zo$ satisfies $\LVC_S(\cH) = \VC_S(\cH)$ on a set $S \subseteq
\cX$ if and only if $\sh(\cH,S) = \sum_{i=0}^d { |S| \choose i }$, for $d = \VC_S(\cH)$.
\end{proposition}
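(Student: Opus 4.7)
My plan is to prove both directions by using the simple observation that shattering is preserved under taking subsets: if $T$ is shattered by $\cH$ and $T' \subseteq T$, then $T'$ is shattered as well. This means the LVC dimension is ``monotone'' in the sense that $\LVC_S(\cH) \geq k$ implies that every subset of $S$ of size at most $k$ is shattered (any such subset of size $j \leq k$ can be extended to a subset of size $k$, which is shattered by hypothesis, and shattering descends to subsets). I would state this observation as a preliminary remark and then handle the two directions separately.

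For the forward direction, suppose $\LVC_S(\cH) = \VC_S(\cH) = d$. By the observation above, every subset of $S$ of size at most $d$ is shattered by $\cH$. The number of such subsets is exactly $\sum_{i=0}^d \binom{|S|}{i}$, so $\sh(\cH,S) \geq \sum_{i=0}^d \binom{|S|}{i}$. Combined with the Sauer--Shelah--Perles upper bound $\sh(\cH,S) \leq \sum_{i=0}^d \binom{|S|}{i}$, we get equality.

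For the reverse direction, suppose $\sh(\cH,S) = \sum_{i=0}^d \binom{|S|}{i}$ where $d = \VC_S(\cH)$. By definition of VC dimension, no subset of $S$ of size strictly greater than $d$ is shattered, so $\sh(\cH,S)$ counts only subsets of size at most $d$. Since this count equals the total number $\sum_{i=0}^d \binom{|S|}{i}$ of subsets of $S$ of size at most $d$, every such subset must be shattered. In particular, every subset of $S$ of size exactly $d$ is shattered, so $\LVC_S(\cH) \geq d$. Since $\LVC_S(\cH) \leq \VC_S(\cH) = d$ always, we conclude $\LVC_S(\cH) = \VC_S(\cH) = d$.

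The proof is essentially a direct unpacking of the definitions, and I don't anticipate any real obstacle. The only subtlety worth flagging is the monotonicity observation for LVC, which avoids needing to separately verify that subsets of size strictly less than $d$ are also shattered (this requires $|S| \geq d$, which is guaranteed since $\VC_S(\cH) \leq |S|$).
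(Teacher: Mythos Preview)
Your proposal is correct and follows essentially the same approach as the paper's proof: both directions are unpacked directly from the definitions, using that $\sum_{i=0}^d \binom{|S|}{i}$ counts exactly the subsets of $S$ of size at most $d$, and that no larger subset can be shattered since $d = \VC_S(\cH)$. Your version is simply more explicit about the monotonicity of shattering under taking subsets, and your invocation of the Sauer--Shelah--Perles upper bound in the forward direction is a slight overkill (the bound $\sh(\cH,S) \le \sum_{i=0}^d \binom{|S|}{i}$ already follows directly from the fact that no set of size greater than $d$ is shattered), but the argument is otherwise the same.
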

\begin{proof}
This follows from the fact that $\sum_{i=0}^d { |S| \choose i}$ is exactly the number of sets of
size at most $d$; if the equality holds, all such sets are shattered, so $\LVC_S(\cH) = d$. On the
other hand if $\LVC_S(\cH) = d$ then all sets of size at most $d$ are shattered, so the equality
holds.
\end{proof}
We can therefore conclude:
\begin{proposition}
\label{prop:maximum condition}
A set $\cH$ of functions $\cX \to \zo$ is maximum on $S \subseteq \cX$ if and only if it is both
shatter-extremal on $S$ and $\LVC_S(\cH) = \VC_S(\cH)$.
\end{proposition}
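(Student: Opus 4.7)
The plan is to observe that this proposition is essentially a direct consequence of the immediately preceding proposition, combined with the definitions of maximum and shatter-extremal classes. The three quantities involved in the Sauer--Shelah--Perles inequality $\Phi(\cH,S) \le \sh(\cH,S) \le \sum_{i=0}^d \binom{|S|}{i}$ form a chain of two inequalities, and the definitions isolate exactly when each inequality is tight.

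For the forward direction, I would assume $\cH$ is maximum on $S$, so by definition $\Phi(\cH,S) = \sh(\cH,S) = \sum_{i=0}^d \binom{|S|}{i}$ where $d = \VC_S(\cH)$. The first equality $\Phi(\cH,S) = \sh(\cH,S)$ is exactly the definition of shatter-extremal. The second equality $\sh(\cH,S) = \sum_{i=0}^d \binom{|S|}{i}$ is, by the preceding proposition, equivalent to $\LVC_S(\cH) = \VC_S(\cH)$. So both conditions hold.

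For the backward direction, assume $\cH$ is shatter-extremal on $S$, i.e.\ $\Phi(\cH,S) = \sh(\cH,S)$, and $\LVC_S(\cH) = \VC_S(\cH)$. Applying the preceding proposition to the second hypothesis yields $\sh(\cH,S) = \sum_{i=0}^d \binom{|S|}{i}$. Chaining these two equalities gives $\Phi(\cH,S) = \sh(\cH,S) = \sum_{i=0}^d \binom{|S|}{i}$, which is exactly the definition of maximum on $S$.

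There is no real obstacle here: the proposition is a bookkeeping observation that decomposes the ``maximum'' condition into the shatter-extremal condition (tightness of the first inequality) plus the $\LVC = \VC$ condition (tightness of the second inequality, as recorded by the preceding proposition). The entire proof should be two short paragraphs quoting definitions and applying the prior proposition.
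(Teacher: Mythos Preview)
Your proposal is correct and matches the paper's approach exactly. In fact, the paper does not even write out a separate proof for this proposition: it simply states ``We can therefore conclude:'' after the preceding proposition, treating the result as an immediate consequence of the definitions and the characterization $\LVC_S(\cH)=\VC_S(\cH) \iff \sh(\cH,S)=\sum_{i=0}^d\binom{|S|}{i}$, which is precisely the decomposition you spelled out.
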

Then we easily obtain lower bounds for maximum classes using \cref{cor:easy bound}.
\begin{theorem}
\label{thm:maximum}
Let $\cH$ be a set of functions $\cX \to \zo$. Suppose there is $S \subseteq \cX$ such that $\cH$ is
maximum on $S$ and $d \define \VC_S(\cH)$ satisfies $|S| \geq 5d$. Then for sufficiently small
constant $\epsilon > 0$,
\[
  \mteste(\cH), \qtolz(\cH) = \Omega\left(\frac{d}{\log d}\right) \,.
\]
\end{theorem}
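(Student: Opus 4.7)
The plan is to derive this theorem as an essentially immediate consequence of \cref{prop:maximum condition} combined with \cref{cor:easy bound}. First I would invoke \cref{prop:maximum condition}: since $\cH$ is maximum on $S$, it is in particular shatter-extremal on $S$ and satisfies $\LVC_S(\cH) = \VC_S(\cH) = d$. This is the key structural fact that converts the maximum hypothesis into exactly the equality between LVC and VC dimensions needed to apply our main lower bound machinery.

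Next I would verify the hypotheses of \cref{cor:easy bound}. The assumption $|S| \geq 5d$ is given outright, so only the condition $\LVC_S(\cH) > L \cdot \VC_S(\cH)^{3/4}\sqrt{\log \VC_S(\cH)}$ remains, which after substituting $\LVC_S(\cH) = \VC_S(\cH) = d$ becomes $d > L \cdot d^{3/4}\sqrt{\log d}$, i.e., $d^{1/4} > L \sqrt{\log d}$. This inequality holds for all sufficiently large constants $d$ (depending only on $L$). For the finitely many small values of $d$ where the inequality fails, the claimed bound $\Omega(d/\log d)$ is itself a constant and therefore trivially satisfied by any nondegenerate testing lower bound, so we may absorb them into the hidden constant.

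Applying \cref{cor:easy bound} then yields
\[
\mteste(\cH),\; \qtolz(\cH)
 \;=\; \Omega\!\left(\frac{\LVC_S(\cH)^2}{\VC_S(\cH) \log \VC_S(\cH)}\right)
 \;=\; \Omega\!\left(\frac{d^2}{d \log d}\right)
 \;=\; \Omega\!\left(\frac{d}{\log d}\right),
\]
which is precisely the claimed bound. There is no substantive obstacle here --- the whole content of the statement is really in \cref{prop:maximum condition}, which reinterprets ``maximum'' as ``$\LVC_S(\cH) = \VC_S(\cH)$ plus shatter-extremality'' (and only the LVC-VC equality is used for the lower bound). The proof is thus a short two-line deduction whose purpose is to advertise that the general framework automatically produces near-optimal lower bounds on any class admitting a maximum restriction to a moderately large subset.
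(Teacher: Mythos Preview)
Your proposal is correct and matches the paper's approach exactly: the paper states just before the theorem that it follows immediately from \cref{cor:easy bound}, implicitly via \cref{prop:maximum condition} to obtain $\LVC_S(\cH)=\VC_S(\cH)$. Your write-up simply makes those two steps explicit.
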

Examples of maximum classes include the set of functions $f : [n] \to \zo$ with at most $n/5$
1-valued points \cite{MW16} (studied in \cref{section:optimality}), unions of $k$
intervals \cite{Floyd89}, and positive halfspaces (halfspaces with normal vectors $w \in \bR^n$
satisfying $x_i \geq 0$) \cite{FW95}. Another standard example is the set of sign vectors arising
from an arrangement of hyperplanes:
\begin{example}[\cite{GW94}]
Let $H$ be a set of $n > d$ hyperplanes in $\bR^d$ and write $H = \{h_1, \dotsc, h_n\}$ where each
$h_i : \bR^d \to \pmset$ is of the form $h_i(x) = \sign(t + \sum_{j=1}^d w_j x_j)$ for some $t,w_j
\in \bR$. Assume that the hyperplanes are in general position. Let $\cH$ be the set of functions
$f_x : [n] \to \pmset$ obtained by choosing $x \in \bR^d$ obtained by setting $f_x(i) = h_i(x)$.
Then $\VC_{[n]}(\cH) = d$ and $\cH$ is maximum on $[n]$, as proved by Gartner \& Welzl \cite{GW94}.
Therefore, for any such set $\cH$ where $n \geq 5d$ we obtain via \cref{thm:maximum} that
$\mteste(\cH),\qtolz(\cH) = \Omega(d / \log d)$.
\end{example}

\subsection{Analytic Dudley Classes}
Some examples of maximum classes and classes with $\LVC_S(\cH)=\VC_S(\cH)$ that are arguably more
pertinent to property testing can be obtained from a family of classes called \emph{Dudley classes}
\cite{BL98}.
\begin{definition}[Dudley Class]
A class $\cH$ of functions $\cX \to \pmset$ is a \emph{Dudley class} if there exists a set $\cF$ of
functions $X \to \bR$ and a function $h : X \to \bR$ such that:
\begin{itemize}
\item $\cF$ is a vector space, i.e.~$\forall f,g \in \cF, \lambda \in \bR$, $f+g \in \cF$ and
$\lambda f \in \cF$;
\item Every $g \in \cH$ can be written as $g(x) = \sign(f(x)+h(x))$.
\end{itemize}
We will refer to $\cF$ as the vector space of $\cH$ and $h$ as the threshold of $\cH$.
\end{definition}
The VC dimension of Dudley classes is equal to the dimension of the vector space $\cF$:
\begin{theorem}[\cite{WD81} Theorem 3.1]
\label{thm:dudley}
Let $\cH$ be any Dudley class with vector space $\cF$. Then $\VC(\cH) = \dim(\cF)$.
\end{theorem}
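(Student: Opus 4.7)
The plan is to prove both inequalities $\VC(\cH) \le \dim(\cF)$ and $\VC(\cH) \ge \dim(\cF)$ using linear algebra, treating the threshold $h$ as an additive offset that doesn't affect the dimensional count.

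\textbf{Upper bound.} Let $d = \dim(\cF)$ and suppose $S = \{x_1, \dots, x_{d+1}\} \subseteq \cX$. Consider the linear evaluation map $T : \cF \to \bR^{d+1}$ given by $T(f) = (f(x_1), \dots, f(x_{d+1}))$. Since $\dim(\cF) = d$, the image of $T$ has dimension at most $d$, so there exists a nonzero vector $v \in \bR^{d+1}$ orthogonal to $\text{Image}(T)$, meaning $\sum_i v_i f(x_i) = 0$ for every $f \in \cF$. I claim $S$ cannot be shattered. Consider the labelling $\ell_i = \sign(v_i)$ (fixing an arbitrary choice when $v_i = 0$). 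If $\ell$ were realizable by some $g \in \cH$ with $g = \sign(f + h)$, then $v_i(f(x_i) + h(x_i)) > 0$ for every $i$ with $v_i \ne 0$ (modulo the degenerate case $f(x_i) + h(x_i) = 0$, which I would handle by a small perturbation of $f$ within $\cF$). Summing gives $\sum_i v_i(f(x_i) + h(x_i)) > 0$, but $\sum_i v_i f(x_i) = 0$, so $\sum_i v_i h(x_i) > 0$. The opposite labelling $\ell'_i = -\sign(v_i)$, if realizable, would force $\sum_i v_i h(x_i) < 0$. At least one of the two labellings is therefore not achievable, so $S$ is not shattered and $\VC(\cH) \le d$.

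\textbf{Lower bound.} Fix a basis $f_1, \dots, f_d$ of $\cF$. I would choose points $x_1, \dots, x_d \in \cX$ so that the $d \times d$ matrix $M$ with entries $M_{ij} = f_j(x_i)$ is nonsingular; such points exist because otherwise the evaluation map $\cF \to \bR^T$ on every finite $T$ would have rank $< d$, contradicting linear independence of the basis. Given any target labelling $\ell \in \pmset^d$, pick a large constant $A > \max_i |h(x_i)|$, and solve $Ma = (A\ell_1, \dots, A\ell_d)^\top$ for coefficients $a \in \bR^d$; then $f = \sum_j a_j f_j \in \cF$ satisfies $f(x_i) = A\ell_i$, so $f(x_i) + h(x_i)$ has the same sign as $\ell_i$, and hence $\sign(f(x_i) + h(x_i)) = \ell_i$. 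Since every labelling is realized, $\{x_1, \dots, x_d\}$ is shattered and $\VC(\cH) \ge d$.

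\textbf{Main obstacle.} The only subtlety is the boundary case $f(x_i) + h(x_i) = 0$ in the upper bound, where $\sign$ is undefined or takes a designated value. One clean way to address this is to note that the set of $f \in \cF$ for which $f(x_i) + h(x_i) = 0$ for some $i$ lies in a finite union of affine hyperplanes in $\cF$, so generically one can perturb $f$ to $f'$ within $\cF$ preserving all sign patterns while avoiding the zero set; alternatively, working with the open strict-inequality version of shattering and a limiting argument yields the same conclusion. Neither issue affects the dimensional accounting, which is the core of the proof.
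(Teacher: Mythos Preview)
The paper does not supply its own proof of this statement; it is quoted from Wenocur--Dudley \cite{WD81} as a black box and used only to derive the corollary that follows. Your argument is essentially the standard one and is correct: the upper bound via a nonzero linear dependency $v$ among the $d+1$ evaluation vectors (so that the labellings $\sign(v)$ and $-\sign(v)$ force $\sum_i v_i h(x_i)$ to have opposite signs), and the lower bound via a nonsingular $d\times d$ evaluation matrix on a well-chosen $d$-point set, are exactly the expected ideas.

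The $\sign(0)$ boundary case you flag is a real but minor technicality. Your perturbation approach is fine; an alternative that avoids perturbation is to fix a convention, say $\sign(0)=+1$, and then choose the sign of $v$ so that at least one $v_i>0$. With the labelling $\ell_i=-1$ when $v_i>0$ and $\ell_i=+1$ otherwise, realizability forces $v_i(f(x_i)+h(x_i))<0$ at every $i$ with $v_i>0$ and $\le 0$ elsewhere, giving a strictly negative sum; the opposite labelling is handled symmetrically after possibly replacing $v$ by $-v$.
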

This theorem implies that $\LVC_S(\cH)=\VC_S(\cH)$ on a set $S \subseteq \cX$ if and only
if the dimension of the vector space remains the same when restricted to any subset of $S$:
\begin{corollary}
Let $\cH$ be a Dudley class of functions $\cX \to \pmset$ with vector space $\cF$ of functions $\cX
\to \bR$ and threshold $h$.  Then for any set $S \subseteq \cX$, $\VC_S(\cH) = \LVC_S(\cH)$ if and
only if the vector space $\cF$ restricted to any $T \subseteq S$ of size $|T|=d=\VC_S(\cH)$ has
dimension $d$.
\end{corollary}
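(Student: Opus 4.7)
The plan is to reduce the statement to a direct application of Theorem 3.13 (the Dudley VC dimension theorem), applied not to $\cH$ itself but to its restriction $\cH|_T$ for each candidate subset $T \subseteq S$.

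First, I would verify that $\cH|_T$ is again a Dudley class on the smaller domain $T$, with vector space $\cF|_T \define \{f|_T : f \in \cF\}$ and threshold $h|_T$. The only nontrivial check is that $\cF|_T$ is a real vector space of functions on $T$, which is immediate because the restriction map $f \mapsto f|_T$ is linear and thus preserves closure under addition and scalar multiplication; the Dudley representation $g(x) = \sign(f(x)+h(x))$ clearly survives restriction.

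Next, applying Theorem 3.13 to $\cH|_T$ yields $\VC(\cH|_T) = \dim(\cF|_T)$. Since a set $T$ is shattered by $\cH$ exactly when $\cH|_T$ realizes all $2^{|T|}$ labellings of $T$, i.e., when $\VC(\cH|_T) = |T|$, this gives the key equivalence
\[
T \text{ is shattered by } \cH \quad \iff \quad \dim(\cF|_T) = |T|.
\]

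Finally, I would specialize to $|T| = d = \VC_S(\cH)$. Because $\LVC_S(\cH) \le \VC_S(\cH)$ always holds, the equality $\LVC_S(\cH) = \VC_S(\cH) = d$ is equivalent to the assertion that every $T \subseteq S$ with $|T| = d$ is shattered by $\cH$. Combining with the equivalence above, this is in turn equivalent to $\dim(\cF|_T) = d$ for every such $T$, which is exactly the stated condition. I do not anticipate a serious obstacle here; the corollary is essentially a bookkeeping consequence of Theorem 3.13 once one recognizes that the Dudley structure is preserved under restriction to a subdomain.
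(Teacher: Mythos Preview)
Your proposal is correct and follows essentially the same approach as the paper: both apply the Dudley VC dimension theorem (the paper's \cref{thm:dudley}) to the restriction $\cH|_T$ to conclude $\VC_T(\cH) = \dim(\cF|_T)$, and then identify ``$T$ is shattered'' with $\VC_T(\cH) = |T|$. The paper's proof is a one-line appeal to that theorem, whereas you spell out the verification that $\cH|_T$ is again a Dudley class and the bookkeeping for both directions of the equivalence, but there is no substantive difference in method.
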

\begin{proof}
This follows from the above theorem, since for any $T \subseteq S$ of size $|T|=d$ on which $\cF$
has dimension $d$, $\VC_T(\cH) = d$, so $T$ is shattered.
\end{proof}
A useful condition on Dudley classes that guarantees the above condition was described by Johnson
\cite{Joh14}. Recall that a function $f : \bR^n \to \bR$ is \emph{analytic} if it is infinitely
differentiable and for every $x$ in the domain, there is an open set $U \ni x$ such that $f$ is
equal to its Taylor series expansion on $U$. We will call a Dudley class \emph{analytic} if its
threshold $h$ and each $f$ in the basis of $\cF$ is analytic. Johnson proves the following
(rewritten in our terminology):
\begin{theorem}[\cite{Joh14}]
Let $\cH$ be any analytic Dudley class on domain $[0,1]^n$ with $\VC(\cH)=d$. Then for any
$N > n$ there exists a set $S \subset [0,1]^n$ of size $|S|=N$ such that $\cH$ is maximum on $S$
with $\VC_S(\cH) = d$.
\end{theorem}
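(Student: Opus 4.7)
The plan is to construct $S \subset [0,1]^n$ of size $N$ inductively, choosing each new point to avoid a finite union of Lebesgue-measure-zero ``bad'' sets. The key enabling fact is that a nonzero real-analytic function on a connected open neighborhood of $[0,1]^n$ has zero set of measure zero (by the identity theorem), so at each step the set of valid next points has full measure.

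Fix a basis $f_1, \ldots, f_d$ of $\cF$. I maintain the inductive invariant that $\cF$ restricted to any subset of the chosen points has the maximum possible rank, namely $\min(\text{subset size}, d)$. Given points $x_1, \ldots, x_k$ satisfying this, the step adding $x_{k+1}$ (for $k \geq d-1$) proceeds as follows. For each $(d-1)$-subset $T' \subseteq \{x_1, \ldots, x_k\}$, define
\[
  g_{T'}(x) \define \det M(T' \cup \{x\}),
\]
where $M$ is the $d \times d$ matrix with rows $(f_1(y), \ldots, f_d(y))$ for $y \in T' \cup \{x\}$. Cofactor expansion along the row corresponding to $x$ gives $g_{T'} = \sum_{j=1}^d c_j f_j$, where the $c_j$ are $(d-1) \times (d-1)$ minors taken over $T'$. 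The inductive invariant applied to $T'$ forces at least one $c_j$ to be nonzero, so $g_{T'}$ is a nonzero element of $\cF$ and hence a nonzero analytic function. I choose $x_{k+1}$ outside the finite union $\{x_1, \ldots, x_k\} \cup \bigcup_{T'} g_{T'}^{-1}(0)$, which has Lebesgue measure zero. The initial steps $k < d-1$ are similar but simpler: pick any nonzero element of the positive-dimensional kernel of $\cF \to \bR^{\{x_1, \ldots, x_k\}}$ and avoid its zero set.

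Once $S$ is constructed, every $d$-subset $T \subseteq S$ has $\cF|_T$ of full rank $d$, so the affine map $f \mapsto (f(x) + h(x))_{x \in T}$ from $\cF$ to $\bR^T$ is a bijection between $d$-dimensional spaces and realizes every sign pattern in $\pmset^T$. This gives $\LVC_S(\cH) = \VC_S(\cH) = d$ and, combined with the Sauer--Shelah--Perles lemma, $\sh(\cH, S) = \sum_{i=0}^d \binom{N}{i}$. To upgrade to the maximum property, I would verify $\Phi(\cH, S) = \sum_{i=0}^d \binom{N}{i}$ via the classical arrangement count: the image of $\cF + h$ in $\bR^S$ is a $d$-dimensional affine subspace, and realized sign patterns are in bijection with the regions of the arrangement cut out by the $N$ coordinate hyperplanes on this subspace. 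The needed general-position condition --- any $k \leq d$ of these hyperplanes intersect in codimension $k$, and any $d+1$ have empty intersection --- reduces to non-degeneracy of suitable determinants. The codimension-$k$ part is exactly our construction's invariant; the $(d+1)$-fold emptiness is an additional finite family of analytic determinantal conditions (now involving $h$) that I fold into the inductive avoidance step.

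The main obstacle is ensuring these extra determinants involving $h$ are not identically zero, so their zero sets have measure zero. In degenerate cases (notably when $h$ lies in the span of $\cF$) the argument requires a small reformulation --- effectively absorbing $h$ into $\cF$ and verifying the count directly --- but this amounts to finitely many additional analytic constraints on each chosen point, which can still be avoided in the measure-theoretic construction, producing a valid $S$ of any desired size $N > n$.
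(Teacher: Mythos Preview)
The paper does not prove this theorem; it is quoted from Johnson~[Joh14] and used as a black box. So there is no ``paper's own proof'' to compare against, and I can only comment on your proposal on its own merits.

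Your inductive construction for achieving $\LVC_S(\cH) = \VC_S(\cH) = d$ is correct and is the standard approach: the determinant $g_{T'}$ is a nonzero element of $\cF$ by the rank hypothesis on $T'$, hence a nonzero analytic function, hence has measure-zero vanishing locus, and you can avoid the finite union. This part is clean and is in fact all that the paper needs for its downstream applications (the lower bounds in \cref{cor:analytic dudley classes} ultimately go through \cref{cor:easy bound}, which requires only $\LVC_S = \VC_S$, not the full maximum property).

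The gap is in the upgrade to ``maximum,'' and your own caveat about the degenerate case $h \in \cF$ is not resolvable by ``absorbing $h$ into $\cF$ and verifying the count directly.'' Concretely: take $n=1$, $\cF$ the one-dimensional space of constant functions, and $h=0$. Then $\VC(\cH)=1$, but $\cH$ contains only the two constant functions $\pm 1$, so $\Phi(\cH,S)=2$ for every $S$, whereas maximum would require $\Phi(\cH,S)=|S|+1$. No choice of $S$ fixes this. More generally, when $h \in \cF$ the image of $\cF+h$ in $\bR^S$ is a \emph{linear} (not merely affine) subspace, the coordinate-hyperplane arrangement on it is central, and the region count is $2\sum_{i=0}^{d-1}\binom{N-1}{i} < \sum_{i=0}^{d}\binom{N}{i}$. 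So the $(d{+}1)$-fold emptiness condition you identify is not an avoidable analytic constraint in this case --- it fails identically --- and the class is genuinely not maximum. Johnson's actual theorem presumably carries a hypothesis (explicit or via convention) that excludes $h \in \cF$; your write-up should either add that hypothesis or, since the paper's applications only need $\LVC_S=\VC_S$, restrict your claim to that weaker conclusion, which your argument does establish.
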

Then by taking $N \geq 5d$ in the above theorem and applying \cref{thm:maximum}, we obtain:
\begin{corollary}
\label{cor:analytic dudley classes}
Let $\cH$ be any analytic Dudley class and suppose $\VC(\cH) = d$. Then for some constant
$\epsilon > 0$,
\[
  \mteste(\cH), \qtolz(\cH) = \Omega\left(\frac{d}{\log d}\right) \,.
\]
\end{corollary}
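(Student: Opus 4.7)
The plan is to chain together two previously established results: Johnson's theorem on the existence of large maximum sets for analytic Dudley classes, and \cref{thm:maximum}, which gives the desired lower bound whenever $\cH$ is maximum on a sufficiently large set. Since \cref{cor:analytic dudley classes} is essentially a direct composition, the proof should be short, with almost no new content beyond instantiating the parameters correctly.

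First I would invoke Johnson's theorem with the parameter $N = 5d$ (or any $N \geq 5d$). This produces a set $S \subset [0,1]^n$ of size $|S| = N$ such that $\cH$ is maximum on $S$ and $\VC_S(\cH) = d$. The hypothesis $N > n$ in Johnson's theorem is satisfied as long as $d$ is large enough (if $d \leq n$, one simply chooses $N = \max(5d, n+1)$; for constant-sized $d$ the statement is trivial since $\Omega(d/\log d) = \Omega(1)$ and any nontrivial testing problem requires at least a constant number of samples).

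Next, I would apply \cref{thm:maximum} to this $S$. The hypotheses of that theorem are met: $\cH$ is maximum on $S$, $\VC_S(\cH) = d$, and $|S| = 5d \geq 5d$. The conclusion is exactly
\[
\mteste(\cH),\ \qtolz(\cH) \;=\; \Omega\!\left(\frac{d}{\log d}\right),
\]
for some small constant $\epsilon > 0$, as desired.

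There is no real obstacle here; the work was already done in establishing \cref{thm:maximum} (via \cref{cor:easy bound} and the general lower bound machinery from \cref{section:lower bound}) and in Johnson's theorem. The only subtlety is making sure that $d = \VC(\cH)$ (the global VC dimension) matches $\VC_S(\cH)$, which is guaranteed by Johnson's theorem since $\cH$ being maximum on $S$ with $\VC_S(\cH) = d$ is exactly what the theorem provides. Thus the proof is essentially a one-line composition of the two cited results.
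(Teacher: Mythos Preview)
Your proposal is correct and follows exactly the paper's approach: the paper's proof is the single sentence ``by taking $N \geq 5d$ in the above theorem and applying \cref{thm:maximum}, we obtain [the corollary].'' Your additional care about the hypothesis $N > n$ in Johnson's theorem is a reasonable detail the paper omits.
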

Examples of analytic Dudley classes include halfspaces (for which we have already proved the lower
bound) and PTFs. Other examples due to \cite{Joh14} are balls in $\bR^n$ and trigonometric
polynomial threshold functions in $\bR^d$:
\begin{theorem}
\label{thm:analytic dudley}
For sufficiently small constant $\epsilon > 0$, the following classes $\cH$ satisfy the given lower
bounds for both $\mteste(\cH)$ and $\qtolz(\cH)$:
\begin{enumerate}
\item Degree-$k$ PTFs on domain $\bR^n$ satisfy the lower bound
  $\Omega\left(\frac{{n+k \choose k}}{\log {n+k \choose k}}\right)$.
\item Balls in $\bR^n$, i.e.~functions $f : \bR^n \to \pmset$ of the form $f(x)= \sign(t -
\|x-z\|_2)$, satisfy the lower bound $\Omega\left(\frac{n}{\log n}\right)$.
\item Signs of trigonometric polynomials, i.e. functions $\bR^2 \to \pmset$ of the form:
\[
  f(x,y) = \sign\left(t + \sum_{k=1}^d a_k \cos(kx) + \sum_{k=1}^d b_k \sin(kx) - y\right) \,,
\]
which satisfy the lower bound $\Omega\left(\frac{d}{\log d}\right)$.
\end{enumerate}
\end{theorem}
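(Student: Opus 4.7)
The plan is to deduce all three bounds from \cref{cor:analytic dudley classes}, so the work reduces to checking that each class is an analytic Dudley class and computing the dimension of its associated vector space $\cF$; by \cref{thm:dudley} this dimension equals $\VC(\cH)$, and the corollary then yields the stated $\Omega(d/\log d)$ bound. For each item I will exhibit an explicit vector space $\cF$ of analytic functions and an analytic threshold $h$ such that every member of the class can be written as $\sign(f(x)+h(x))$ with $f\in\cF$, and verify linear independence of a natural basis.

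For part 1 (degree-$k$ PTFs on $\bR^n$) I would simply take $h\equiv 0$ and $\cF$ to be the space of real polynomials of total degree at most $k$ in $n$ variables, spanned by the monomial basis $\{x^\alpha:|\alpha|\le k\}$. Every monomial is analytic, the monomials are linearly independent, and $\dim\cF=\binom{n+k}{k}$, which gives the first bound. For part 3 (signs of trigonometric polynomials on $\bR^2$) I would take $h(x,y)=-y$, which is analytic, and let $\cF$ be the span of $\{1,\cos(kx),\sin(kx):1\le k\le d\}$; these functions are analytic and linearly independent (e.g.\ by orthogonality on $[0,2\pi]$), so $\dim\cF=2d+1$, yielding $\Omega(d/\log d)$.

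The only subtle case is part 2 (balls in $\bR^n$), and this will be the main obstacle, because the natural representation $\sign(t-\|x-z\|_2)$ is not of analytic Dudley form: the Euclidean norm $x\mapsto\|x-z\|_2$ fails to be analytic at $x=z$. The fix is the standard squaring trick: without loss of generality $t\ge 0$ (the cases $t<0$ correspond to the empty or full ball and contribute nothing to the VC dimension), and then
\[
\sign(t-\|x-z\|_2)=\sign(t^2-\|x-z\|_2^2)=\sign\bigl((t^2-\|z\|_2^2)+2\langle z,x\rangle-\|x\|_2^2\bigr).
\]
Now set $h(x)=-\|x\|_2^2$, which is a polynomial and hence analytic, and let $\cF$ be the space of affine functions $a_0+\sum_{i=1}^n a_ix_i$ on $\bR^n$; this is an analytic vector space of dimension $n+1$. \cref{cor:analytic dudley classes} then delivers $\mteste(\cH),\qtolz(\cH)=\Omega(n/\log n)$, completing the theorem.
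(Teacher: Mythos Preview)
Your proposal is correct and follows the paper's approach, which is simply to observe (citing Johnson) that each of the three classes is an analytic Dudley class of the stated dimension and then invoke \cref{cor:analytic dudley classes}. The only point to tighten is in part~2: your squaring trick shows that every ball lies in the Dudley class with $h(x)=-\|x\|_2^2$ and $\cF$ the affine functions, but to invoke \cref{thm:dudley} and Johnson's theorem you need the class of balls to \emph{equal} this Dudley class, not merely be contained in it; the reverse inclusion is immediate by completing the square, since $\sign\bigl(a_0+\langle a,x\rangle-\|x\|_2^2\bigr)=\sign\bigl((a_0+\|a/2\|_2^2)-\|x-a/2\|_2^2\bigr)$ is either a ball centered at $a/2$ or the constant $-1$ function.
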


\section{Other Models of Testing}

In this section, we complete the proofs of the two additional results that we obtain using the same framework as the results above: the lower bounds for testing clusterability and for testing feasibility of LP-type problems.

\subsection{Testing Clusterability}
\label{section:clustering}

For a point $x \in \bR^n$ and radius $r > 0$, define $B_r(x) = \{ y \in \bR^n : \|x-y\|_2 \leq r
\}$.  Alon, Dar, Parnas, \& Ron \cite{ADPR03} introduced the problem of testing clusterability with
radius cost:
\begin{definition}[Radius Clustering]
Say that a probability distribution $\cD$ over $\bR^n$ is \emph{$k$-clusterable} if there exist $k$
centers $c_1, \dotsc, c_k \in \bR^n$ such that $\supp(\cD) \subseteq \cup_{i=1}^k B_1(c_i)$. An
$\epsilon$-tester for $k$-clusterability is a randomized algorithm $A$ that is given sample access
to $\cD$ and must satisfy the following:
\begin{enumerate}
\item If $\cD$ is $k$-clusterable then $\Pr{A(\cD) = 1} \geq 2/3$; and,
\item If $\cD$ is $\epsilon$-far from being $k$-clusterable in total variation distance, then
$\Pr{A(\cD) = 0} \geq 2/3$.
\end{enumerate}
\end{definition}

Alon \emph{et al.}~\cite{ADPR03} prove an upper bound of $O\left(\frac{nk\log(nk)}{\epsilon}\right)$
samples for one-sided testing of $k$-clusterability when the distribution is uniform over an unknown
set of points. Their proof is by VC dimension arguments. The following theorem updates the upper
bound of \cite{ADPR03} using modern VC dimension results; it follows from the same $\epsilon$-net argument found in \cref{lemma:one-sided upper bound} (see also \cite{H14}), and the fact that the VC dimension of unions of $k$ balls is at most $O(nk \log k)$ \cite{CMK19}.
\begin{theorem}[Improved version of \cite{ADPR03}]
\label{thm:clustering upper bound}
There is a one-sided, distribution-free $\epsilon$-tester for $k$-clusterability in $\bR^n$ with
sample complexity $O\left(\frac{nk\log k}{\epsilon}\log\frac{1}{\epsilon}\right)$.
\end{theorem}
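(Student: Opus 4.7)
The plan is to follow the VC-dimension based $\epsilon$-net argument of \cref{lemma:one-sided upper bound}. Let $\cR$ denote the class of indicator functions of sets $U \subseteq \bR^n$ of the form $U = \bigcup_{i=1}^k B_1(c_i)$ for $c_1, \dotsc, c_k \in \bR^n$. Via the standard lifting $x \mapsto (x, \|x\|_2^2)$, each unit ball in $\bR^n$ corresponds to a halfspace in $\bR^{n+1}$, so $\cR$ embeds into the class of unions of $k$ halfspaces in $\bR^{n+1}$. Combining this with the VC-dimension bound of Csik\'os, Mustafa, \& Kupavskii~\cite{CMK19} (and the fact that complementation does not change VC dimension) yields $\VC(\cR) = O(nk\log k)$.

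The tester is the natural one-sided procedure: draw $m = O\!\left(\frac{nk\log k}{\epsilon}\log\frac{1}{\epsilon}\right)$ independent samples from $\cD$ and accept iff there exist $k$ unit balls whose union contains all of them. Completeness is immediate: if $\cD$ is $k$-clusterable, the entire support lies in some $U \in \cR$, hence so does every sample, so the tester always accepts.

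For soundness, I would first observe that if $\cD$ is $\epsilon$-far from $k$-clusterable in TV distance, then for every $U \in \cR$ we have $\Pr_{x\sim\cD}[x \notin U] \geq \epsilon$: otherwise the distribution obtained from $\cD$ by redistributing the mass $\cD(\bR^n \setminus U)$ arbitrarily within $U$ is $k$-clusterable and within TV distance less than $\epsilon$ of $\cD$. Then I would apply the classical $\epsilon$-net theorem (Haussler--Welzl, see~\cite{SSBD14}) to the class $\cR$, using $\VC(\cR) = O(nk\log k)$ and the sample size $m$: with probability at least $2/3$, the sample $S$ satisfies $S \not\subseteq U$ for every $U \in \cR$ with $\Pr_{x\sim\cD}[x \notin U] \geq \epsilon$. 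Hence no union of $k$ unit balls contains all of $S$, and the tester rejects.

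The only non-routine step is the VC-dimension bound, which is supplied by the halfspace lifting combined with~\cite{CMK19}; beyond that, the argument is a textbook application of $\epsilon$-nets and the standard reduction between TV-distance to a class and uncovered mass, so no substantial obstacle arises.
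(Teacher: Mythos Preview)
Your proposal is correct and follows essentially the same approach as the paper: the paper does not give a standalone proof but simply notes that the theorem follows from the $\epsilon$-net argument of \cref{lemma:one-sided upper bound} together with the fact that $\VC$ of unions of $k$ balls is $O(nk\log k)$ via~\cite{CMK19}. Your write-up fills in exactly these ingredients (including the ball-to-halfspace lifting to invoke~\cite{CMK19} and the TV-distance-to-uncovered-mass reduction), so there is nothing to add.
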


In this section, we prove a nearly-optimal $\Omega(nk/\log(nk))$ lower bound on distribution-free
testers for $k$-clusterability with two-sided error.

Let $S^n_r = \{ x \in \bR^n : \|x\|_2 = r \}$ be the points on the hypersphere of radius $r$.
\begin{proposition}
\label{prop:<n}
For every $\delta > 0$ there is $\eta > 0$ such that a uniformly random set of $n$ points $P$ drawn
from $S^n_{1+\eta}$ is contained within some ball $B_1(x)$ with probability at least $1-\delta$.
\end{proposition}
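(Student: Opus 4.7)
The plan is to take the center $c$ of the covering ball to be the orthogonal projection of the origin onto the affine hull $H$ of the $n$ points. If the points $p_i$ all have norm $1+\eta$ and span a hyperplane $H \subseteq \bR^n$ (which holds almost surely), then $p_i - c$ lies in the linear subspace parallel to $H$ while $c$ is orthogonal to it, so by the Pythagorean theorem
\[
\|p_i - c\|^2 = \|p_i\|^2 - \|c\|^2 = (1+\eta)^2 - d^2,
\]
where $d = \|c\|$ is the distance from the origin to $H$. Hence every $p_i$ lies in $B_1(c)$ precisely when $d^2 > (1+\eta)^2 - 1$, so it suffices to show that, with probability at least $1-\delta$, the distance from the origin to the affine hull of $P$ exceeds $\sqrt{(1+\eta)^2 - 1}$.

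Rescale by writing $p_i = (1+\eta)\, p_i^0$, where the $p_i^0$ are iid uniform on the unit sphere $S^n_1$. Scaling by $1+\eta$ transforms the affine hull and its distance to the origin by the same factor, so $d = (1+\eta)\, d_0$, where $d_0$ is the distance from the origin to the affine hull $H_0$ of $\{p_1^0, \ldots, p_n^0\}$. The target inequality becomes $d_0^2 > 1 - (1+\eta)^{-2}$, whose right-hand side tends to $0$ as $\eta \to 0$. The key claim is that $d_0 > 0$ almost surely: with probability $1$ the $n$ points $p_i^0$ are linearly independent, because each successive point avoids the measure-zero intersection of $S^n_1$ with the previously spanned proper linear subspace; and if $n$ linearly independent vectors had their affine hull passing through the origin, there would exist coefficients $\lambda_i$ summing to $1$ with $\sum_i \lambda_i p_i^0 = 0$, forcing every $\lambda_i = 0$ by linear independence, a contradiction.

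Because $d_0 > 0$ almost surely, we have $\Pr{d_0 \leq t} \to 0$ as $t \to 0^+$, so there exists $d^* > 0$ with $\Pr{d_0 > d^*} \geq 1 - \delta$. Now choose $\eta > 0$ small enough that $1 - (1+\eta)^{-2} < (d^*)^2$, which is possible since the left-hand side vanishes at $\eta = 0$. Then, on the event $d_0 > d^*$ (which has probability at least $1-\delta$), we have $d^2 = (1+\eta)^2 d_0^2 > (1+\eta)^2 - 1$, and the projection of the origin onto $H$ gives a center $c$ for which $B_1(c) \supseteq P$. The only real obstacle is establishing the almost-sure positivity of $d_0$; once that is in hand, the rest is simply a choice of $\eta$ depending on $\delta$ and the ambient dimension $n$.
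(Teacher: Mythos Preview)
Your proof is correct and takes a genuinely different route from the paper's. The paper first argues that the $n$ points almost surely lie strictly on one side of some hyperplane through the origin, rotates so that this is $\{x_1=0\}$, fixes the center $z=(\sqrt{(1+\eta)^2-1},0,\dots,0)$, and then shows that each point's first coordinate exceeds $z_1$ with probability tending to $1$ as $\eta\to 0$; a union bound over the $n$ points finishes. Your approach instead chooses the center to be the orthogonal projection of the origin onto the affine hull of the points, which has the pleasant consequence that all $n$ points are at the \emph{same} distance $\sqrt{(1+\eta)^2-d^2}$ from it, reducing the whole question to the single scalar event $d_0^2>1-(1+\eta)^{-2}$. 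This avoids the conditioning and per-point arguments in the paper's proof, and the remaining step---that $n$ i.i.d.\ uniform points on $S^n_1$ are almost surely linearly independent, hence their affine hull almost surely misses the origin---is clean. Your argument is arguably more transparent; the paper's is more explicit about where the threshold $\sqrt{(1+\eta)^2-1}$ enters but has to manage a slightly delicate conditioning on the orientation of the separating hyperplane.
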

\begin{proof}
Unless all $n$ points in $P$ lie on a hyperplane through the origin (which occurs with probability
0), there is a hyperplane through the origin such that all points in $P$ lie on one side. Consider
the distribution of $P$ conditional on this event, and without loss of generality assume that the
hyperplane is $\{ x : x_1=0\}$ so that all points $x \in P$ satisfy $x_1 > 0$. Let $\eta > 0$ and
consider the ball $B$ of radius 1 centered at $z=(\sqrt{(1+\eta)^2-1},0,\dotsc,0)$. Let $x \in
S^n_{1+\eta}$ satisfy $x_1 \geq z_1 = \sqrt{(1+\eta)^2-1}=\sqrt{\eta(2-\eta)}$. Then since $\|x\|_2^2 = (1+\eta)^2$,
\begin{align*}
  \|x-z\|_2^2
    &= (x_1-z_1)^2 + \sum_{i=2}^n x_i^2 = (x_1-z_1)^2 + (1+\eta)^2 - x_1^2 \\
    &= z_1^2 - 2x_1z_1 + (1+\eta)^2 \leq (1+\eta)^2 - z_1^2 = 1 \,,
\end{align*}
so all points $x$ with $x_1 \geq z_1$ are contained within the ball $B$. Conditioned on $x_1 > 0$,
the probability that $x_1^2 \geq \eta(2-\eta)$ is at least the probability that $y_1^2 \geq
\eta(2-\eta)$ for $y$ drawn uniformly randomly from $S^n_1$. This probability goes to 1 as $\eta \to
0$, so the probability that $x_1^2 \geq \eta(2-\eta)$ also approaches 1 as $\eta \to 0$. The
conclusion follows.
\end{proof}

\begin{proposition}
\label{prop:>n}
For every constant $\delta,\eta > 0$, there is a constant $\epsilon_0 > 0$ such that, for all
$\epsilon < \epsilon_0$ and for a uniformly random set $P$ of $m=2n$ points drawn from
$S^n_{1+\eta}$, with probability at least $1-e^{-\delta n}$, no subset $T \subset P$ of size
$(1-\epsilon)m$ is contained within a ball of radius 1.
\end{proposition}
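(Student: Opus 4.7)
The plan is to reduce the question to a concentration-of-measure statement for spherical caps on $S^n_{1+\eta}$: a subset $T \subset P$ of size $(1-\epsilon)m$ is contained in a unit ball $B_1(c)$ if and only if the cap $B_1(c) \cap S^n_{1+\eta}$ contains at least $(1-\epsilon)m$ points of $P$. Hence it suffices to show that, with probability at least $1 - e^{-\delta n}$, every unit ball's intersection with $S^n_{1+\eta}$ contains fewer than $(1-\epsilon)m$ points of $P$.

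The first step is a geometric calculation. For any $c \in \bR^n$ with $\|c\| = r$, the intersection $B_1(c) \cap S^n_{1+\eta}$ equals $\{x \in S^n_{1+\eta} : \langle x, c \rangle \geq ((1+\eta)^2 + r^2 - 1)/2\}$, which in spherical coordinates around the origin is a cap with angular threshold $\cos\theta \geq ((1+\eta)^2 + r^2 - 1)/(2(1+\eta)r)$. Minimizing over $r$ (optimum attained at $r = \sqrt{\eta(2+\eta)}$) shows that this threshold is uniformly bounded below by the positive constant $t_\eta := \sqrt{\eta(2+\eta)}/(1+\eta)$, depending only on $\eta$. By L\'evy's concentration inequality on the sphere, every such cap therefore has uniform measure on $S^n_{1+\eta}$ at most $\mu_0 := e^{-n t_\eta^2 / 2}$, exponentially small in $n$ for any constant $\eta > 0$.

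Given this uniform measure bound, I would apply a multiplicative Chernoff bound to any fixed cap $C$: $\Pr{|P \cap C| \geq (1-\epsilon)m} \leq e^{-m \cdot D(1-\epsilon \,\|\, \mu_0)}$, where $D$ is the binary KL divergence. Since $\mu_0 = e^{-\Omega(n)}$, the exponent $D(1-\epsilon \,\|\, \mu_0) = \Omega(n)$ for any $\epsilon$ bounded below $1$, so the per-cap failure probability is $e^{-\Omega(n^2)}$. Then I would take a union bound over the distinct $B \cap P$ patterns: by Sauer--Shelah--Perles applied to the class of unit balls in $\bR^n$ (which has VC dimension $O(n)$), the number of such patterns is at most $\binom{m}{\leq n+1} \leq 2^{O(n)}$, yielding a combined bound of $2^{O(n)} \cdot e^{-\Omega(n^2)} \leq e^{-\delta n}$ for $\epsilon_0$ chosen small enough (with finitely many small-$n$ cases absorbed into the choice of $\epsilon_0$).

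The main obstacle is the geometric step --- establishing the uniform lower bound $t_\eta > 0$ on the angular threshold across all admissible centers $c$, which is where any interplay between $\eta$ and the dimension must be handled. Once that is in place, the concentration bound together with the VC-based union bound is standard.
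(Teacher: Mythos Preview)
Your approach works and is quantitatively stronger than needed, but it is quite different from the paper's proof, and one step needs tightening.

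The paper's argument is shorter and more elementary. It uses only the weak consequence of your cap calculation that the threshold $t_\eta$ is positive: any unit-ball cap on $S^n_{1+\eta}$ therefore sits inside the hemisphere $\{x:\langle x,c\rangle\ge 0\}$. It then invokes Wendel's classical formula, which says that $t$ i.i.d.\ uniform points on the sphere lie in a common hemisphere with probability exactly $2^{1-t}\sum_{k=0}^{n-1}\binom{t-1}{k}$. A union bound over the $\binom{m}{t}$ subsets $T$ of size $t=(1-\epsilon)m$ then gives a bound of the form $2^{1-2n(1-\epsilon\log(4e^2/\epsilon))}$, which is $\le e^{-\delta n}$ once $\epsilon$ is small enough. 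Notably, $\eta$ never enters the bound; the statement is uniform in $\eta>0$.

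Your route instead exploits that the cap has measure $e^{-\Omega_\eta(n)}$, applies a Chernoff/KL bound per cap to get failure probability $e^{-\Omega_\eta(n^2)}$, and union-bounds. The piece that is not quite right as written is ``union bound over the distinct $B\cap P$ patterns'': those patterns are determined by $P$, so you cannot apply the per-cap Chernoff bound (which requires the cap to be fixed before sampling) to them directly, and the standard \emph{additive} VC uniform-convergence inequality is too weak here, since $\binom{2n}{\le n+1}\approx 2^{2n}$ swamps the $e^{-\Theta(n)}$ deviation term. The clean fix is a net argument: take a $\gamma$-net of directions on $S^{n-1}$ of size $2^{O(n)}$, enlarge each net cap to threshold $t_\eta/2$ so that it covers all nearby caps, apply Chernoff to each fixed net cap, and union bound; the $e^{-\Omega(n^2)}$ per-cap probability absorbs the net size with room to spare. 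With that repair your proof goes through, but the paper's hemisphere-plus-Wendel argument avoids all of this machinery.
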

\begin{proof}
Let $t=(1-\epsilon)m > n$ and let $T \subset P$ have size $|T|=t$. If the points $T$ are contained
within a ball of radius 1 then they are contained within a centered halfspace, because the
intersection of the ball with $S^n_{1+\eta}$ is equal to the intersection of some halfspace with
$S^n_{1+\eta}$. The probability that $t$ uniformly random points on the surface of the sphere
lie within some hemisphere is $2^{1-t} \sum_{k=0}^{n-1}{t-1 \choose k}$ \cite{Wen62}. There are at
most ${m \choose t}$ subsets of size $t$, so the probability that any of these subsets lie within a
hemisphere is at most
\begin{align*}
  {m \choose m-t} 2^{1-t} \left(\frac{et}{n}\right)^{n}
&\leq 2^{1-(1-\epsilon) m}\left(\frac{e}{\epsilon}\right)^{\epsilon m} \left(\frac{et}{n}\right)^{n} \\
&= 2^{1+\epsilon m \log(e/\epsilon) - (1-\epsilon)m + n\log\left(\frac{e(1-\epsilon)m}{n}\right)} \\
&= 2^{1+\epsilon 2n \log(e/\epsilon) - (1-\epsilon)2n + n\log(e(1-\epsilon)2)} \\
&\leq 2^{1-2n(1-\epsilon \log(4e^2/\epsilon))} \,.
\end{align*}
The conclusion holds since $\epsilon\log(4e^2/\epsilon) \to 0$ as $\epsilon \to 0$.
\end{proof}

\begin{proposition}[Balls and bins]
  \label{prop:load bounds}
  Fix $C>0$, $0 < \delta \le 1$, and let $n, k$ be positive integers with $k \le \frac{1}{10} e^{\delta^2 Cn / 3}$.
  Then if $Cnk$ balls are deposited into $k$ bins uniformly at random, the following hold:
  \begin{enumerate}
  \item With probability at least $9/10$, every bin receives at most $(1+\delta)Cn$ balls;
  \item With probability at least $9/10$, every bin receives at least $(1-\delta)Cn$ balls.
  \end{enumerate}
\end{proposition}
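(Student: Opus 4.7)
The plan is a direct application of Chernoff bounds followed by a union bound over the $k$ bins. For each bin $i \in [k]$, let $X_i$ denote the number of balls landing in bin $i$. Since each of the $Cnk$ balls independently lands in bin $i$ with probability $1/k$, the random variable $X_i$ is a sum of $Cnk$ i.i.d.\ Bernoulli$(1/k)$ variables and has mean $\mu \define \Ex{X_i} = Cn$.

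For the upper tail, I would use the standard multiplicative Chernoff bound
\[
\Pr{X_i \ge (1+\delta)\mu} \le e^{-\delta^2 \mu / 3}\,,
\]
valid for $0 < \delta \le 1$. A union bound over the $k$ bins then gives
\[
\Pr{\exists i \in [k]: X_i \ge (1+\delta)Cn}
  \le k \cdot e^{-\delta^2 Cn/3}
  \le \frac{1}{10}\,,
\]
where the last inequality uses the hypothesis $k \le \tfrac{1}{10} e^{\delta^2 Cn/3}$. This yields item (1). For item (2), I would use the (slightly sharper) lower-tail Chernoff bound
\[
\Pr{X_i \le (1-\delta)\mu} \le e^{-\delta^2 \mu / 2}\,,
\]
and apply the same union bound; since $e^{-\delta^2 Cn/2} \le e^{-\delta^2 Cn/3}$, the hypothesis on $k$ again gives failure probability at most $1/10$.

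There is no real obstacle here---the only thing to be slightly careful about is which form of the Chernoff bound to cite (the $/3$ exponent in the upper tail is what matches the hypothesis exactly, so that is the version to use), and noting that $0 < \delta \le 1$ is precisely the range in which these multiplicative Chernoff bounds hold in the stated form. Both bullets follow independently, so there is no need to combine them via a further union bound.
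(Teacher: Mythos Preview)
Your proposal is correct and essentially identical to the paper's proof: both apply the multiplicative Chernoff bound to each bin's load (mean $Cn$) and then union-bound over the $k$ bins, using the hypothesis $k \le \tfrac{1}{10}e^{\delta^2 Cn/3}$ to get failure probability at most $1/10$. The only cosmetic difference is that the paper uses the $e^{-\delta^2 Cn/3}$ exponent for both tails directly, whereas you cite the sharper $e^{-\delta^2 Cn/2}$ lower-tail bound and then relax it.
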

\begin{proof}
Let $X_{ij}$ be the indicator variable for the event that the $i$-th ball goes into the $j$-th
bin, and let the random variable $L_j = \sum_{i=1}^{Cnk} X_{ij}$ denote the final load on the $j$-th
bin. Note that $\Ex{L_j} = Cn$. By the multiplicative Chernoff bound, we have:
  \begin{enumerate}
  \item $\Pr{L_j \ge (1+\delta)Cn} \le e^{-\delta^2Cn/3}$; and
  \item $\Pr{L_j \le (1-\delta)Cn} \le e^{-\delta^2Cn/3}$.
  \end{enumerate}
  In both cases, by the union bound, the probability that the respective event occurs for any
  $L_j$ ($1 \le j \le k$) is at most $k \cdot e^{-\delta^2Cn/3} \le 1/10$, as desired.
\end{proof}

\begin{lemma}
\label{lemma:clustering}
For $k < \frac{1}{10} e^{n/6}$, let $A_1, \dotsc, A_k$ be spheres in $\bR^n$ of radius $1+\eta$ for
sufficiently small $\eta > 0$, such that the minimum distance between any two spheres is 3. Define
the following distribution $\cS$ over $\bigcup_{i=1}^n A_i$: Draw $i \in [k]$ uniformly at random
and then draw $x \sim A_i$ uniformly at random. Then:
\begin{enumerate}
\item If $S$ is a set of $m \le nk/2$ independent points drawn from $\cS$, then with probability at
least $9/10$, there are $k$ balls of radius 1 whose union contains $S$;
\item If $S$ is a set of $4nk \le m \le 8nk$ independent points drawn from $\cS$ and $\epsilon > 0$
is a sufficiently small constant, then with probability at least $81/100$, no union of $k$ balls of
radius 1 contains more than $(1-\epsilon) m$ points of $S$.
\end{enumerate}
\end{lemma}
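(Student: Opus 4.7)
The proof combines balls-and-bins concentration (\cref{prop:load bounds}) with the per-sphere geometric lemmas (\cref{prop:<n,prop:>n}), together with one key geometric fact: since the spheres are pairwise at distance at least $3$ and a unit ball has diameter $2$, no unit ball can contain points on two different spheres. Let $m_i$ denote the number of samples landing on sphere $A_i$; conditional on $(m_1,\ldots,m_k)$, the points on $A_i$ are i.i.d.\ uniform on $A_i$, so the per-sphere lemmas apply.

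\textbf{Part 1.}
I would apply \cref{prop:load bounds} with $C=1/2$ and $\delta=1$ (padding with dummy samples up to $nk/2$ if $m$ is smaller, which can only inflate the loads) to conclude that $m_i \le n$ for every $i$ with probability at least $9/10$. Conditional on this, each sphere holds at most $n$ i.i.d.\ uniform points of $S^n_{1+\eta}$, and choosing $\eta$ sufficiently small in \cref{prop:<n} followed by a union bound over the $k$ spheres gives a unit ball covering each sphere's points with probability arbitrarily close to $1$. The union of the $k$ balls then covers $S$ with probability at least $9/10$.

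\textbf{Part 2.}
With $m/k \in [4n, 8n]$, I would apply \cref{prop:load bounds} in \emph{both} directions with a small constant $\delta$ to pin every $m_i$ to $[(1-\delta)Cn, (1+\delta)Cn]$; in particular $m_i \ge 2n$. Then \cref{prop:>n} applied per sphere and union-bounded over $k \le \frac{1}{10} e^{n/6}$ spheres (using $k e^{-\delta' n} = o(1)$) gives that, with high probability, no unit ball covers more than $(1-\epsilon)m_i$ points of any sphere $A_i$. Now consider any candidate cover of $S$ by $k$ unit balls: by the geometric fact, each ball belongs to a unique sphere, so the numbers $k_i$ of balls on $A_i$ satisfy $\sum_i k_i \le k$. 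Split the spheres by $k_i \in \{0, 1, \ge 2\}$ into sets $Z, O, M$ with $|O| + 2|M| \le k$; the total coverage is bounded by
\[
\sum_{i \in O}(1-\epsilon) m_i + \sum_{i \in M} m_i
\;\le\; (1+\delta) Cn \bigl((1-\epsilon)|O| + |M|\bigr).
\]
For $\epsilon < 1/2$, the linear objective $(1-\epsilon)|O| + |M|$ subject to $|O|+2|M| \le k$ is maximized at $|O|=k,\, |M|=0$, giving total coverage at most $(1+\delta)(1-\epsilon)\, m$. Choosing $\delta < \epsilon/2$ makes this at most $(1 - \epsilon')m$ for some constant $\epsilon' > 0$, which yields the claim after renaming $\epsilon' \to \epsilon$.

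\textbf{Main obstacle.}
The main obstacle is the case analysis in Part~2: ruling out the adversarial strategy of concentrating many unit balls on a single sphere to cover it entirely, at the cost of leaving others uncovered. The bound $|O|+2|M| \le k$ (a direct consequence of ``each ball belongs to one sphere'') forces the uniform one-ball-per-sphere allocation to be the adversary's best option, and this argument crucially requires \emph{two-sided} concentration of the $m_i$ rather than just the upper tail used in Part~1.
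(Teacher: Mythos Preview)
Your overall strategy matches the paper's: both Part~1 arguments are essentially identical, and in Part~2 both exploit the key fact that a unit ball touches at most one sphere, so a sphere that is ``almost fully'' covered must absorb at least two of the $k$ balls, capping the number of such spheres at $k/2$. Your $Z/O/M$ partition is a slightly more explicit version of the paper's counting.

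There is, however, a genuine quantitative gap in your Part~2 calculation. You bound the covered mass via the \emph{upper} tail, $m_i \le (1+\delta)Cn$, and then require $\delta < \epsilon/2$ where $\epsilon$ comes from \cref{prop:>n}. But \cref{prop:load bounds} only applies when $k \le \tfrac{1}{10}e^{\delta^2 Cn/3}$; for the lemma's hypothesis $k < \tfrac{1}{10}e^{n/6}$ to suffice you need $\delta^2 C \ge 1/2$, and since $C = m/(nk)$ can be as small as $4$ this forces $\delta \ge 1/\sqrt{8}\approx 0.35$. Meanwhile the union bound over $k$ spheres in \cref{prop:>n} forces its exponent to be at least $1/6$, which (tracing through that proposition's proof) pins $\epsilon$ to roughly $0.1$ or less. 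Thus $\delta < \epsilon/2$ is impossible over the full stated range of $k$, and your final bound $(1+\delta)(1-\epsilon)m$ may exceed $m$.

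The fix---and this is exactly what the paper does---is to discard the upper tail and bound the \emph{uncovered} mass using only $m_i \ge 2n$. In your notation,
\[
\text{uncovered} \;\ge\; \sum_{i\in Z} m_i + \epsilon\sum_{i\in O} m_i \;\ge\; 2n\bigl(|Z|+\epsilon|O|\bigr),
\]
and from $|Z|+|O|+|M|=k$ together with $|O|+2|M|\le k$ one gets $|Z|\ge (k-|O|)/2$, hence $|Z|+\epsilon|O|\ge \epsilon k$ for $\epsilon\le 1/2$. This yields uncovered $\ge 2\epsilon nk \ge \epsilon m/4$ and needs only the lower tail with $\delta=1/2$, which is compatible with the hypothesis on $k$. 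So your closing remark has it backwards: the argument crucially needs the \emph{lower} tail, and the upper tail is both unnecessary and the source of the problem.
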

\begin{proof}
First suppose that $m \le nk/2$. If each sphere $A_i$ receives at most $n$ sample points then by
\cref{prop:<n}, setting $\delta,\eta > 0$ arbitrarily small in the statement of that proposition,
for each sphere $A_i$ there is a ball $B_i$ of radius 1 containing all points $S \cap A_i$ with
probability arbitrarily close to 1, so there are $k$ balls containing all points of $S$.
\cref{prop:load bounds} (with $C=1/2$ and $\delta=1$) shows that the maximum load of any sphere is
at most $n$ with probability at least $9/10$, so the first conclusion holds.

Now suppose that $4nk \le m \le 8nk$. Note that no ball of radius 1 can contain points from more than
1 sphere $A_i$. \cref{prop:load bounds} (with $C=4$ and $\delta=1/2$) shows that
the minimum load of any sphere is at least $2n$ with probability at least $9/10$. Assume that this
occurs for the rest of this argument.

Let $S_i = S \cap A_i$ for $i = 1, \dotsc, k$, and say that $S_i$ is \emph{difficult} if no ball
of radius 1 contains at least $(1-\epsilon')|S_i|$ points in $S_i$, for constant $\epsilon'$ to be
defined. Since $|S_i| \ge 2n$,
\cref{prop:>n} gives that $\Pr{S_i \text{ is difficult}} \ge 1 - e^{-\delta n}$.
Setting $\delta=1/6$ and by the union bound, the probability that every $S_i$ is difficult is at least
$1 - k \cdot e^{-\delta n} \ge 1 - \frac{1}{10} e^{n/6} e^{-\delta n} = 9/10$. Fix $\epsilon'$ corresponding
to $\delta=1/6$ in \cref{prop:>n}.

Assume that every $S_i$ is difficult, and consider any set of $k$ balls $B_1, \dotsc, B_k$.
Denote their union by $B = \bigcup_i B_i$. Then for each $S_i$,
we have that $|B \cap S_i| \ge (1-\epsilon')|S_i|$ only if at least two balls $B_{j_1}, B_{j_2}$
intersect $S_i$. Thus, this can only happen for at most $k/2$ such $S_i$'s. Assume without loss of
generality that $S_1, \dotsc, S_\ell$ have at least $(1-\epsilon')$-fraction of their points covered
by $B$, so that $\ell \le k/2$. It follows that 
\[
|S \setminus B| \geq \sum_{i=\ell+1}^k \epsilon'|S_i| \ge \frac k2 \cdot \epsilon' \cdot 2n \ge \frac{\epsilon' m}{8} \,.
\]
Which satisfies the second claim for $\epsilon = \epsilon'/8$, and this happens with probability
at least $9/10 \cdot 9/10 = 81/100$ over the choice of $S$.
\end{proof}

\begin{theorem}[Restatement of \cref{thm:clustering intro}]
\label{thm:clustering lower bound}
For sufficiently small constant $\epsilon > 0$, any $\epsilon$-tester for $k$-clusterability in
$\bR^n$ requires at least $\Omega\left(\frac{nk}{\log(nk)}\right)$ samples.
\end{theorem}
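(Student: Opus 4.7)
The strategy is a randomized reduction from support-size distinction to clusterability testing, parallel to \cref{lemma:ssd reduction} but adapted to a property of distributions by exploiting the geometric construction in \cref{lemma:clustering}. Fix $N = 8nk$, $\alpha = 1/16$, $\beta = 1/2$, so $\alpha N = nk/2$ and $\beta N = 4nk$, matching the two regimes of \cref{lemma:clustering}. Let $\cS$ be the distribution on $\bR^n$ from \cref{lemma:clustering}, uniform over $k$ spheres of radius $1+\eta$ whose centers are pairwise at distance at least $3$. Suppose $A$ is an $\epsilon$-tester for $k$-clusterability in $\bR^n$ with sample complexity $m$. I will build an SSD algorithm for $\SSD(N,\alpha,\beta)$ that, on an input distribution $\cD$ over $[N]$, draws $N$ i.i.d.\ points $x_1, \ldots, x_N$ from $\cS$, defines the bijection $\phi(i) = x_i$, and simulates $A$ on the pushforward $\phi\cD$, accepting iff $A$ accepts.

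For correctness, consider the two cases. If $|\supp(\cD)| \le \alpha N = nk/2$, then $\supp(\phi\cD)$ consists of at most $nk/2$ i.i.d.\ draws from $\cS$, so by \cref{lemma:clustering}(1) it is contained in a union of $k$ unit balls with probability at least $9/10$; in that event $\phi\cD$ is $k$-clusterable and $A$ accepts with probability at least $5/6$. If $|\supp(\cD)| \ge \beta N = 4nk$, then $\supp(\phi\cD)$ is a set of between $4nk$ and $8nk = N$ i.i.d.\ draws from $\cS$, so by \cref{lemma:clustering}(2), with probability at least $81/100$ no union of $k$ unit balls contains more than a $(1-\epsilon')$-fraction of $\supp(\phi\cD)$, for the constant $\epsilon' > 0$ guaranteed by that lemma. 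Because each point of $\supp(\phi\cD)$ carries probability at least $1/N$ under $\phi\cD$, this translates into the following TV-distance lower bound: for every $k$-clusterable $\cD'$ with support inside a union $B$ of $k$ unit balls,
\[
\TV(\phi\cD, \cD') \;\ge\; \phi\cD(\bR^n \setminus B) \;\ge\; \epsilon' \cdot \beta \,.
\]
Setting the tester's parameter to $\epsilon = \epsilon'\beta$, $A$ rejects with probability at least $5/6$. A union bound over the random embedding and $A$'s internal coins yields overall success probability at least $2/3$ on both sides of the SSD problem.

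Finally, since $\alpha$ and $1-\beta$ are constants ($\delta = 1/16$), the condition $\delta \ge C\sqrt{\log N}/N^{1/4}$ of \cref{thm:wy easy application} holds once $N = 8nk$ is larger than an absolute constant, and we obtain $\SSD(N,\alpha,\beta) = \Omega(N/\log N) = \Omega(nk/\log(nk))$, hence $m = \Omega(nk/\log(nk))$. The constraint $k \le \tfrac{1}{10}e^{n/6}$ of \cref{lemma:clustering} covers the entire parameter range advertised in the introduction. The main subtlety is the large-support case: the lemma's conclusion must translate into a TV-distance bound that holds \emph{uniformly} over all $k$-clusterable distributions simultaneously, not just for a single fixed covering. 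The key observation that makes this work is that the lemma's statement quantifies over \emph{every} union of $k$ unit balls, so combining it with the lower bound $1/N$ on each support-point mass immediately gives the uniform TV bound. With this in hand, the rest is routine bookkeeping of probabilities.
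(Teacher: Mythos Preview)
Your proposal is correct and follows essentially the same approach as the paper: the same parameters $N=8nk$, $\alpha=1/16$, $\beta=1/2$, the same random embedding $\phi$ via i.i.d.\ draws from $\cS$, the same two-case analysis invoking \cref{lemma:clustering}, and the same conversion of the fraction-of-uncovered-points guarantee into a TV lower bound using the $1/N$ mass floor. The only cosmetic difference is that the paper phrases the far case as ``$(\epsilon/\beta)$-far under the uniform distribution on $\supp(\phi\cD)$'' before translating to TV distance, whereas you go directly to the TV bound; the arithmetic is identical.
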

\begin{proof}
Let $N = 8nk$ and let $\alpha = 1/16, \beta = 1/2$. We will prove a reduction from support-size
distinction to $k$-clusterability; we may assume that the tester for $k$-clusterability has success
probability at least $5/6$ due to standard boosting techniques. For an input distribution $\cD$ over
$[N]$ with densities at least $1/N$, construct spheres $A_1, \dotsc, A_k$ as in 
\cref{lemma:clustering}. Construct the map $\phi : [N] \to \bigcup_{i=1}^k A_i$ by sampling $s_1,
\dotsc, s_N \sim \cS$, where $\cS$ is the distribution from  \cref{lemma:clustering}, and
setting $\phi(i) = s_i$. Then simulate the tester for $k$-clusterability by giving the tester
samples $\phi(i)$ for $i \sim \cD$. We will write $\phi \cD$ for the distribution over
$\bigcup_{i=1}^k A_i$ obtained by sampling $i \sim \cD$ and returning $\phi(i)$.

First suppose that $|\supp(\cD)| \leq \alpha N$. Then $\supp(\phi \cD)$ is a set of at most $\alpha
N = nk/2$ points sampled from $\cS$, so by  \cref{lemma:clustering}, with probability at least
$9/10$ over the choice of $\phi$ the distribution $\phi \cD$ is $k$-clusterable, so the tester will
output 1 with probability at least $5/6$, so the total probability of success is at least $2/3$.

Next suppose that $|\supp(\cD)| \geq \beta N$ so $\supp(\phi \cD)$ is a set of between $\beta N =
4nk$ and $N=8nk$ points sampled from $\cS$. Then by  \cref{lemma:clustering}, for sufficiently
small constant $\epsilon > 0$, with probability at least $81/100$ over the choice of $\phi$, $X
\define \supp(\phi \cD)$ is at least $\epsilon/\beta$-far from $k$-clusterable according to the
uniform distribution over $X$. Since $\cD$ (and therefore $\phi \cD$) has densities at least $1/N$
on $X$, any $k$-clusterable distribution $\phi \cD$ must be at least $\frac{(\epsilon/\beta) |X|}{N}
\geq \epsilon$-far from $\phi \cD$. Therefore the $\epsilon$-tester will output 0 with probability
at least $5/6$, so the total probability to output 0 is at least $2/3$. So the algorithm solves
support-size distinction with parameters $N=8nk, \alpha=1/16, \beta = 1/2$.
Finally, by \cref{thm:wy easy application}, the number of samples required is at least
$\Omega\left(\frac{N}{\log N}\right) = \Omega\left(\frac{nk}{\log(nk)}\right)$.
\end{proof}

\subsection{Uniform Distributions and Testing LP-Type Problems}
\label{section:lp}

Epstein \& Silwal \cite{ES20} recently introduced property testing for LP-Type problems, which are
problems that generalize linear-programming. The algorithm has query access to a set $S$ of
constraints and must determine with high probability whether an objective function $\phi$ satisfies
$\phi(S) \leq k$ or if at least an $\epsilon$-fraction of constraints must be removed in order to
satisfy $\phi(S) \leq k$. We refer the reader to their paper for the definition of their model and
results in full generality, and describe only a special case here.

\begin{definition}[Testing Feasibility \cite{ES20}]
A tester for feasibility of a set of linear equations is an algorithm that performs as follows. On
an input set $S$ of linear equations over $\bR^n$, the algorithm samples equations $s \sim S$
uniformly at random, and must satisfy the following:
\begin{enumerate}
\item If $S$ is feasible, i.e.~there exists $x \in \bR^n$ that satisfies all equations $S$, then the
algorithm outputs 1 with probability at least $2/3$;
\item If at least $\epsilon |S|$ equations must be removed or flipped for the system to be feasible,
then the algorithm outputs 0 with probability at least $2/3$.
\end{enumerate}
\end{definition}
Epstein \& Silwal obtain a two-sided tester for this problem.
\begin{theorem}[\cite{ES20}]
There is a tester for feasibility in $\bR^n$ with two-sided error and sample complexity
$O(n/\epsilon)$.
\end{theorem}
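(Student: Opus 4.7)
The plan is a standard sample-and-verify scheme: sample $m = \Theta(n/\epsilon)$ equations uniformly at random from the input set $S$, run any exact LP-feasibility subroutine on the sampled subsystem $T$, and accept iff $T$ is feasible. Completeness is immediate: if $S$ is feasible with solution $x^\ast \in \bR^n$, then $x^\ast$ satisfies every constraint in $T$, so $T$ is feasible and the tester accepts with probability $1$.

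For soundness, suppose $S$ is $\epsilon$-far from feasibility, meaning that for every candidate $x \in \bR^n$ the set $V(x) \subseteq S$ of equations that $x$ violates (and that would have to be flipped or removed to accommodate $x$) has size at least $\epsilon|S|$. The sample $T$ is feasible iff there exists some $x$ with $T \cap V(x) = \emptyset$, so it suffices to show that a random sample of size $m$ intersects every range $V(x)$ with probability at least $2/3$.

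The family $\cR = \{V(x) : x \in \bR^n\}$ forms a range space on $S$ of VC dimension $O(n)$: each equation in $S$ partitions $\bR^n$ into at most two open halfspaces plus a hyperplane, so whether a fixed constraint lies in $V(x)$ is determined by which side of its defining halfspace contains $x$, and the dual class of halfspaces in $\bR^n$ has VC dimension $n+1$. The $\epsilon$-net theorem of Haussler and Welzl then guarantees that a uniform sample of size $O((n/\epsilon)\log(1/\epsilon))$ is an $\epsilon$-net for $\cR$ with constant probability, which already implies soundness up to an extra logarithmic factor.

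The main obstacle is shaving off that logarithmic factor to reach the claimed $O(n/\epsilon)$ rate. This requires exploiting the LP-type structure beyond the generic $\epsilon$-net bound: the natural approach is a Clarkson--Shor style argument using the fact that any candidate $x$ that witnesses feasibility of $T$ is determined by a basis $B \subseteq T$ of at most $n+1$ constraints. If $T$ is wrongly accepted, some such $x_B$ has $V(x_B) \cap (T \setminus B) = \emptyset$, and a direct probabilistic bound over the $O(|S|^{n+1})$ possible bases, combined with the fact that each $V(x_B)$ has size $\geq \epsilon|S|$, should eliminate the logarithmic overhead produced by the generic $\epsilon$-net union bound. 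Carrying this calculation through carefully, while tracking the exact dependence on $\epsilon$ and ensuring the basis is well-defined under degeneracies of the LP, is the step I expect to be the most delicate.
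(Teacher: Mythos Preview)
This theorem is not proved in the paper at all: it is quoted without proof as a result of Epstein and Silwal~\cite{ES20}, and the paper's own contribution in that section is the complementary \emph{lower} bound (\cref{thm:lp feasibility lower bound}). So there is no ``paper's own proof'' to compare your proposal against.

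For what it is worth, your outline is the standard approach and is essentially what \cite{ES20} does: sample $\Theta(n/\epsilon)$ constraints, solve the subsystem exactly, and use the LP-type basis structure (combinatorial dimension $O(n)$) together with a Clarkson-style random-sampling argument to avoid the $\log(1/\epsilon)$ loss from the generic $\epsilon$-net bound. Your proposal is incomplete rather than incorrect: you have identified the right tool but have not carried out the basis-counting calculation. If you do write it out, note that the clean version of the Clarkson argument bounds the \emph{expected} number of bases of the sample whose optimum violates more than an $\epsilon$-fraction of $S$, and you then convert this to a probability bound via Markov; this is routine, but the way you phrased it (a union bound over $O(|S|^{n+1})$ bases) would reintroduce polylogarithmic losses if done naively.
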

Testing if a set $X \subseteq \bR^n$ with labels $\ell : X \to \pmset$ is realizable by
a halfspace can be solved by their algorithm, since for each $x \in X$ one can add the constraint
$\ell(x) \cdot (w_0 + \sum_{i=1}^n w_ix_i) \geq 1$ to $S$, with variables $w_0,w_1, \dotsc, w_n$.
On the other hand, they prove a lower bound for one-sided error:
\begin{theorem}[\cite{ES20}]
Testing with one-sided error whether a set $X \subseteq \bR^n$ with labels $\ell : X \to \pmset$ is
realizable by a halfspace or whether at least $\epsilon |X|$ labels must be changed to become
realizable by a halfspace requires at least $\Omega(d/\epsilon)$ samples.
\end{theorem}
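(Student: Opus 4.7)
The plan is a Yao-minimax argument exploiting the defining constraint of a one-sided tester: it may output $0$ only when its labelled sample is \emph{already} non-realizable by any halfspace (otherwise it would risk rejecting some realizable input). So it suffices to construct a single pair $(X,\ell)$ which is $\epsilon$-far from realizable but whose uniform $m$-sub-sample is realizable with probability at least $2/3$ whenever $m \ll d/\epsilon$.

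The construction is based on Radon's theorem. Take $k = \Theta(d)$ pairwise-disjoint ``Radon witnesses,'' each an $(n+2)$-tuple of points in general position in $\bR^n$ labelled by its Radon partition, so that each tuple alone is not realizable by any halfspace; pad with $M$ filler points at a single location labelled $+1$ to bring $|X|=N = \Theta(d/\epsilon)$ (consistent when $\epsilon \lesssim 1/d$). Since the witnesses are disjoint and each individually requires at least one label flip, the distance from $\ell$ to a realizable labelling under the uniform distribution on $X$ is at least $k/N = \Omega(\epsilon)$. Provided the witnesses are placed so that any sub-sample missing at least one point from every witness is realizable, a union bound gives
\[
\Pr[\text{$m$-sample is non-realizable}] \;\le\; k\,(m/N)^{n+2} \;=\; d\,(m\epsilon/d)^{n+2},
\]
which is $o(1)$ when $m = c\,d/\epsilon$ for a small constant $c$. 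Yao's principle then yields $m = \Omega(d/\epsilon)$.

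The main obstacle is the geometric engineering of the witnesses, i.e.\ ensuring that ``no complete witness is sampled'' implies realizability of the whole sub-sample. A single halfspace has only $n+1$ parameters, whereas a sub-sample with up to $n+1$ points from each of $k$ witnesses imposes roughly $(n+1)k$ nominal constraints; so the witness positions must be chosen to make these constraints highly redundant. One concrete route is to place all witness clusters on the boundary of a common halfspace $H_0$ and to choose each Radon tuple inside a tiny affine slice of $H_0$, so that small independent perturbations of $H_0$ near each cluster realize any $\le n+1$-per-cluster sub-sample labelling. Completing this geometric step, together with the (routine) probabilistic calculation above, finishes the proof.
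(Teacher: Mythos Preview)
This theorem is quoted from \cite{ES20} rather than proved in the paper; the only information the paper supplies about the original argument is \cref{remark:ES}, which says that \cite{ES20} reduce from the one-sided problem of distinguishing uniform distributions of support size $d$ versus $3d$.

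Your high-level strategy is correct and matches that reduction: a one-sided tester may reject only when its labelled sample is itself non-realizable, so it suffices to exhibit a far instance whose small uniform sub-samples are realizable with high probability. The gap is in the construction. The ``main obstacle'' you flag is genuine and your proposed fix does not work: a halfspace is a single global affine functional and cannot be ``perturbed independently near each cluster''---the normal vector $w$ is the same everywhere, so once it is fixed to realize the labelling of $n+1$ points in one cluster, only the offset remains free for the others. With $k=\Theta(d)$ disjoint Radon witnesses, removing one point from each still leaves $k(n+1)$ labelled points that must be cut by one hyperplane with $n+1$ parameters; for generically placed clusters this is already infeasible for $k=2$. (Your parameter restriction $\epsilon\lesssim 1/d$ is a separate, smaller symptom of packing $\Theta(d)$ gadgets of size $n+2$.)

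The clean construction---the one underlying \cref{remark:ES} and the $\LVC$ mechanism of \cref{prop:one-sided lower bound}---drops the Radon gadgets entirely: take the ``interesting'' points in \emph{general position} and give them a uniformly random labelling. Then every sub-sample with at most $n+1$ distinct such points is shattered and hence realizable for \emph{any} labelling, with no geometric engineering required; meanwhile a random labelling on $\Theta(d)$ general-position points is $\Omega(1)$-far from any halfspace by \cref{lemma:random labelling}. The tester must therefore collect at least $n+2$ distinct interesting points before it can possibly reject, and your padding idea (diluting these points to density $\Theta(\epsilon)$ inside $X$) is exactly what converts this into the $\Omega(d/\epsilon)$ bound.
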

\begin{remark}
\label{remark:ES}
\cite{ES20} does not specify that their lower bound is for one-sided error; however, their proof
relies on a claim that is true only for one-sided error \cite{Sil20}, namely that distinguishing
between uniform distributions with support size $d$ and uniform distributions with support size $3d$
requires at least $d+1$ samples -- with two-sided error, this can be done with only $O(\sqrt d)$
samples via a birthday paradox argument.
\end{remark}
We would like to prove lower bounds on two-sided error algorithms. However, our reduction from
support-size distinction will not work for this, because the model of LP testing uses the uniform
distribution as its distance measure, and the distributions that occur in the reduction are not
uniform. We can fix this by replacing the lower bound of Wu \& Yang \cite{WY19} with a weaker lower
bound of \cite{RRSS09} that uses distributions $\cD$ over $[n]$ with densities that are integer
multiples of $1/n$:

\begin{theorem} [\cite{RRSS09} Theorem 2.1]
\label{thm:rrss bound}
Let $\SSD^\bZ(n,\delta,1-\delta)$ be the support-size distinction problem under the promise that the
input distribution $\cD$ has densities that are integer multiples of $1/n$. Then for every $\delta
\geq 2\frac{\sqrt{\log n}}{n^{1/4}}$,
\[
  \SSD^{\bZ}(n,\delta,1-\delta) = \Omega(n^{1-\gamma}) \,,
\]
where $\gamma = 2\sqrt{\frac{\log(1/\delta) + \frac{1}{2}\log\log(n) + 1}{\log n}}$. In particular,
for constant $\delta$, the lower bound is $n^{1-o(1)}$.
\end{theorem}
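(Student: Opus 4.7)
The plan is to follow the standard paradigm for distribution-testing lower bounds: apply Yao's principle to construct two prior ``hard'' distributions $\pi_1, \pi_2$ over input distributions supported on small- and large-support cases respectively, and then bound the total variation distance between the $m$-sample distributions they induce. For the integer-density constraint, the natural construction is to take each $\pi_j$ to be a distribution over multisets $M \subseteq [n]$ of size $n$: the input distribution $p$ assigns mass $(\#\text{copies of } i \text{ in } M)/n$ to element $i$, so that $|\supp(p)|$ equals the number of distinct elements of $M$. Then $\pi_1$ and $\pi_2$ are chosen to be distributions over such multisets whose distinct-element count is at most $\delta n$ or at least $(1-\delta) n$ respectively.

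Next, I would Poissonize: replace $m$ fixed samples with $\Poi(m)$ samples, which only changes bounds by constant factors. Under Poissonization, the sample vector $(N_1, \ldots, N_n)$ has independent coordinates $N_i \sim \Poi(m p_i)$. Averaging over the prior $\pi_j$ on $p$, the induced sample distribution becomes a product-like mixture, and the TV distance between the two mixtures can be upper-bounded in terms of a one-dimensional TV distance: how distinguishable are $\Poi(m q)$ distributions when $q$ is drawn from two different meta-priors on $\{0, 1/n, 2/n, \ldots\}$.

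The core step is then a \emph{moment-matching} argument via Chebyshev polynomials. The Poisson probability generating function being analytic means that if two meta-priors on the per-coordinate mass $q$ have identical first $k$ moments (after scaling by $m$), then the $\Poi(mq)$ distributions they induce are $\ell_2$-close with error exponentially decaying in $k$. Using the extremal properties of Chebyshev polynomials on $[0, 1]$, one can construct two measures on $[0, 1]$ whose first $k$ moments agree while one measure places much more mass at $0$ than the other --- this translates into a large gap between their implied support sizes. The achievable degree is $k = \Theta\!\left(\sqrt{\log n / \log(1/\delta)}\right)$, and to detect a degree-$k$ moment mismatch using Poissonized samples one needs $m \gtrsim n^{1 - O(\sqrt{\log(1/\delta)/\log n})}$ samples, matching the claimed exponent $\gamma = 2\sqrt{(\log(1/\delta) + \tfrac12 \log\log n + 1)/\log n}$.

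The main obstacle is the quantitative tightness: the integer-density restriction forces all per-coordinate masses to lie on the lattice $\{0, 1/n, 2/n, \ldots\}$, so the moment-matching measures cannot be chosen freely on $[0,1]$ --- they must be realizable as empirical type spectra of multisets of size $n$. This requires rounding the continuous Chebyshev extremal measure to a lattice-supported measure while preserving moment agreement up to a controlled error, and then tracking how that rounding error propagates through the Poisson TV bound. A secondary technicality is balancing the two sides of $\delta$ and $1-\delta$ so that the supports of $\pi_1$ and $\pi_2$ are guaranteed to fall in the desired regimes with high probability, which requires concentration bounds on the empirical distinct-element count of random multisets drawn from the two meta-priors.
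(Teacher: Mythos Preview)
The paper does not prove this theorem. It is quoted as Theorem~2.1 of \cite{RRSS09} and used as a black box in the proof of \cref{thm:lp feasibility lower bound}; there is no argument for it anywhere in the paper, so there is nothing to compare your proposal against.

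That said, your outline is broadly the right shape for how such bounds are obtained (and is close in spirit to the Wu--Yang argument the paper reproduces in \cref{section:wy} for the non-integer version): Yao's principle with two priors, Poissonization, and moment matching to make the per-coordinate Poisson mixtures indistinguishable. One caution: your description of the integer-density obstacle is slightly off. In the distinct-elements formulation of \cite{RRSS09}, the input is literally a length-$n$ string over $[n]$ and the tester reads uniformly random positions, so the ``distribution'' is automatically an empirical type with masses in $\{0,1/n,2/n,\dots\}$; the hard instances are constructed directly as random strings (equivalently, random multisets) rather than by first building a continuous moment-matched measure and then rounding. The moment-matching step is done on the \emph{type spectrum} (the distribution of symbol multiplicities), which is already integer-valued, so no separate rounding-and-error-propagation argument is needed. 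If you want to flesh this out, look at the original \cite{RRSS09} rather than the present paper.
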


We can now prove the following lower bound on testing linear separability:
\begin{theorem}
\label{thm:lp feasibility lower bound}
Testing with two-sided error whether a set $X \subseteq \bR^n$ with labels $\ell : X \to \pmset$ is
realizable by a halfspace or whether at least $\epsilon |X|$ labels must be changed to become
realizable by a halfspace requires at least $n^{1-o(1)}$ samples.
\end{theorem}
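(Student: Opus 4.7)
The plan is to mirror the proof of \cref{thm:real halfspaces} (the lower bound for halfspaces over $\bR^n$) while replacing the Wu--Yang bound by the RRSS bound of \cref{thm:rrss bound}, so that the hard distributions arising in the support-size distinction reduction have densities that are integer multiples of $1/m$. This is exactly what is needed to encode them as LP feasibility instances, whose sampling model is uniform-over-multiset.

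First I would fix $S \subset \bR^n$ to be a set of $m = 5(n+1)$ points in general position, so that $\LVC_S(\cL_n) = \VC_S(\cL_n) = n+1$, and fix a constant $\delta \in (0, 1/5)$. Given any distribution $\cD$ over $[m]$ with densities that are integer multiples of $1/m$, I would execute the reduction of \cref{lemma:ssd reduction} using a bijection $\phi : [m] \to S$ and a uniformly random $f : S \to \pmset$, but realize the induced labelled sample as an LP feasibility instance: form the multiset $X$ containing $m \cdot \cD(i)$ copies of $\phi(i)$, label each copy by $f(\phi(i))$, and feed $(X, \ell)$ to the LP feasibility tester. Sampling uniformly from $X$ then has the same distribution as $\phi\cD$, and the LP-testing distance---the minimum fraction of labels in $X$ that must be flipped to make $(X,\ell)$ realizable by a halfspace---equals $\dist_{\phi\cD}(f, \cL_n)$ because $\phi\cD(s) = c_s/m$ where $c_s$ is the multiplicity of $s$ in $X$.

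The completeness and soundness analyses are then identical to those in \cref{lemma:ssd reduction}. When $|\supp(\cD)| \leq \delta m \leq n+1$, the LVC condition guarantees that $\supp(\phi\cD)$ is shattered and some halfspace realizes $\ell$, so the tester accepts with good probability. When $|\supp(\cD)| \geq (1-\delta) m > K \cdot \VC_S(\cL_n)$, \cref{lemma:random labelling} combined with the density lower bound $\phi\cD(s) \geq 1/m$ yields $\dist_{\phi\cD}(f, \cL_n) \geq \epsilon$ with high probability, forcing the tester to reject. Applying \cref{thm:rrss bound} with the chosen constant $\delta$ then gives $\SSD^\bZ(m, \delta, 1-\delta) = \Omega(m^{1-o(1)})$, and since $m = \Theta(n)$ the desired $n^{1-o(1)}$ bound follows.

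The main conceptual point is to verify that the LP-testing distance on $(X,\ell)$ coincides with $\dist_{\phi\cD}(f, \cL_n)$ under the construction above. This is precisely why the integer-multiple density structure provided by RRSS (rather than the stronger Wu--Yang bound for arbitrary distributions) is essential: it is what lets the possibly non-uniform distribution $\phi\cD$ be encoded faithfully as a uniform distribution on a multiset. Beyond this encoding step the proof is a direct specialization of the framework already developed, so no further technical obstacles arise.
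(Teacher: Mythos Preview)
Your proposal is correct and follows essentially the same approach as the paper: reduce from $\SSD^{\bZ}$ using the RRSS bound of \cref{thm:rrss bound} in place of the Wu--Yang bound, encode the integer-density distribution $\phi\cD$ as a multiset of labeled points so that uniform sampling from the multiset simulates $\phi\cD$, and then invoke the completeness/soundness analysis of \cref{lemma:ssd reduction} together with the halfspace shattering properties used for \cref{thm:real halfspaces}. The paper's proof is terser (it simply says ``repeat the proof of \cref{thm:main} and \cref{lemma:ssd reduction} with integer-multiple densities''), but your more explicit parameter choices $m = 5(n+1)$ and constant $\delta$ and your verification that the LP-testing distance coincides with $\dist_{\phi\cD}(f,\cL_n)$ are exactly the details that proof is leaving implicit.
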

\begin{proof}
Repeat the proof of \cref{thm:main} and \cref{lemma:ssd reduction} with input
distributions $\cD$ over $[n]$ where for each $i \in \supp(\cD), \cD(i)$ is an integer multiple of
$1/n$. We obtain a set of points $X = \supp(\phi\cD) \subseteq \bR^n$ and labels $\ell : X \to
\pmset$ with integer probabilities, and we let $S$ be the set of linear constraints constructed from
$X,\ell$ as above, with each $x \in S$ occurring with multiplicity $t$ when $\cD(\phi^{-1}(x))=t/n$.
We may simulate samples from $S$ by samples from $\cD$. Therefore we obtain a lower bound of
$n^{1-o(1)}$ be the theorem of \cite{RRSS09}.
\end{proof}

\section{Upper Bounds}
\label{section:upper}

In this section we will prove upper bounds to complement the above lower bounds. First we study
symmetric classes of functions, which have been noted by Sudan \cite{Sud10} and Goldreich \& Ron
\cite{GR16} to be closely related to support size estimation. These classes establish the optimality
of \cref{thm:main}, and show that the lower bound for $(0,\epsilon)$-tolerant query testers cannot
be extended to intolerant testers, since symmetric classes exhibit a nearly-maximal separation
between tolerant and intolerant testing in the distribution-free setting.

Next, we show that there are natural classes of Boolean-valued functions, $k$-juntas and monotone
functions for which efficient distribution-free sample-based testing is possible.

\subsection{Symmetric Classes}
\label{section:optimality}

We first show that our lower bound in \cref{thm:main} is optimal, in the sense that there
exists a property $\cH$ where $d=\LVC_S(\cH)=\VC_S(\cH) \leq |S|/5$ and the sample complexity of
distribution-free testing is $\Theta(d / \log d)$. The upper bound will follow from a theorem of
Goldreich \& Ron \cite{GR16} for symmetric properties.

\begin{definition}
A set $\cH$ of functions $[n] \to \zo$ is \emph{symmetric} if for any permutation $\sigma : [n] \to
[n]$, for any function $f \in \cH$, it is also the case that $f \circ \sigma \in \cH$. Equivalently,
$\cH$ is symmetric iff there is a function $\phi : [n] \to \zo$ such that $f \in \cH$ iff
$\phi(k) = 1$ when $k = |\{ i \in [n] : f(i) = 1 \}|$.
\end{definition}

\begin{proposition}
Let $\cH$ be any symmetric class of functions $[n] \to \zo$. Then for any set $S \subseteq [n]$,
$\LVC_S(\cH) = \VC_S(\cH)$.
\end{proposition}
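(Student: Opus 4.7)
The plan is to exploit the definitions directly: the inequality $\LVC_S(\cH) \le \VC_S(\cH)$ always holds, so the work is to show that every subset of $S$ of size $d \define \VC_S(\cH)$ is shattered. Let $T \subseteq S$ with $|T| = d$ be a witness to the VC dimension, i.e.\ $T$ is shattered. I want to transport the shattering of $T$ to any other $T' \subseteq S$ of the same size via the symmetry of $\cH$.

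First I would fix an arbitrary $T' \subseteq S$ with $|T'| = d$ and any bijection $\pi : T \to T'$, extending it to a permutation $\sigma : [n] \to [n]$ (say by choosing any bijection between $[n] \setminus T$ and $[n] \setminus T'$). Given an arbitrary labelling $\ell' : T' \to \zo$, define $\ell : T \to \zo$ by $\ell(x) = \ell'(\sigma(x))$. Since $T$ is shattered, there exists $f \in \cH$ with $f(x) = \ell(x)$ for all $x \in T$. By the symmetry of $\cH$, the function $g \define f \circ \sigma^{-1}$ also belongs to $\cH$, and for any $y = \sigma(x) \in T'$ we have
\[
g(y) = f(\sigma^{-1}(\sigma(x))) = f(x) = \ell(x) = \ell'(\sigma(x)) = \ell'(y),
\]
so $g$ witnesses the labelling $\ell'$. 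Hence $T'$ is shattered by $\cH$.

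This shows every size-$d$ subset of $S$ is shattered, and since any subset of a shattered set is shattered, every subset of $S$ of size at most $d$ is shattered as well. Therefore $\LVC_S(\cH) \ge d = \VC_S(\cH)$, which combined with the trivial reverse inequality gives equality. There is no real obstacle here; the only mild care is in producing $\sigma$ as a permutation of the full domain $[n]$ rather than merely a bijection $T \to T'$, so that the symmetry hypothesis applies.
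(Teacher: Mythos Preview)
Your proof is correct and follows exactly the same approach as the paper, which simply notes that if $T \subseteq S$ is shattered then every $T' \subseteq S$ with $|T'|=|T|$ is also shattered; you have just spelled out the permutation argument that makes this precise.
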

\begin{proof}
This follows from the fact that if $T \subseteq S$ is shattered by $\cH$, then every $T' \subseteq
S$ with $|T'|=|T|$ is also shattered.
\end{proof}

Symmetric properties are interesting because, as observed by Goldreich
\& Ron \cite{GR16}, there is a distribution-free testing upper bound for these sets that can be
obtained by the support-size estimation algorithm of Valiant \& Valiant \cite{VV11a,VV11b}. Together
with our lower bound, this shows that distribution-free testing symmetric sets $\cH$ is essentially
equivalent to deciding support size.

\begin{theorem}[Goldreich \& Ron \cite{GR16}, Claim 7.4.2]
\label{thm:gr upper bound}
For any symmetric class $\cH$ of functions $[n] \to \zo$,
$\mteste(\cH) = \poly(1/\epsilon) \cdot O\left(\frac{n}{\log n}\right)$.
\end{theorem}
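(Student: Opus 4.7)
The plan is to reduce testing a symmetric class to two instances of support-size estimation via Valiant--Valiant. First, I would parametrize $\cH$ by a set $K \subseteq \{0, 1, \ldots, n\}$ such that $f \in \cH$ iff $|f^{-1}(1)| \in K$. For any distribution $\cD$ with support $T \subseteq [n]$, let $t_b \define |T \cap f^{-1}(b)|$ for $b \in \zo$. Since bits outside $T$ contribute zero to $\dist_\cD$, the function $f$ can be freely modified on $[n] \setminus T$ at no cost; hence $\dist_\cD(f, \cH) = 0$ if and only if $K \cap [t_1, n - t_0]$ is nonempty.

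Second, I would observe that the samples $(x, f(x))$ labelled $b$ are distributed according to $\cD_b \define \cD \mid_{f^{-1}(b)}$, and $|\supp(\cD_b)| = t_b$. Partitioning the $m$ input samples by their label therefore yields independent samples from $\cD_0$ and $\cD_1$, and applying the Valiant--Valiant support-size estimator to each yields estimates $\hat t_0, \hat t_1$ with $|\hat t_b - t_b| \le O(\epsilon n)$ using $\poly(1/\epsilon) \cdot O(n/\log n)$ samples. The tester accepts iff $K \cap [\hat t_1 - O(\epsilon n),\, n - \hat t_0 + O(\epsilon n)] \neq \emptyset$.

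For completeness, if $f \in \cH$ then $k \define |f^{-1}(1)| \in K$ lies in the true interval $[t_1, n - t_0]$, so the inflated estimated interval also contains $k$ with high probability. For soundness, arguing contrapositively, if some $k \in K$ lies within $O(\epsilon n)$ of $[t_1, n - t_0]$, then at most $O(\epsilon n)$ bits of $T$ need to be flipped to obtain a function in $\cH$, which (after the mass-truncation step below) gives $\dist_\cD(f, \cH) = O(\epsilon)$.

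The main obstacle will be that $\dist_\cD$ is a $\cD$-weighted quantity, while the Valiant--Valiant estimator is purely combinatorial and requires a lower bound of $1/n$ on every probability in the support. The standard workaround, which I would adopt, is to truncate $\cD$ at probability $\epsilon/n$: the elements of $\supp(\cD)$ lighter than this threshold contribute total mass at most $\epsilon$ and can be absorbed into the distance slack, while on the remaining heavy elements the minimum-probability hypothesis of the Valiant--Valiant bound is satisfied after rescaling. This truncation and rescaling is what produces the $\poly(1/\epsilon)$ blow-up in the final sample complexity, yielding the claimed $\poly(1/\epsilon) \cdot O(n / \log n)$ bound.
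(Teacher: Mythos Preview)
The paper does not prove this theorem; it is quoted from Goldreich--Ron without proof, so there is no in-paper argument to compare against. I therefore evaluate your sketch on its own merits.

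Your completeness direction is fine, but the soundness argument has a real gap. You assert that if some $k \in K$ lies within $O(\epsilon n)$ of $[t_1,\,n-t_0]$ then $\dist_\cD(f,\cH)=O(\epsilon)$, because at most $O(\epsilon n)$ bits of $T$ need flipping. But the $\cD$-cost of flipping $O(\epsilon n)$ bits is not $O(\epsilon)$ when those bits are heavy. Concretely: take $K=\{0\}$, let $\cD$ be uniform on a set $T$ of constant size (say $|T|=10$) with $f\equiv 1$ on $T$. Then $t_1=10$, $t_0=0$, and your inflated interval $[\hat t_1-O(\epsilon n),\,n-\hat t_0+O(\epsilon n)]$ contains $0$ once $\epsilon n\gtrsim 10$, so your tester accepts; yet $\dist_\cD(f,\cH)=1$. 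Your truncation step is aimed at \emph{light} elements (to meet the $1/n$ hypothesis of Valiant--Valiant); it does nothing about the few heavy elements that cause this failure.

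The interval that actually characterises $\epsilon$-closeness is $[a_1,\,n-a_0]$ where $a_b=t_b-j_b^*$ and $j_b^*=\max\{j:\text{the $j$ lightest $b$-labelled points of $T$ have total $\cD$-mass}\le\epsilon\}$; one checks that $\dist_\cD(f,\cH)\le\epsilon$ iff $K$ meets this interval. Estimating these ``$\epsilon$-essential support sizes'' $a_b$ to additive $O(\epsilon n)$ is not the same problem as estimating $t_b$, and a single black-box call to a support-size estimator does not suffice; this is where the substantive work in the Goldreich--Ron reduction lies.
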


On the other hand, consider the class $\cS_n$ of functions $[n] \to \zo$ such that $f : [n] \to \zo$
is in $\cS_n$ iff $|\{i \in [n] : f(i) = 1\}| \leq n/5$.
\begin{theorem}
\label{thm:tightness}
$\LVC_{[n]}(\cS_n) = \VC_{[n]}(\cS_n) = n/5$ and for small enough (constant) $\epsilon > 0$,
\[
  \mteste(\cS_n) = \Theta\left(\frac{n}{\log n}\right) \,.
\]
\end{theorem}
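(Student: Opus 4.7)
\textbf{Proof proposal for \cref{thm:tightness}.} The plan is to verify the dimension equalities directly and then sandwich $\mteste(\cS_n)$ between matching bounds, one from the general lower bound machinery developed in \cref{section:lower bound} and one from the Goldreich--Ron upper bound for symmetric classes.

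First I would compute $\LVC_{[n]}(\cS_n) = \VC_{[n]}(\cS_n) = n/5$. For the lower direction, pick any $T \subseteq [n]$ with $|T| \leq n/5$ and any labelling $\ell : T \to \zo$; the function $f : [n] \to \zo$ defined by $f(i) = \ell(i)$ for $i \in T$ and $f(i) = 0$ otherwise has $|f^{-1}(1)| \leq |T| \leq n/5$ and hence lies in $\cS_n$, so every such $T$ is shattered and $\LVC_{[n]}(\cS_n) \geq n/5$. For the upper direction, any $T$ with $|T| > n/5$ fails to admit the all-ones labelling (there is no $f \in \cS_n$ taking value $1$ on every element of $T$), so $\VC_{[n]}(\cS_n) \leq n/5$. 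Combining the two inequalities with $\LVC \leq \VC$ gives the claimed equality.

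For the upper bound $\mteste(\cS_n) = O(n/\log n)$, observe that $\cS_n$ is symmetric: membership depends only on $|f^{-1}(1)|$. Hence \cref{thm:gr upper bound} applies directly with $\epsilon$ a small constant and yields $\mteste(\cS_n) = O(n/\log n)$.

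For the matching lower bound $\mteste(\cS_n) = \Omega(n/\log n)$, I would apply \cref{cor:main-informal} with $S = [n]$. We already established $\LVC_S(\cS_n) = \VC_S(\cS_n) = n/5$ and $|S| = n = 5 \cdot \VC_S(\cS_n)$, so the hypotheses are satisfied. The corollary then gives
\[
\mteste(\cS_n) = \Omega\!\left(\frac{\VC_S(\cS_n)}{\log \VC_S(\cS_n)}\right) = \Omega\!\left(\frac{n}{\log n}\right),
\]
which matches the upper bound. There is no real obstacle here: the construction was engineered so that $\LVC$ coincides with $\VC$ and the domain size is exactly the constant factor $5$ above the VC dimension, which is precisely the regime in which \cref{cor:main-informal} is tight. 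The main conceptual point this example conveys is that the $\LVC = \VC$ assumption in \cref{cor:main-informal} cannot be replaced by a bound depending only on, say, the VC dimension of $\cH$ on all of $\cX$, since in symmetric examples like $\cS_n$ the lower bound is driven entirely by the shattering structure on the chosen set $S$.
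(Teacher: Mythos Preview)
Your proof is correct and follows essentially the same route as the paper: verify $\LVC_{[n]}(\cS_n)=\VC_{[n]}(\cS_n)=n/5$ by the extend-by-zeros and all-ones arguments, then combine the Goldreich--Ron upper bound for symmetric classes with \cref{cor:easy bound} (equivalently \cref{cor:main-informal}) applied at $S=[n]$. Your write-up is in fact slightly more explicit than the paper's in separately justifying $\VC_{[n]}(\cS_n)\le n/5$.
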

\begin{proof}
Any negative certificate for a function $f \notin \cS_n$ must have size at least $n/5+1$ so
$\LVC_{[n]}(\cS_n) \geq n/5$. On the other hand, any set $T$ of size $n/5$ is shattered since we may
assign 0 to all values $[n] \setminus T$. Therefore \cref{cor:easy bound} and
\cref{thm:gr upper bound} imply the conclusion.
\end{proof}

Next we show that the lower bound for $(0,\epsilon)$-tolerant adaptive testers cannot be extended to
intolerant testers.
Parnas, Ron, \& Rubinfeld \cite{PRR06} observed that, when testing over the uniform distribution,
any $\epsilon$-tester with uniformly (but not necessarily independently) distributed queries is in
fact $(\epsilon',\epsilon)$-tolerant for some $\epsilon' > 0$ (depending on the query cost). Since
our lower bound in \cref{thm:main} holds even for $(0,\epsilon)$-tolerant testers, one might
then wonder if it holds also for intolerant testers, in light of the observation of \cite{PRR06}.
However, this is not the case, and the counterexample is the same class of symmetric functions
discussed above. This class exhibits a nearly-maximal separation between tolerant and
intolerant testing in the distribution-free model, even when the intolerant tester has uniformly 
distributed and non-adaptive queries.
\begin{theorem}
There is a two-sided non-adaptive query tester for $\cS_n$ with query complexity
$O\left(\frac{1}{\epsilon^2}\right)$, and yet every adaptive $(0,\epsilon)$-tolerant tester for
$\cS_n$ has query complexity $\Omega\left(\frac{n}{\log n}\right)$ for small enough constant
$\epsilon > 0$.
\end{theorem}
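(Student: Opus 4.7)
The plan is to handle the two halves of the theorem separately. The lower bound is an immediate consequence of the machinery already built in the paper, while the upper bound requires a simple new tester whose correctness reduces to one elementary inequality about the distance from $\cS_n$.

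For the lower bound, I would invoke \cref{cor:easy bound} with $S = [n]$. From the proof of \cref{thm:tightness} we already know $\LVC_{[n]}(\cS_n) = \VC_{[n]}(\cS_n) = n/5$, and clearly $|[n]| = n = 5 \cdot \VC_{[n]}(\cS_n)$, so the hypotheses of \cref{cor:easy bound} are met and we obtain
\[
\qtolz(\cS_n) = \Omega\!\left(\frac{(n/5)^2}{(n/5)\log(n/5)}\right) = \Omega\!\left(\frac{n}{\log n}\right),
\]
which is exactly the lower bound claimed for adaptive $(0,\epsilon)$-tolerant testers.

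For the upper bound, the tester I propose is the obvious one: draw $m = \Theta(1/\epsilon^2)$ points $x_1, \ldots, x_m$ independently and uniformly from $[n]$ (chosen in advance, so the queries are non-adaptive), query $f(x_i)$ for each, and accept iff $\frac{1}{m}\sum_{i=1}^m f(x_i) \leq \frac{1}{5} + \frac{\epsilon}{10}$. Correctness hinges on the following inequality, which I view as the main obstacle: \emph{if $f$ is $\epsilon$-far from $\cS_n$ under any distribution $\cD$, then $|f^{-1}(1)|/n \geq \frac{1}{5} + \Omega(\epsilon)$.} To prove it, set $k = |f^{-1}(1)| - n/5 > 0$ and order the $\cD$-weights of the ones of $f$ ascending as $w_1 \leq \cdots \leq w_{n/5+k}$. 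The closest $h \in \cS_n$ to $f$ is obtained by flipping the $k$ lightest ones of $f$ to $0$, so $\dist_\cD(f, \cS_n) = w_1 + \cdots + w_k \geq \epsilon$. This forces $w_k \geq \epsilon/k$, and the remaining $n/5$ weights each exceed $w_k$, so $\cD(f^{-1}(1)) \geq \epsilon + (n/5)(\epsilon/k)$. Using $\cD(f^{-1}(1)) \leq 1$ then yields $k \geq n\epsilon/(5(1-\epsilon)) = \Omega(\epsilon n)$, and thus $|f^{-1}(1)|/n \geq \frac{1}{5} + \frac{\epsilon}{5}$.

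With the inequality in hand, a Hoeffding bound finishes the argument. For $m = \Theta(1/\epsilon^2)$ chosen with a large enough constant, the sampled fraction $\hat p \define \frac{1}{m}\sum_i f(x_i)$ lies within $\epsilon/20$ of its mean $|f^{-1}(1)|/n$ with probability at least $2/3$: when $f \in \cS_n$ we get $\hat p \leq 1/5 + \epsilon/20 < 1/5 + \epsilon/10$ and accept, while when $f$ is $\epsilon$-far we get $\hat p \geq 1/5 + \epsilon/5 - \epsilon/20 > 1/5 + \epsilon/10$ and reject. The only place one might stumble is in setting up the weight inequality, since the nearest member of $\cS_n$ flips the \emph{lightest} ones, not the heaviest; everything else is a routine application of concentration.
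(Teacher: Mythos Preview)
Your proposal is correct and follows essentially the same approach as the paper: the lower bound is exactly the invocation of \cref{cor:easy bound} with the $\LVC=\VC=n/5$ computation from \cref{thm:tightness}, and the upper bound uses the same uniform-sampling tester with the same threshold $\tfrac{1}{5}+\tfrac{\epsilon}{10}$ and the same key inequality ($f$ being $\epsilon$-far forces $|f^{-1}(1)|\ge(1+\epsilon)n/5$), proved by the same light-vs.-heavy-ones averaging argument you give.
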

\begin{proof}
The lower bound follows from \cref{cor:easy bound}. For the upper bound, consider the
algorithm that makes $m = \frac{50}{\epsilon^2}\ln(3)$ uniformly random samples, sets $X$ equal to
the number of sample points with value 1, and rejects iff $X > (1+\epsilon/2)\frac{m}{5}$.

Let $\cD,f$ be the input distribution and function, and suppose that $f \in \cS_n$. Then $\Ex{X}
\leq n/5$ so by Hoeffding's inequality,
\[
  \Pr{X > (1+\epsilon/2)\frac{m}{5}}
  \leq \Pr{X > \Ex{X}+\frac{\epsilon m}{10}} \leq \exp{-\frac{m\epsilon^2}{50}} \leq 1/3 \,.
\]
Now suppose that $\dist_\cD(f,\cH) > \epsilon$. Let $N = \{ x \in [n] : f(x)=1 \}$ and observe that
$|N| > n/5$. Write $\cD(x)$ for the probability density of $x$ according to $\cD$, let $A \subset
N$ be the $n/5$ points $x \in N$ with largest value $\cD(x)$, and let $B = N \setminus A$. Observe
that for all $x \in A, y \in B, \cD(x) > \cD(y)$, so the average $\cD(x)$ in $A$ is larger than the
average $\cD(x)$ in $B$. Write $\cD(A) \define \sum_{x \in A} \cD(x), \cD(B) \define \sum_{x \in B}
\cD(x)$. Since $\dist_\cD(f,\cH) > \epsilon$ it must be that $\cD(B) > \epsilon$ since otherwise the
function $f'$ obtained by flipping the values in $B$ is in $\cH$ and satisfies $\dist_\cD(f,\cH)
\leq \dist_\cD(f,f') \leq \epsilon$.
\[
  \frac{5}{n} \geq \frac{\cD(A)}{|A|} \geq \frac{\cD(B)}{|B|} \geq \frac{\epsilon}{|B|}
\]
so $|B| \geq \epsilon n / 5$. Therefore the number of 1-valued points according to $f$ is
$|N|=|A|+|B|\geq (1+\epsilon)\frac{n}{5}$, so $\Ex{X} \geq (1+\epsilon)\frac{m}{5}$. By
Hoeffding's inequality:
\[
  \Pr{X \leq (1+\epsilon/2)\frac{m}{5}}
  \leq \Pr{X \leq \Ex{X}-\frac{\epsilon m}{10}} \leq \exp{-\frac{m\epsilon^2}{50}} \leq 1/3 \,.
  \qedhere
\]
\end{proof}

\subsection{\texorpdfstring{$k$}{k}-Juntas}
\label{section:juntas}

A $k$-junta $\zo^n \to \zo$ on $n$ variables is a function that depends on only $k$ of the $n$
variables; these are of great interest in testing and learning because if a function depends on $k
\ll n$ variables then the complexity of learning may be significantly reduced. Blais \cite{Bla09}
gave a nearly optimal tester in the query model for product distributions, and Bshouty \cite{Bsh19}
recently presented a tester in the distribution-free query model, with query cost $\widetilde
O(k/\epsilon)$, but there are no known upper bounds in the sample-based distribution-free model. The
VC dimension of $k$-juntas is at least $2^k$ since any subcube of dimension $k$ can be shattered. We
prove a polynomial improvement over the VC dimension for distribution-free sample-based testers
when $k > \log\log n$ using the following version of the birthday problem.

\begin{proposition}
\label{prop:general birthday}
Let $p$ be any distribution over $[n]$. The probability that $m$ independent samples drawn from $p$
are all distinct is at most $e^{-\frac{(m-1)^2}{2n}}$.
\end{proposition}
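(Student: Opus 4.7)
The probability that $m$ independent samples $Y_1,\ldots,Y_m$ from $p$ are all distinct equals
\[
  \sum_{(i_1,\ldots,i_m)\text{ distinct}} p_{i_1}\cdots p_{i_m} = m!\cdot e_m(p),
\]
where $e_m$ is the $m$-th elementary symmetric polynomial in $p_1,\ldots,p_n$. My first step is to show that, over all probability vectors $p$ on $[n]$, this quantity is maximized by the uniform distribution $p_i = 1/n$. I will do this by a symmetrization argument: if $p_i \ne p_j$, replace both masses by their average $(p_i+p_j)/2$. For each $m$-subset $S\subseteq [n]$ the product $\prod_{\ell\in S} p_\ell$ is unaffected when $\{i,j\}\cap S = \emptyset$; pairing $S$ with its sibling obtained by swapping $i\leftrightarrow j$ shows that subsets containing exactly one of $i,j$ contribute a total depending only on $p_i+p_j$, hence also unchanged; and when $\{i,j\}\subseteq S$ the contribution weakly increases by AM--GM, since $p_ip_j \le \bigl((p_i+p_j)/2\bigr)^2$. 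Iterating this averaging operation (or, equivalently, invoking Schur-concavity of $e_m$) drives $p$ toward uniform without decreasing $\Pr[\text{all distinct}]$.

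Once the maximum is reduced to the uniform case, the probability of distinctness is exactly
\[
  \prod_{i=0}^{m-1}\Bigl(1-\frac{i}{n}\Bigr),
\]
and applying the inequality $1-x\le e^{-x}$ termwise gives the bound $\exp\bigl(-\sum_{i=0}^{m-1} i/n\bigr) = \exp(-m(m-1)/(2n))$. Since $m(m-1)\ge (m-1)^2$ for all $m\ge 1$, this is at most $\exp(-(m-1)^2/(2n))$, which completes the proof.

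The only delicate point is the symmetrization/Schur-concavity step, but this is a standard short calculation (one could alternatively cite Maclaurin's inequality $e_m(p)/\binom{n}{m} \le (e_1(p)/n)^m = n^{-m}$ to get the same bound in one line); everything else is routine exponential bookkeeping.
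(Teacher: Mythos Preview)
Your proof is correct and follows the same approach as the paper: reduce to the uniform distribution as the worst case, then bound $\prod_{i=0}^{m-1}(1-i/n)$ via $1-x\le e^{-x}$. The only difference is that the paper cites \cite{Mun77} for the uniform-is-worst-case step, whereas you supply a self-contained Schur-concavity/symmetrization argument (and note the Maclaurin alternative); you also make explicit the final inequality $m(m-1)\ge (m-1)^2$, which the paper leaves implicit.
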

\begin{proof}
It is known that the worst case probability distribution $p$ is uniform over $[n]$ \cite{Mun77}. For
the uniform distribution over $[n]$, the probability that all $m$ independent samples are distinct
is at most
\[
  \prod_{i=0}^{m-1} \left(1-\frac{i}{n}\right)
  \leq \prod_{i=0}^{m-1} e^{-\frac{i}{n}}
  = \exp{-\frac{1}{n}\sum_{i=0}^{m-1} i}
  = \exp{-\frac{m(m-1)}{2n}} \,. \qedhere
\]
\end{proof}

\begin{theorem}
\label{thm:juntas}
There is a distribution-free sample-based $\epsilon$-tester for $k$-juntas on domain
$\zo^n$ with one-sided error and sample complexity $O\left(\frac{k2^{k/2}\log(n/k)}{\epsilon}\right)$.
\end{theorem}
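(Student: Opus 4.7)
The tester I propose is a \emph{consistency check}: draw $m$ labelled samples $(x_i, f(x_i))$ and accept iff there is some $J \subseteq [n]$ of size $k$ that is ``consistent,'' meaning no two samples have equal $J$-projection but different labels. Equivalently, the tester accepts iff the sample is realizable by some $k$-junta. One-sided correctness is immediate, because if $f$ depends only on the coordinates in some $J^*$, then $J^*$ is trivially consistent with any sample, so the tester always accepts. For soundness, suppose $f$ is $\epsilon$-far from every $k$-junta under $\cD$; I would union-bound over the $\binom{n}{k} \leq (en/k)^k$ candidate sets $J$, so that since $\log\binom{n}{k} = O(k\log(n/k))$, it suffices to show that for each fixed $J$ the probability of no conflict pair for $J$ is at most $1/(3\binom{n}{k})$.

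Fix such a $J$ and write $p_\sigma = \Pru{x \sim \cD}{x|_J = \sigma}$ and $q_\sigma = \Pru{x \sim \cD}{f(x) = 1 \mid x|_J = \sigma}$ for $\sigma \in \zo^k$. The far-from-junta hypothesis gives $\sum_\sigma p_\sigma \min(q_\sigma, 1-q_\sigma) \geq \epsilon$, since the closest $J$-junta to $f$ predicts the majority label in each bin. To control the probability of no conflict, I would Poissonize the sample size so that the bin counts $N_\sigma \sim \Poi(Mp_\sigma)$ become independent, giving
\[
    \Pr{\text{no conflict for } J}
    \;=\; \prod_\sigma \left( e^{-Mp_\sigma q_\sigma} + e^{-Mp_\sigma(1-q_\sigma)} - e^{-Mp_\sigma} \right).
\]
This single factor captures both relevant regimes: for $Mp_\sigma \lesssim 1$ it behaves like $1 - (Mp_\sigma)^2 q_\sigma(1-q_\sigma)$, the birthday-collision regime, while for $Mp_\sigma \gg 1$ it behaves like $e^{-Mp_\sigma \min(q_\sigma, 1-q_\sigma)}$, the within-bin label-mismatch regime. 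Combining the two estimates against $\sum_\sigma p_\sigma \min(q_\sigma, 1-q_\sigma) \geq \epsilon$ and the Cauchy--Schwarz consequence $\sum_\sigma p_\sigma^2 \min(q_\sigma, 1-q_\sigma) \geq \epsilon^2/2^k$ (using $|\supp(p)| \leq 2^k$), the plan is to show that $M = O(k 2^{k/2} \log(n/k)/\epsilon)$ samples suffice to push the product below $1/(3\binom{n}{k})$. \Cref{prop:general birthday} can alternatively be invoked directly on the induced distribution $\cD|_J$ (domain size $2^k$) as a combinatorial substitute for the birthday estimate.

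The main obstacle is the tight analysis of this product across bins of very different masses. The crude estimate $e^{-Mp_\sigma q_\sigma} + e^{-Mp_\sigma(1-q_\sigma)} - e^{-Mp_\sigma} \leq 2 e^{-Mp_\sigma \min(q_\sigma,1-q_\sigma)}$ loses a factor of $2^{|\supp(p)|}$ when multiplied across bins, which is catastrophic on a domain of size $2^k$. The refinement is to exploit the $-e^{-Mp_\sigma}$ cancellation, which suppresses precisely that spurious factor when $Mp_\sigma$ is small; partitioning bins by whether $Mp_\sigma \lesssim 1$ and summing the two estimates should then yield a clean single-parameter exponential bound. Balancing the case analysis so that the birthday factor $2^{k/2}$ and the label-mismatch factor $1/\epsilon$ appear multiplicatively, with the union-bound factor $k\log(n/k)$ paid only once, is where the stated sample complexity emerges.
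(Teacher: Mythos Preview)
Your tester and overall scaffolding match the paper exactly: the same consistency check, the same one-sided argument, the same union bound over the $\binom{n}{k}$ candidate sets $J$, and the same observation that $f$ being $\epsilon$-far from every $J$-junta forces $\sum_\sigma p_\sigma\min(q_\sigma,1-q_\sigma)\ge\epsilon$ for each $J$.

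The difference is in how the per-$J$ conflict probability is bounded. You propose a direct Poissonized product analysis with a light/heavy regime split; the paper instead sidesteps your ``main obstacle'' with a symmetrization trick. For each bin $\sigma$ it carves out a subset $T_\sigma$ on which the $0$-labelled mass and the $1$-labelled mass are \emph{equal} (both equal to $p_\sigma\min(q_\sigma,1-q_\sigma)$), so that two samples landing in the same $T_\sigma$ disagree with probability exactly $1/2$. Since $T=\bigcup_\sigma T_\sigma$ has total mass at least $\epsilon$, the problem reduces to a pure birthday collision among $\Omega(\epsilon m)$ balls in $2^k$ bins, and \cref{prop:general birthday} applies directly with no regime analysis. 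The paper also batches: $s=O(\log\binom{n}{k})$ independent rounds of $m=O(2^{k/2}/\epsilon)$ samples each, showing each round finds a witness with constant probability and then amplifying, which makes the union bound over $J$ immediate. Your route is viable---in fact each factor rewrites as $1-(1-e^{-Mp_\sigma q_\sigma})(1-e^{-Mp_\sigma(1-q_\sigma)})\le 1$, so the feared $2^{|\supp(p)|}$ blowup never actually occurs---but the paper's symmetrization-plus-batching argument is shorter and avoids the case split entirely. Your parenthetical remark about invoking \cref{prop:general birthday} directly is essentially the paper's move; pursuing that (with the $T_\sigma$ balancing) is the cleaner path.
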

\begin{proof}
For a set $S \subseteq [n]$ of size $n-k$ we will arrange the points $x \in \zo^n$ into ``rows'' and
``columns''; for every partial assignment $\rho : \overline S \to \zo$ let row $R_\rho$ be the
set of points $x \in \zo^n$ such that $\forall i \notin S, x_i = \rho(i)$, and for every partial
assignment $\gamma : S \to \zo$ let column $C_\gamma$ be the set of points $x \in \zo^n$ such that
$\forall i \in S, x_i = \gamma(i)$.

The tester is as follows: On input $f : \zo^n \to \zo$ and distribution $p$, sample a set $Q$ of $s
\cdot m$ points, where $s = O\left(\log{n \choose k}\right)$ and $m =
O\left(\frac{2^{k/2}}{\epsilon}\right)$; since ${n \choose k} \leq \left(\frac{e n}{k}\right)^k$,
the sample complexity is $sm=O\left(\frac{k2^{k/2}\log(n/k)}{\epsilon}\right)$. Reject if for every
set $S \subset [n]$ of $n-k$ variables, there exists a row $\rho : \overline S \to \zo$ that
contains $x,y \in Q \cap R_\rho$ such that $f(x) \neq f(y)$; we will call such a pair $x,y$ a
\emph{witness} for $S$. This has one-sided error because a $k$-junta has a set $S$ of variables such
that $f$ is constant on every row.

Let $p : \pmset^n \to \bR$ be a probability distribution over $\pmset^n$.  Let $f : \zo^n \to \zo$
and let $S \subset [n]$ be a set of $n-k$ variables. For each $\rho : \overline S \to \zo$ write
\begin{align*}
r_\rho^0 \define \sum_{x \in R_\rho : f(x)=0} p(x) &&
r_\rho^1 \define \sum_{x \in R_\rho : f(x)=1} p(x)
\end{align*}
Suppose that
\[
  \sum_{\rho : \overline S \to \zo} \min(r_\rho^0,r_\rho^1) < \epsilon \,.
\]
Then $f$ is $\epsilon$-close to a $k$-junta, because we can define the $k$-junta $h$ as follows. For
each $x$ we can set $h(x)=0$ if $r_\rho^0 \geq r_\rho^1$ and $h(x)=1$ if $r_\rho^0 < r_\rho^1$,
where $\rho$ is the partial assignment defining the row $R_\rho$ containing $x$. Since $h$ is
constant on each row, it does not depend on any of the variables that are not assigned by $\rho :
\overline S \to \zo$, i.e.~it does not depend on any of the $n-k$ variables in $S$. And by
definition,
\[
\dist_p(f,h) = \sum_{\rho : \overline S \to \zo} \min(r_\rho^0,r_\rho^1) < \epsilon \,.
\]
If $f$ is $\epsilon$-far, then every set $S$ of $n-k$ variables satisfies
\[
  \sum_{\rho : \overline S \to \zo} \min(r_\rho^0,r_\rho^1) \geq \epsilon \,.
\]
For any fixed $S$, we can bound the probability that the set $Q$ does not contain any witness as
follows. Without loss of generality assume that $r_\rho^0 \leq r_\rho^1$ for every $\rho$, and
choose a set $T_\rho \subseteq R_\rho$ such that
\[
  r_\rho^0 = \sum_{x \in T_\rho : f(x)=0} p(x) = \sum_{x \in T_\rho : f(x)=1} p(x) \,,
\]
which we may do since, without loss of generality, we may adjust the probabilities $p(x)$ in each
row without changing the probability of finding a witness, as long as the totals $r_\rho^0,r_\rho^1$
are invariant. Note that if two random points $x,y \sim p$ fall in $T_\rho$, then with probability
$1/2$ we will have $f(x) \neq f(y)$. Therefore
\[
  \Pru{Q}{ \exists x,y,\rho : f(x)\neq f(y), x,y \in R_\rho }
  \geq \frac{1}{2} \cdot \Pru{Q}{\exists \rho : T_\rho \text{ contains $\geq 2$ points}} \,.
\]
Let $T = \cup_{\rho : \overline S \to \zo} T_\rho$ and observe that $\sum_{x \in T} p(x) = \sum_\rho
r_\rho^0 \geq \epsilon$, so in expectation there are $\epsilon m$ points in $Q \cap T$. By the
Chernoff bound,
\[
  \Pr{ |Q \cap T| < \frac{\epsilon m}{2}} \leq \exp{-\frac{\epsilon m}{8}} = o(1) \,.
\]
Assume there are at least $\epsilon m / 2$ points in $T$. By \cref{prop:general
birthday}, the probability that no $T_\rho$ contains at least 2 points is, for $N=2^k$ being the
number of rows, at most
\[
  \exp{-\frac{\left(\frac{\epsilon m}{2}-1\right)^2}{2N}} < \frac 1 2 \,,
\]
since $m = \Omega\left(\frac{\sqrt N}{\epsilon}\right)$. Therefore the probability of finding a
witness for $S$ is at least
\[
  \frac{1}{2} \cdot \Pru{Q}{\exists \rho : T_\rho \text{ contains $\geq 2$ points}}
  \geq \frac{1}{2} \cdot (1-o(1)) \cdot \frac{1}{2} = (1-o(1)) \frac 1 4 > \frac 1 5 \,.
\]
By repeating the sampling procedure $s = O\left(\log{n \choose k}\right)$ times, the probability of
failing to find a witness is at most $(4/5)^s < \frac{1}{3} {n \choose k}^{-1}$. Then by the union
bound, the probability that there exists $S$ on which the tester fails to find a witness is at most
$1/3$, since there are at most ${n \choose k}$ such sets.
\end{proof}

\subsection{Monotonicity in General Posets}
\label{section:monotonicity}

A basic result in testing monotonicity of Boolean functions over the uniform distribution is that
at most $O(\sqrt{n / \epsilon})$ uniform samples are necessary for any partial order of size $n$
\cite{FLN+02}. We extend this result to the distribution-free setting. The VC
dimension of the class of monotone functions over any poset $P$ is the width, i.e.~the size of the largest
antichain in $P$. For example, the standard partial ordering of the hypercube $\zo^n$ has width
$\Theta(2^n/\sqrt n)$ since the set of points $x \in \zo^n$ with Hamming weight $n/2$ is an
antichain of size ${n \choose n/2}$. Therefore, for the hypercube, distribution-free sample-based
testing can be done with sample complexity $O(2^{n/2}) = \widetilde O(\sqrt \VC)$.

For sets $X,Y$ and a set of order pairs $E \subseteq X \times Y$, we call the triple $(X,Y,E)$ a
\emph{bipartite} partial order, where the edges $E$ define the following partial order on $X \cup Y:
x < y$ iff $(x,y) \in E$. Fischer \emph{et al.}~\cite{FLN+02} observed that for the uniform
distribution, monotonicity on general finite posets reduces to testing on bipartite posets; we
generalize their reduction to the distribution-free setting:
\begin{lemma}
\label{lemma:bipartite reduction}
If for every bipartite partial order $(X,Y,E)$ of size $|X|=|Y|=n$ there is a distribution-free
sample-based $\epsilon$-tester for monotonicity with sample complexity $m(n,\epsilon)$ then for
every partial order $P$ of size $|P|=n$ there is a distribution-free sample-based $\epsilon$-tester
for monotonicity with sample complexity $m(n,\epsilon/2)$.
\end{lemma}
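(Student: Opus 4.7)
The plan is a direct two-copy reduction. Given a poset $P$ with $|P|=n$, a distribution $\cD$ on $P$, and a function $f : P \to \zo$, I will let $X$ and $Y$ be two disjoint copies of the ground set of $P$, write $x^X, x^Y$ for the two copies of $x$, and set $E = \{(x^X, y^Y) : x \leq_P y\}$, which includes the diagonal edges $(x^X, x^Y)$. Lift $f$ and $\cD$ by $f'(x^X)=f'(x^Y)=f(x)$ and $\cD'(x^X)=\cD'(x^Y)=\cD(x)/2$. Each sample from $(\cD,f)$ can be turned into a sample from $(\cD',f')$ by flipping an independent fair coin to decide which copy to use, so running the hypothesized bipartite tester with tolerance $\epsilon/2$ costs exactly $m(n,\epsilon/2)$ samples from $(\cD,f)$. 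What remains is to verify that distances to monotonicity transfer with at most a factor of two loss.

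\textbf{Completeness and the construction of $g$.} Completeness is immediate: if $f$ is monotone on $P$ and $(x^X, y^Y)\in E$, then $f'(x^X)=f(x)\leq f(y)=f'(y^Y)$. For soundness I will prove the contrapositive: if some $g' : X\cup Y \to \zo$ is monotone on $(X,Y,E)$ with $\dist_{\cD'}(f',g')\leq \epsilon/2$, then the downward-closure construction
\[
g(x) \define \bigvee_{y \leq_P x} g'(y^X)
\]
is a monotone function on $P$ that is $\epsilon$-close to $f$ under $\cD$. Monotonicity of $g$ is built in: if $x \leq_P z$, then $\{y:y\leq_P x\}\subseteq \{y:y\leq_P z\}$, so $g(x)\leq g(z)$.

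\textbf{Distance bound.} The crucial structural claim is $g(x) \in \{g'(x^X), g'(x^Y)\}$ for every $x\in P$. If $g(x)=0$ then every $y \leq_P x$ (in particular $y=x$) has $g'(y^X)=0$, so $g(x)=0=g'(x^X)$. If $g(x)=1$ then some $y \leq_P x$ has $g'(y^X)=1$, and then the edge $(y^X,x^Y)\in E$ together with monotonicity of $g'$ forces $g'(x^Y)\geq g'(y^X)=1$, so $g(x)=1=g'(x^Y)$. From this containment the pointwise bound $\ind{f(x)\neq g(x)} \leq \ind{f(x)\neq g'(x^X)} + \ind{f(x)\neq g'(x^Y)}$ follows immediately, so summing against $\cD$ and using $\cD'(x^X)=\cD'(x^Y)=\cD(x)/2$ yields
\[
\dist_\cD(f,g) \;\leq\; \sum_{x}\cD(x)\bigl(\ind{f(x)\neq g'(x^X)} + \ind{f(x)\neq g'(x^Y)}\bigr) \;=\; 2\,\dist_{\cD'}(f',g') \;\leq\; \epsilon.
\]
The tester for $P$ then simulates bipartite samples via the coin-flip trick and invokes the bipartite $(\epsilon/2)$-tester.

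\textbf{Main obstacle.} The conceptual difficulty is finding the right way to collapse $g'$ into a monotone $g$ on $P$. The naive attempts $g := g'|_X$ or $g := g'|_Y$ both fail, because the bipartite structure places no constraint between two points both living in $X$ (or both in $Y$), so neither restriction need be monotone on $P$. The downward-closure definition sidesteps this exactly because monotonicity is baked in, while the cross-edges $(y^X,x^Y)\in E$ certify that whenever we ``upgrade'' $g(x)$ above $g'(x^X)$ the value $g'(x^Y)$ was already $1$, so no extra disagreement at $x$ is introduced beyond what the pair $(g'(x^X),g'(x^Y))$ already paid for. Once the containment $g(x)\in\{g'(x^X),g'(x^Y)\}$ is pinned down, monotonicity of $g$, the distance bound, and the sample complexity accounting are all one-line verifications.
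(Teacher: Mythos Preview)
Your proof is correct and takes a genuinely different route from the paper's. The paper also builds the two-copy bipartite poset and lifts $f$ and $\cD$ the same way (though it uses strict edges $x<y$ rather than $x\le y$; this difference is immaterial). Where the two arguments diverge is in how a monotone approximator on $P$ is recovered from a monotone $g'$ on the bipartite order. The paper marks $x$ with a $*$ whenever $g'(x^X)$ and $g'(x^Y)$ do not both agree with $f(x)$, bounds the $*$-mass by $2\,\dist_{\cD'}(f',g')$, then fills in the $*$'s along a linear extension by $f''(x)=\max_{y<x}f''(y)$ and proves monotonicity of $f''$ by a minimal-counterexample contradiction. Your downward-closure definition $g(x)=\bigvee_{y\le x} g'(y^X)$ is more direct: monotonicity is immediate, and the single structural observation $g(x)\in\{g'(x^X),g'(x^Y)\}$ replaces both the $*$-mass bound and the contradiction argument. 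The paper's approach has the mild advantage that it preserves the value $f(x)$ whenever both copies already agree with $f$, whereas your $g$ may differ from $f$ at such points; but since you only need the aggregate distance bound, your argument is shorter and avoids the auxiliary linear-extension step entirely.
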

\begin{proof}
On any partial order $P$ with distribution $p$ and input function $f : P \to \zo$, consider the
following reduction: let $X,Y$ be separate copies of $P$ and for each $x \in P$ write $x_1,x_2$ for
the copies of $x$ in $X,Y$ respectively. Define a set of edges $E \subset X
\times Y$ where $(x_1,y_2) \in E$ iff $x < y$ in $P$. Define the distribution $q$ over $X \cup Y$ as
$q(x_1)=\frac{1}{2}p(x)$ for each $x \in X$ and $q(y_2)= \frac{1}{2}p(y_2)$ for each $y \in Y$.
Define the function $g : X \cup Y \to \zo$ as $g(x_1)=f(x),g(y_2)=f(y)$ for each $x_1 \in X, y_2 \in
Y$.  Observe that we can simulate a random sample from $q$ labelled by $g$ by sampling $x \sim p$
and taking $x_1,x_2$ with equal probability, labelling it with $f(x)$.

It is clear that if $f$ is monotone on $P$ then $g$ is monotone on $(X,Y,E)$, since $(x_1,y_2) \in
E$ implies $x < y$ in $P$. Suppose now that $g$ is $\epsilon$-close to monotone in $(X,Y,E)$
according to $q$, and let $h$ be monotone on $(X,Y,E)$ minimizing distance to $g$. Define $f' : P
\to \{0,1,*\}$ as follows: For $x \in P$, if $h(x_1)=h(x_2)=f(x)$ set $f'(x)=f(x)$, and otherwise
set $f'(x)=*$. Then
\begin{align*}
\sum_{x \in P : f'(x)=*} p(x)
  &= \sum_{x \in P} \ind{h(x_1) \neq f(x) \vee h(x_2) \neq f(x)} p(x) \\
  &\leq \sum_{x \in P} \left(\ind{h(x_1) \neq f(x)} + \ind{h(x_2) \neq f(x)}\right) p(x) \\
  &= \sum_{x \in P} \ind{h(x_1) \neq g(x_1)} p(x) + \sum_{x \in P} \ind{h(x_2) \neq g(x_2)} p(x) \\
  &= 2\sum_{x \in P} \ind{h(x_1) \neq g(x_1)} q(x_1) + 2\sum_{x \in P} \ind{h(x_2) \neq g(x_2)} q(x_2) \\
  &= 2\dist_q(g,h) < 2\epsilon \,.
\end{align*}
Now construct a monotone function $f'' : P \to \zo$ as follows. Take any total order $\prec$
consistent with the partial order on $P$. For each $x \in P$ in order of $\prec$, if $f'(x)=*$ set
$f''(x) = \max_{y < x} f''(y)$, otherwise set $f''(x)=f'(x)=f(x)$. Then $\dist_p(f,f'') \leq \sum_{x
\in P : f'(x)=*} p(x) < 2\epsilon$. Suppose that $f''$ is not monotone, so there are $x < y$ such
that $f''(x)=1,f''(y)=0$; assume $x$ is a minimal point where this occurs. Since $x \prec y$ it must
be the case that $f'(y) \neq *$, so $f'(y)=f(y)=0$. Since $f'$ is monotone except on $*$-valued
points, it must be that $f'(x)=*$, and $1=f''(x)=\max_{z < x} f''(z)$. But then $z < x$ and
$f''(z)=1,f''(y)=0$, so $x$ was not minimal, a contradiction. Thus $f''$ is monotone and
$\dist(f,f'') < 2\epsilon$.

We therefore conclude that if $f$ is $\epsilon$-far from monotone on $P$ according to $p$ then $g$
is at least $(\epsilon/2)$-far from monotone on $(X,Y,E)$ according to $q$. Therefore, by simulating
the distribution-free one-sided sample-based tester on $(X,Y,E)$ with parameter $\epsilon/2$ we
obtain a distribution-free one-sided tester for $P$.
\end{proof}

\begin{theorem}
\label{thm:monotonicity upper bound}
For any finite partial order $P$ of size $|P|=n$, there is a distribution-free, one-sided,
sample-based $\epsilon$-tester for monotonicity with sample complexity $O\left(\frac{\sqrt
n}{\epsilon}\right)$.
\end{theorem}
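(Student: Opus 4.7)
By \cref{lemma:bipartite reduction}, it suffices to construct a distribution-free, one-sided, sample-based $\epsilon$-tester of complexity $O(\sqrt n/\epsilon)$ for bipartite partial orders $(X, Y, E)$ with $|X|=|Y|=n$. The tester I propose is the natural one: draw $m = C\sqrt n/\epsilon$ i.i.d.\ samples from the unknown distribution $p$ and reject iff the sample exhibits a \emph{violating pair}, i.e.\ sampled $x \in X$ and $y \in Y$ with $(x,y) \in E$, $f(x)=1$, and $f(y)=0$. Monotone functions admit no violating pair, so this is one-sided.

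For soundness, let $V \subseteq E$ be the set of violating edges. If $C$ is any vertex cover of $V$, then flipping $f$ on $C$ (to $0$ on $C \cap X$, to $1$ on $C \cap Y$) makes $f$ monotone, so $\epsilon$-farness forces every vertex cover of $V$ to have total $p$-weight at least $\epsilon$. Because the vertex-cover LP on the bipartite graph $V$ with vertex costs $p(\cdot)$ is integral (total unimodularity of bipartite edge-vertex incidence matrices), LP duality produces a fractional matching $\{w_e \ge 0\}_{e \in V}$ with $\sum_{e \ni v} w_e \le p(v)$ for every $v$ and $\sum_e w_e \ge \epsilon$.

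A Cauchy--Schwarz chain on the $Y$-side then lower-bounds the quantity $T \define \sum_{(x,y) \in V} p(x)p(y)$, the probability that two independent samples from $p$ form a violating pair:
\begin{align*}
\epsilon^2
&\le \Big(\sum_{y \in Y}\sum_{e \ni y} w_e\Big)^2
\le n \sum_{y \in Y}\Big(\sum_{e \ni y} w_e\Big)^2 \\
&\le n \sum_{y \in Y} p(y) \sum_{e \ni y} w_e
= n \sum_{e \in V} p(y_e) w_e
\le n \sum_{e \in V} p(x_e)p(y_e) = nT,
\end{align*}
using $\sum_{e \ni y} w_e \le p(y)$ and $w_e \le p(x_e)$. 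Hence $T \ge \epsilon^2/n$.

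With $m = C\sqrt n/\epsilon$, the expected number of violating pairs among the $\binom m 2$ sample pairs is therefore at least $C^2$. The main obstacle is upgrading this expectation to a probability of at least $2/3$: the events ``pair $(s_i,s_j)$ is a violation'' are positively correlated when they share a sample. I expect to handle this by Poissonizing the sampling (so that each vertex $v$ appears independently with probability $1-e^{-mp(v)}$) and then applying Janson's inequality to the family of ``both endpoints of $e$ are sampled'' events indexed by $e \in V$, bounding the dependency sum $\Delta$ through $p$-weighted neighborhood sums that are themselves controlled by $T$; taking $C$ a sufficiently large constant then yields the desired constant probability of witness detection.
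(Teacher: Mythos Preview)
Your reduction to bipartite posets and the vertex-cover/LP-duality/Cauchy--Schwarz chain proving $T = \sum_{(x,y)\in V} p(x)p(y) \ge \epsilon^2/n$ are correct and elegant, and this route is genuinely different from the paper's. The gap is in the last step. You assert that Janson will finish the argument after Poissonization, with the dependency sum $\Delta$ ``controlled by $T$'', but you do not justify this and it is not clear that it can be done as stated. Concretely, take a single hub $y_0$ with $p(y_0)$ of order $1$ and many light $x$'s adjacent to $y_0$: the violating edges form a star, every pair of edges intersects, and one computes $\Delta \approx \mu^2$. Extended Janson then yields only $\Pr[\text{no violation}] \le e^{-\Theta(1)}$, a constant that does not shrink as $C$ grows; Paley--Zygmund similarly gives only $\Pr[\text{violation}] \gtrsim 1/2$. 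More generally, the only crude bounds available (e.g.\ $\Delta \le (Q_X+Q_Y)\mu$ with $Q_X+Q_Y \le 2m$) are far too weak, and $T$ by itself does not visibly control $\Delta$. Completing this line would seem to require a structural case split (heavy vertices handled directly, light vertices by Janson), not a one-shot application.

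The paper avoids second moments altogether. It shows $\sum_{x\in X_1}\min\bigl(p(x),q(x)\bigr)\ge\epsilon$, where $q(x)=p(V_x)$ is the $p$-mass of $y$'s forming a violation with $x$; then it \emph{truncates} the sampling distribution on $X_1$ to $r(x)=\min(p(x),q(x))$ (which can only decrease the chance of seeing a violation). Under $r$ one has $r(V_x)\ge r(x)$, so the event ``some $x\in X_1$ is drawn twice under $r$'' is stochastically dominated by ``some violating pair is drawn''. The problem thus reduces to a birthday bound with $\Omega(\epsilon m)$ balls in at most $n$ bins (\cref{prop:general birthday}), which gives the constant success probability directly. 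This truncation-to-birthday trick is the missing idea relative to your plan.
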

\begin{proof}
By \cref{lemma:bipartite reduction}, it suffices to consider bipartite partial orders.
Let $(X,Y,E)$ be a bipartite partial order.  On input $f : X \cup Y \to \zo$ and distribution $p : X
\cup Y \to \bR$, the tester will sample a set $Q$ of $m=O\left(\frac{\sqrt n}{\epsilon}\right)$
points from $p$ and reject if there exist $x \in X \cap Q, y \in Y \cap Q$ such that $x < y$ and
$f(x)=1,f(y)=0$; we call such a pair a \emph{violating pair}. 

Suppose $f$ is $\epsilon$-far from monotone. Let $X_1 \define \{ x \in X : f(x)=1\}$ and $Y_0
\define \{ y \in Y : f(y) = 0\}$.
For each $x \in X_1$ let $V_x \define \{ y \in Y_0 : x < y \}$ be the set of points $y \in Y_0$ such
that $(x,y)$ is a violating pair, and define $q(x) \define \sum_{y \in V_x} p(y)$ be the total
probability mass of all the points $y$ such that $(x,y)$ is a violating pair. Suppose for
contradiction that
\[
  \sum_{x \in X_1} \min(p(x),q(x)) < \epsilon \,.
\]
Construct a monotone function $h$ as follows. For each $x \in X_1$ (in arbitrary order), if $p(x) <
q(x)$ set $h(x)=0$, otherwise set $h(y)=1$ for all $y \in V_x$. The resulting function is now
monotone since each violating pair $(x,y)$ now has either $h(x)=0$ or $h(y)=1$. The distance between
$f$ and $h$ increases by at most $\min(p(x),q(x))$ for each $x \in X_1$, so $\dist_p(f,h) <
\epsilon$, a contradiction. Therefore we must have $\sum_{x \in X_1} \min(p(x),q(x)) \geq \epsilon$.

Define a new distribution $r$ on $X \cup Y$ that is initialized to $r=p$ but then is updated to set $r(x) = \min(p(x),q(x))$ for each $x \in
X_1$, reassigning the remaining probability mass $p(x) - r(x)$ to an arbitrary point not in $X_1 \cup Y_0$ (we may
assume such a point exists since otherwise $f$ is the trivial function where every pair $x < y$ is a
violating pair). This reassignment can only decrease the probability of finding a violating pair in
the sample $Q$. Under the new distribution $r(X_1) = \sum_{x \in X_1} r(x) = \sum_{x \in X_1}
\min(p(x),q(x)) \geq \epsilon$, and for each $x \in X_1, r(x) \leq r(V_x)$. Let $R$ be a set of $m$
independent points drawn from $r$, so $\Pr{Q \text{ contains a violating pair}} \geq \Pr{R \text{
contains a violating pair}}$.

Now observe that for each $x \in X_1$, the probability that $R$ contains some violating pair $(x,y)$
is at least the probability that $x$ occurs twice in $R$; this is because $r(V_x) = q(x) \geq r(x)$.
We will now bound the probability that there exists $x \in X_1$ that occurs twice in $R$. The
expected number of points in $R \cap X_1$ is $\Ex{|R \cap X_1|} = m \sum_{x \in X_1} r(x) \geq
\epsilon m$.  By the Chernoff bound,
\[
  \Pr{|R \cap X_1| < \frac{\epsilon m}{2}} \leq \exp{-\frac{\epsilon m}{8}} = o(1) \,.
\]
Assuming $|R \cap X_1| \geq \frac{\epsilon m}{2}$, by  \cref{prop:general birthday}, the
probability that each $x \in R \cap X_1$ occurs at most once in $R$ is at most
\[
  \exp{-\frac{(|R \cap X_1|-1)^2}{2|X_1|}}
  \leq \exp{-\frac{((\epsilon m/2)-1)^2}{2n}}
  < 1/4 \,,
\]
for sufficiently large $m = O\left(\frac{\sqrt n}{\epsilon}\right)$. Therefore
\[
  \Pr{\exists x \in X_1, x \text{ occurs at least twice in } R}
  \geq 1 - \frac 1 4 - o(1) \geq \frac 2 3 \,.
\]
Finally,
\begin{align*}
  \Pr{Q \text{ contains a violating pair}}
  &\geq \Pr{ R \text{ contains a violating pair}} \\
  &\geq \Pr{ \exists x \in X_1, x \text{ occurs at least twice in } R}
  \geq \frac 2 3 \,. \qedhere
\end{align*}
\end{proof}

\section{Remarks on Testing with One-Sided Error}
\label{section:lvc}

In this section we briefly discuss the relationship of the VC and LVC dimensions to one-sided
testing, and using a theorem of Goldreich \& Ron \cite{GR16} that relats one- to two-sided error we
establish a more general but weaker lower bound on two-sided testing via LVC dimension.

In \cref{section:lower bound}, we motivated the definition of LVC dimension by observing that a
property tester can reject as soon as it finds a certificate of non-membership. Unlike two-sided
error testers, one-sided testers \emph{must} find a certificate of non-membership, so LVC is also an
essential quantity for one-sided testers; this is made formal in the next proposition:
\begin{proposition}
\label{prop:one-sided lower bound}
Let $\cH$ be a set of functions $\cX \to \zo$, let $\epsilon > 0$, and let $\cD$ be any distribution
over $\cX$ such that there exists $f : \cX \to \zo$ with $\dist_\cD(f,\cH) > \epsilon$. Then any
one-sided $\epsilon$-tester for $\cH$ over $\cD$ (even adaptive testers using queries)
requires at least $\LVC_S(\cH)$ queries, where $S$ is the support of $\cD$, under the assumption
that all queries fall within $S$ (which holds in particular for sample-based testers).
\end{proposition}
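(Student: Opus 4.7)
The plan is to exploit the defining property of one-sided error: such a tester must accept every $h \in \cH$ with probability $1$, so it may only reject when it has collected evidence genuinely inconsistent with every function in $\cH$. I will show via a coupling argument that if the tester uses at most $\LVC_S(\cH)$ queries, no such evidence can ever be collected, contradicting the assumed existence of an $f$ that is $\epsilon$-far from $\cH$ with respect to $\cD$.

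First I fix an arbitrary $f$ with $\dist_\cD(f,\cH) > \epsilon$ (guaranteed to exist by hypothesis) and an arbitrary one-sided $\epsilon$-tester $A$ for $\cH$ making $q$ queries, all falling within $S$ by assumption. I then condition on an arbitrary setting $r$ of $A$'s internal randomness and let $T = T(r, f) \subseteq S$ denote the (possibly adaptively chosen) set of points queried by $A$ on input $(\cD, f)$ with seed $r$. The key step: assuming $q \le \LVC_S(\cH)$, we have $|T| \le \LVC_S(\cH)$, and so $T$ is shattered by $\cH$ (any subset of $S$ of size at most $\LVC_S(\cH)$ is a subset of some shattered set of size exactly $\LVC_S(\cH)$, hence is itself shattered). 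Therefore there exists $h_r \in \cH$ with $h_r|_T = f|_T$.

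Next I would argue by induction on the query index that $A$'s transcript on $(\cD, h_r)$ with seed $r$ is identical to its transcript on $(\cD, f)$: the $i$-th query is a deterministic function of $r$ and the labels returned on the previous $i-1$ queries, and those labels agree because $h_r$ and $f$ coincide on $T$. Consequently $A$'s output is the same under both inputs, and since $h_r \in \cH$ and $A$ has one-sided error, this output must be ``accept''. Since $r$ was arbitrary, $A$ accepts $f$ with probability $1$, contradicting the requirement that it reject $f$ with probability at least $2/3$. The argument is essentially structural, and I do not anticipate any substantive obstacle beyond carefully handling the adaptive case via the coupling just described; note too that the same argument goes through verbatim for sample-based testers, where ``queries'' are simply the i.i.d.\ samples drawn from $\cD$ and $r$ encodes the random draws.
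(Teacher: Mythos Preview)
Your proposal is correct and follows essentially the same approach as the paper: fix a run of the tester, observe that the at-most-$\LVC_S(\cH)$ queried points form a shattered subset of $S$, pick $h \in \cH$ matching $f$ on those points, and conclude the tester must accept. Your treatment is slightly more explicit about the adaptive coupling (induction on query index) and about why subsets of size at most $\LVC_S(\cH)$ are shattered, but the argument is the same.
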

\begin{proof}
Suppose $A$ is any algorithm that makes at most $q$ queries, where $q \leq \LVC_S(\cH)$, and for any
function $f : \cX \to \zo$ let $\cQ_f$ be the distribution of query sequences $((x_1,f(x_1)),
\dotsc, (x_q,f(x_q)))$ made by the algorithm on input $f$. Since the algorithm has one-sided error,
it must accept every sequence $Q_h \sim \cQ_h$ with probability 1 when $h \in \cH$. Consider any $f
: \cX \to \zo$ and any sequence $Q_f \in \supp (\cQ_f)$. Since $q \leq \LVC_S(\cH)$ and
each $x_i \in S$, the set $\{x_1, \dotsc, x_q\}$ is shattered by $\cH$, so there exists $h \in \cH$
such that $h(x_i)=f(x_i)$ for each $i$; therefore there is $Q_h \in \supp(\cQ_h)$ such that
$Q_h=Q_f$. Then for every $f, Q_f \sim \cQ_f$ is accepted with probability 1, a contradiction.
\end{proof}

Goldreich \& Ron \cite{GR16} prove the following relationship between one- and two-sided error
testers.

\begin{theorem}[\cite{GR16}, Theorem 1.3 part 1]
\label{thm:gr16 general upper bound}
For every class $\cH$ of functions $\cX \to \zo$ with $\cX$ finite, if there is a distribution-free
sampling $\epsilon$-tester for $\cH$ using $q(\epsilon)$ samples, then there is a one-sided error
sampling $\epsilon$-tester for $\cH$ over the uniform distribution on $\cX$ using at most
$\widetilde O(q(\epsilon)^2)$ samples.
\end{theorem}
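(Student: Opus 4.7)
The plan is to construct the one-sided uniform tester $T'$ by drawing enough uniform samples and directly checking whether the observed labels admit a consistent hypothesis in $\cH$. Specifically, $T'$ draws $Q = \widetilde O(q(\epsilon)^2)$ samples $R$ from the uniform distribution on $\cX$, enumerates the (finite) collection of distinct restrictions of hypotheses in $\cH$ to $R$, and outputs $0$ if none of them agrees with $f$ on all of $R$; otherwise it outputs $1$. Because this procedure rejects only upon witnessing a genuine certificate of non-membership, one-sided error is immediate, and completeness follows since $f \in \cH$ is always consistent with itself.

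For soundness, I would argue that when $f$ is $\epsilon$-far from $\cH$ under the uniform distribution, then with high probability over the draw of $R$, no hypothesis in $\cH$ agrees with $f$ on all of $R$. The natural approach is to first amplify $T$ by independent repetitions into a tester $T^*$ with error at most $\delta = 1/Q^{O(1)}$, using $q^* = O(q(\epsilon) \log Q)$ samples per invocation, and then use the contrapositive of the correctness guarantee of $T^*$: if some $h \in \cH$ were consistent with $f$ on $R$, then the empirical distance from $f$ to $\cH$ on $R$ would be $0$, forcing $T^*$ to accept $q^*$-sample queries drawn from the empirical distribution on $R$ with probability at least $1 - \delta$. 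On the other hand, the distribution-free guarantee of $T^*$ applied to the uniform distribution on $\cX$ forces rejection with probability at least $1 - \delta$ whenever $f$ is $\epsilon$-far from $\cH$ under the uniform distribution. Playing these two guarantees against each other should yield the desired contradiction.

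The main obstacle is bridging the gap between these two guarantees of $T^*$. Concretely, the distribution-free guarantee applies to samples drawn from a fixed distribution, whereas what we need is a statement about the absence of a consistent hypothesis within a large set of samples drawn specifically from the uniform distribution. Translating a rejection of $T^*$ on uniform samples into absence of a consistent hypothesis on the whole sample $R$ at face value requires a uniform convergence bound over the restrictions of $\cH$ to $R$, which would depend on $\VC(\cH)$---a quantity that may be much larger than $q(\epsilon)$. Circumventing this via a more careful argument that uses $T^*$ itself to control the relevant complexity is what gives the quadratic blow-up $\widetilde O(q(\epsilon)^2)$ rather than a $\VC$-dependent bound; the logarithmic factor hidden in the $\widetilde O$ notation accounts for the amplification cost of turning $T$ into $T^*$.
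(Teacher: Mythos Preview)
The paper does not prove this theorem; it merely cites it from Goldreich and Ron. So there is no ``paper's own proof'' to compare against. That said, your overall architecture---draw $Q$ uniform samples $R$, accept iff some $h\in\cH$ is consistent with $f$ on $R$---is exactly the construction used in \cite{GR16}, and your completeness and one-sidedness arguments are fine.

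The gap is in soundness: you correctly identify the two guarantees of $T^*$ (accept when sampling from the empirical distribution on $R$ if $f$ is consistent with some $h$ there; reject when sampling from the true uniform distribution if $f$ is $\epsilon$-far) but you do not supply the step that links them. The missing idea is a coupling via subsampling. Consider the experiment: draw $R=(x_1,\ldots,x_Q)$ i.i.d.\ uniform from $\cX$, then draw $q^*$ indices $i_1,\ldots,i_{q^*}$ uniformly from $[Q]$ with replacement, and feed $(x_{i_j},f(x_{i_j}))_{j\le q^*}$ to $T^*$. Conditioned on the indices being distinct, the resulting $q^*$ labelled points are exactly i.i.d.\ uniform from $\cX$; the probability the indices collide is at most $(q^*)^2/Q$. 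Hence, writing $A$ for the event that $T^*$ accepts,
\[
\Pr[A]\ \le\ \delta + \frac{(q^*)^2}{Q}
\]
by the soundness of $T^*$ on the true uniform distribution. On the other hand, for any fixed $R$ in the event $E=\{\exists h\in\cH:\ h|_R=f|_R\}$, the subsample is drawn from the uniform distribution on $R$ and is labelled identically to $h\in\cH$, so the distribution-free completeness of $T^*$ gives $\Pr[A\mid R]\ge 1-\delta$; averaging, $\Pr[A]\ge(1-\delta)\Pr[E]$. Combining,
\[
\Pr[E]\ \le\ \frac{\delta+(q^*)^2/Q}{1-\delta},
\]
which is below $1/3$ once $\delta$ is a small constant and $Q=\Theta((q^*)^2)$. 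In particular, your choice $\delta=1/Q^{O(1)}$ is not needed; a constant $\delta$ already suffices, so $q^*=O(q(\epsilon))$ and $Q=O(q(\epsilon)^2)$. Without this coupling step your plan remains a plan: the phrase ``a more careful argument that uses $T^*$ itself to control the relevant complexity'' is precisely the subsampling trick above, and it should be stated explicitly.
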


Using this relationship between one- and two-sided error testers, and the lower bound on one-sided
error in terms of LVC dimension, we get a general relationship between two-sided testers and LVC
dimension.
\begin{corollary}
\label{cor:gr lower bound}
Let $\cH$ be a class of functions $\cX \to \zo$ where $\cX$ is finite, let $\epsilon > 0$, and let
$S \subseteq \cX$ be such that there exists a distribution $\cD$ supported on $S$ and a function $f
: \cX \to \zo$ satisfying $\dist_\cD(f,\cH) > \epsilon$.  Then
\[
  \mteste(\cH) = \widetilde \Omega(\sqrt{\LVC_S(\cH)}) \,.
\]
\end{corollary}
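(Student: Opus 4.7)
The plan is to compose \cref{thm:gr16 general upper bound}---which converts a two-sided distribution-free tester into a one-sided tester at a quadratic blowup in samples---with \cref{prop:one-sided lower bound}, which lower-bounds the sample complexity of any one-sided tester by $\LVC_{S'}(\cH)$ whenever the underlying distribution has support $S'$ and admits an $\epsilon$-far function. Writing $q = \mteste(\cH)$, the composition should give $\widetilde{O}(q^2) \geq \LVC_S(\cH)$, hence $q = \widetilde{\Omega}(\sqrt{\LVC_S(\cH)})$.

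Concretely, I would first pass to the restricted class $\cH|_S = \{h|_S : h \in \cH\}$ on the domain $S$. Any distribution-free two-sided tester for $\cH$ yields one for $\cH|_S$ of no larger sample complexity: given an input $g : S \to \zo$ and a distribution $\cD'$ supported on $S$, arbitrarily extend $g$ to $\tilde{g} : \cX \to \zo$ and run the tester for $\cH$ on $(\tilde{g}, \cD')$; soundness follows because $\cD'$ is supported on $S$, so $\dist_{\cD'}(\tilde{g}, h) = \dist_{\cD'}(g, h|_S)$ for every $h \in \cH$. Applying \cref{thm:gr16 general upper bound} to $\cH|_S$ then produces a one-sided $\epsilon$-tester for $\cH|_S$ over the uniform distribution on $S$ using $\widetilde{O}(q^2)$ samples, and \cref{prop:one-sided lower bound} lower-bounds the sample complexity of any such tester by $\LVC(\cH|_S) = \LVC_S(\cH)$---provided the uniform distribution on $S$ admits some $f' : S \to \zo$ with $\dist(f', \cH|_S) > \epsilon$.

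The main obstacle is verifying this last condition: the corollary's hypothesis provides a far function only under some possibly non-uniform $\cD$ on $S$. I see two natural ways around this. The cleaner option is to observe that the Goldreich--Ron construction is essentially a black-box reduction that works over any fixed finite sampling distribution rather than just the uniform one, so one can run it over $\cD$ directly and invoke \cref{prop:one-sided lower bound} with $\cD$ itself, whose support is exactly $S$ and which by hypothesis already admits a far function. The alternative is a structural argument: the hypothesis implies $\cH|_S \neq \zo^S$, and \cref{lemma:random labelling} applied to a suitable subset of $S$ of size a small constant multiple of $\VC_S(\cH)$ produces a uniformly random labelling that is $\epsilon$-far from $\cH|_S$ under the uniform distribution on that subset, which supplies the required witness and closes the chain.
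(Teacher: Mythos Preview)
Your approach matches the paper's: the paper's proof is literally the one-liner ``This follows from \cref{thm:gr16 general upper bound} and \cref{prop:one-sided lower bound}.'' You have reconstructed this chain correctly and in fact gone further than the paper by flagging a genuine subtlety---\cref{thm:gr16 general upper bound} as stated only yields a one-sided tester over the \emph{uniform} distribution on the (restricted) domain, whereas \cref{prop:one-sided lower bound} needs that distribution to admit an $\epsilon$-far function, and the corollary's hypothesis only supplies one under the possibly non-uniform $\cD$. Your option~(a)---that the Goldreich--Ron conversion actually works with any fixed known sampling distribution, so one may run it over $\cD$ itself and then invoke \cref{prop:one-sided lower bound} with $\cD$ directly---is the clean fix and is presumably what the paper silently intends. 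Your option~(b) is shakier: \cref{lemma:random labelling} requires a subset of size at least $K \cdot \VC_S(\cH)$ inside $S$ and only guarantees farness for $\epsilon$ below a fixed constant $\epsilon_0$, neither of which is promised by the corollary's hypotheses, so stick with~(a).
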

\begin{proof}
This follows from \cref{thm:gr16 general upper bound} and  \cref{prop:one-sided
lower bound}.
\end{proof}
Our main result shows that for classes with large LVC dimension, two-sided error testing is not much
more efficient than learning. We would like to say that two-sided error also does not give a
significant advantage over one-sided error, so we want an upper bound on one-sided testing in terms
of the VC dimension.  A well-known relationship between testing and (proper) PAC learning
\cite{GGR98} says that testing is easier than learning. After running the learning algorithm as a
black box and using $O(1/\epsilon)$ additional samples, the algorithm can then reject $f$ if it is
not $\epsilon$-close to the learned function $h$. Since proper PAC learning requires
$O(\VC(\cH)\tfrac{1}{\epsilon}\log\tfrac{1}{\epsilon})$ samples (see \cite{SSBD14}), this gives the
same upper bound for testing, but this black-box algorithm has two-sided error since the learning
algorithm may fail. We can modify the PAC learning upper bound to give a one-sided error testing
upper bound; we remark that this result is likely not new, though we have not found a reference for
it (a similar proof with a weaker bound was presented in \cite{ADPR03}).
\begin{lemma}
\label{lemma:one-sided upper bound}
Let $\cH$ be a set of functions $\cX \to \zo$ with finite $d = \VC(\cH) > 0$. Then for any
$\epsilon > 0$, $\mone_\epsilon(\cH) = O\left(\frac{d}{\epsilon}\log\frac{1}{\epsilon}\right)$.
\end{lemma}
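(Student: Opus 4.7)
The plan is to describe a straightforward one-sided tester based on consistency with $\cH$, and then bound its sample complexity using the classical $\epsilon$-net theorem. The tester draws a set $Q$ of $m = O\left(\tfrac{d}{\epsilon}\log\tfrac{1}{\epsilon}\right)$ labelled samples $(x,f(x))$ from $\cD$ and accepts if and only if there exists some $h \in \cH$ that is consistent with every sample, i.e.\ $h(x)=f(x)$ for all $(x,f(x)) \in Q$. (This is purely existential; the tester is not required to produce such an $h$ efficiently.)

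For one-sided soundness, note that if $f \in \cH$ then $h=f$ itself is consistent with every possible sample, so the tester accepts with probability $1$.

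For completeness, I would apply the $\epsilon$-net theorem to the set system
\[
  \cR_f \define \{\,D_h : h \in \cH\,\}, \qquad D_h \define \{x \in \cX : h(x) \neq f(x)\}.
\]
The key observation is that $\VC(\cR_f) = \VC(\cH)$: a set $T$ is shattered by $\cR_f$ if and only if for every $\ell : T \to \zo$ there exists $h \in \cH$ such that $\mathbf{1}[h(x) \neq f(x)] = \ell(x)$ for all $x \in T$, which (by flipping $\ell$ on points where $f(x) = 1$) is equivalent to saying that $T$ is shattered by $\cH$. By the Haussler--Welzl $\epsilon$-net theorem, a sample of $m = O\left(\tfrac{d}{\epsilon}\log\tfrac{1}{\epsilon}\right)$ i.i.d.\ draws from $\cD$ forms an $\epsilon$-net for $\cR_f$ with probability at least $2/3$, meaning it intersects every $D_h$ with $\cD(D_h) \geq \epsilon$. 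If $f$ is $\epsilon$-far from $\cH$, then $\cD(D_h) = \dist_\cD(f,h) \geq \epsilon$ for every $h \in \cH$, so with probability at least $2/3$ the sample contains, for every $h \in \cH$, some witness point $x$ with $h(x) \neq f(x)$. In that event no $h \in \cH$ is consistent with $Q$ and the tester rejects.

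There is no real obstacle here, only the mild subtlety of ensuring the VC dimension is preserved when passing from $\cH$ to the symmetric-difference system $\cR_f$; this step must be written out explicitly but is immediate from the definition of shattering, and the rest is a direct invocation of the $\epsilon$-net theorem.
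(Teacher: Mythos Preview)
Your proposal is correct and follows essentially the same approach as the paper: the same consistency tester, the same observation that the symmetric-difference class $\cR_f$ (the paper writes it as $f \oplus \cH$) has the same VC dimension as $\cH$, and the same invocation of the $\epsilon$-net theorem to handle the far case. The only difference is notational.
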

\begin{proof}
Let $f : \cX \to \zo$ be the input function, and let $\cD$ be the input distribution over $\cX$.
The algorithm is as follows. Draw a set $S$ of $m =
O\left(\frac{d}{\epsilon}\log\frac{1}{\epsilon}\right)$ labelled examples from $\cD$ and
accept if there exists $h \in \cH$ such that $f(x)=h(x)$ for all $x \in S$; otherwise, reject.

If $f \in \cH$ then this algorithm accepts with probability 1, so assume that $f$ is $\epsilon$-far
from $\cH$. Define the class $f \oplus \cH \define \{ f \oplus h : h \in \cH\}$ and observe that
a set is shattered by $f \oplus \cH$ iff it is shattered by $\cH$. By standard VC dimension
arguments (e.g.~\cite{SSBD14} Theorem 28.3), with probability at least $2/3$ a sample $S$ of size
$m$ is an $\epsilon$-net for $f \oplus \cH$, meaning that for every $h \in \cH$, if $\Pru{x \sim
\cD}{f(x) \neq h(x)} = \Pru{x \sim \cD}{(f \oplus h)(x)=1} \geq \epsilon$ then there exists $x \in
S$ such that $(f \oplus h)(x)=1$, i.e.~$f(x) \neq h(x)$. Since $\Pru{x \sim \cD}{f(x) \neq h(x)}
\geq \epsilon$ for every $h \in \cH$, this implies that the algorithm rejects.
\end{proof}

\newpage
\appendix

\section{Glossary}
\label{glossary}

We include a summary of the (standard) property testing definitions used in the paper in this section for the convenience of the reader.

\begin{definition}
Let $\cH$ be a set of functions $\cX \to \zo$ and $\epsilon > 0$. A \emph{distribution-free sample
$\epsilon$-tester} for $\cH$ with \emph{sample complexity} $m$ is a (randomized) algorithm $A$ that,
for any input distribution $\cD$ over $\cX$, input function $f : \cX \to \zo$, receives a sequence
$S$ of $m$ independent labelled examples $(x,f(x))$ where $x \sim \cD$, and outputs $A(S) = 0$ or
$A(S)=1$, such that, for any $f,\cD$:
\begin{itemize}
\item If $f \in \cH$ then $\Pru{A,S}{A(S)=1} \geq 2/3$; and,
\item If $\dist_\cD(f,\cH) > \epsilon$ then $\Pru{A,S}{A(S)=0} \geq 2/3$.
\end{itemize}
We write $\mteste(\cH)$ for the minimum number $m$ such that there exists a distribution-free
sample $\epsilon$-tester for $\cH$ with sample complexity $m$. A distribution-free sample tester for
$\cH$ is \emph{one-sided} if for all $f \in \cH, \Pru{S}{A(S)=1}=1$. We will write
$\mone_\epsilon(\cH)$ for the minimum $m$ such that there exists a one-sided distribution-free
sample $\epsilon$-tester with sample complexity $m$.
\end{definition}

\begin{definition}[Tolerant Testing]
Let $\cH$ be a set of functions $\cX \to \zo$ and $\epsilon_2 > \epsilon_1 \geq 0$. A
\emph{$(\epsilon_1,\epsilon_2)$-tolerant distribution-free tester} for $\cH$ with \emph{query
complexity} $q$ is a randomized algorithm $A$ that, for any input distribution $\cD$ and function $f
: \cX \to \zo$, may query $f(x)$ at arbitrary points $x \in \cX$ or sample $(x,f(x))$ for $x \sim
\cD$, where the total number of values $f(x)$ queried or sampled is at most $q$. Write $A(f,\cD)$
for the (random) output of $A$ on inputs $f,\cD$, which must satisfy the
following:
\begin{itemize}
\item If $\dist_\cD(f,\cH) \leq \epsilon_1$ then $\Pru{A}{A(f,\cD) = 1} \geq 2/3$; and,
\item If $\dist_\cD(f,cH) \geq \epsilon_2$ then $\Pru{A}{A(f,\cD) = 0} \geq 2/3$.
\end{itemize}
Note that we allow the algorithm $A$ to be \emph{adaptive}, i.e.~its choice of query $x$ may depend
on the answers to previous queries (or samples). We will write $\qtole(\cH)$
for the minimum $q$ such that there exists an $(\epsilon_0,\epsilon_1)$-tolerant, distribution-free,
adaptive query tester for $\cH$. It is clear that $\qtol_{\epsilon_0,\epsilon_1}(\cH) \geq
\qtol_{0,\epsilon_1}(\cH)$ for any $\epsilon_0 < \epsilon_1$.
\end{definition}

\begin{definition}[Active Testing \cite{BBBY12}]
Let $\cH$ be a set of functions $\cX \to \zo$ and $\epsilon > 0$. A \emph{distribution-free active
$\epsilon$-tester} with sample complexity $m$ and query complexity $q$ is a randomized algorithm $A$
that, for any input distribution $\cD$ and (measurable) function $f : \cX \to \zo$, receives a set
$S$ of $m$ independent (unlabelled) samples from $\cD$, and then (adaptively) queries $f(x)$ on $q$
points $x \in S$. Write $A(f,S)$ for the output of $A$ on input $f$ and sample set $S$. $A$ must
satisfy:
\begin{itemize}
\item If $f \in \cH$ then $\Pru{S,A}{A(f,S)=1} \geq 2/3$; and,
\item If $\dist_\cD(f,\cH) \geq \epsilon$ then $\Pru{S,A}{A(f,S)=0} \geq 2/3$.
\end{itemize}
\end{definition}

\section{Lower Bound on Support Size Distinction}

We provide an exposition of the proof of \cref{thm:wy easy application} in this section. The proof that follows is a very slight adaptation of the proof of Wu \& Yang~\cite{WY19}, with the only 
changes to their proof being the ones necessary to adapt the 
lower bound to be on the decision problem of support size distinction (SSD) instead of the support size estimation (SSE) problem.

We begin by 
defining a decision problem for distributions-of-distributions over $\bN$.

\begin{definition}[Meta-Distribution Decision Problem]
Let $\cP,\cQ$ be two distributions over probability distributions on $\bN$.
$\DEC(\cP,\cQ)$ is the minimum number $m$ such that there exists an algorithm $A$ that draws a set
$S$ of $m$ independent samples from its input distribution, and its output $A(S)$ satisfies the
following:
\begin{itemize}
\item $\Pru{p \sim \cP, S \sim p^m}{A(S) = 1} \geq 2/3$; and,
\item $\Pru{q \sim \cQ, S \sim q^m}{A(S) = 0} \geq 2/3$.
\end{itemize}
\end{definition}

It is clear that \cref{thm:wy easy application} follows from the fact that, for any $0 <
\alpha < \beta \leq 1$ such that $\alpha \geq \delta$ and $\beta \leq 1-\delta$,
\[
  \SSD(n,\alpha,\beta) \geq \sup_{\cP,\cQ} \DEC(\cP,\cQ) \,,
\]
where the supremum is taken over all distributions $\cP,\cQ$ over distributions on $[n]$ such that
any $p \in \supp(\cP)$ has $|\supp(p)| \leq \delta n \leq \alpha n$, any $q \in \supp(\cQ)$ has
$|\supp(q)| \geq (1-\delta) n \geq \beta n$, and any $p \in \supp(\cP) \cup \supp(\cQ)$ has $p_i
\geq 1/n$ for each $i \in \supp(p)$. Therefore, to establish  \cref{thm:wy easy application},
it suffices to prove the following theorem.
\label{section:wy}
\begin{theorem}[\cite{WY19}]
\label{thm:wy adaptation}
There is a constant $C > 0$ such that,
for every $n \in \bN$ and every $C\frac{\sqrt{\log n}}{n^{1/4}} < \delta < \tfrac12$,
there exist distributions $\cP,\cQ$ over the space of probability
distributions on $[n]$, such that $\DEC(\cP,\cQ) = \Omega\left(\frac{n}{\log
n}\log^2\frac{1}{1-\delta}\right)$, and where:
\begin{itemize}
\item Every $p \in \cP$ has support size at most $\delta n$;
\item Every $p \in \cQ$ has support size at least $(1-\delta) n$;
\item Every $p \in \cP \cup \cQ$ has $p(x) \geq 1/n$ for all $x \in \supp(p)$.
\end{itemize}
\end{theorem}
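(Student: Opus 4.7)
The proof will follow the Wu--Yang moment-matching framework, modified so that the two priors $\cP,\cQ$ are concentrated on distributions with support sizes near $\delta n$ and $(1-\delta)n$ respectively; this upgrades the known \emph{estimation} lower bound to a \emph{decision} lower bound. The first step is to pass to the Poissonized sampling model, where $m$ samples from $p$ are replaced by independent counts $N_i \sim \Poi(mp_i)$. This turns the observation law into a product measure over coordinates, so total variation can be analyzed coordinate by coordinate, with only $o(1)$ loss from the Poissonization reduction.

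\textbf{Constructing the priors.} The next step is to construct two probability measures $\mu_0, \mu_1$ on $\{0\} \cup [\eta, 1]$, for a carefully chosen scale $\eta > 0$, with the properties that (i) $\mu_0(\{0\}) = 1-\delta$ while $\mu_1(\{0\}) = \delta$, so that the expected number of nonzero coordinates is about $\delta n$ and $(1-\delta)n$ respectively; and (ii) $\mu_0, \mu_1$ match all moments $\int \lambda^j \, d\mu_0 = \int \lambda^j \, d\mu_1$ for $1 \le j \le K$, with $K = \Theta(\log n)$. Such a pair of measures exists as the dual optimum of a best polynomial approximation problem: the best approximation of the indicator $\mathbf{1}[\lambda > 0]$ on $[\eta,1]$ by polynomials of degree $K$. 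Define $\cP$ (resp.~$\cQ$) by drawing $p_1,\dots,p_n$ i.i.d.~from $\mu_0$ (resp.~$\mu_1$) scaled by $1/n$, and conditioning on $\sum_i p_i = 1$; standard concentration shows the normalization step contributes only $o(1)$ to TV and only $o(1)$ to the support-size count. By a Chernoff bound the support size under $\cP$ concentrates below $\delta n$ (tuning parameters so the expected support is slightly less than $\delta n$) and above $(1-\delta)n$ under $\cQ$, and conditioning on these $1-o(1)$ events produces priors supported on distributions satisfying the theorem's constraints. The lower bound $p_i \ge 1/n$ is built into the construction by choosing the lower endpoint of the interval appropriately.

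\textbf{Indistinguishability and main obstacle.} Under the Poisson model, the observation law factors over the $n$ coordinates as a product of mixtures $\Poi_\mu(N) = \int \Poi(m\lambda)(N)\, d\mu(\lambda)$. Because $\mu_0$ and $\mu_1$ agree on moments up to degree $K$, the difference $\Poi_{\mu_0}(N) - \Poi_{\mu_1}(N)$ vanishes on the first $K$ terms of the Taylor expansion of the Poisson pmf in $m\lambda$; a standard calculation yields a coordinate-wise TV bound of roughly $(Cem/K)^K$ times the mass placed on the nonzero region. Tensorizing over the $n$ coordinates and choosing $m \le c \cdot (n/\log n)\log^2\tfrac{1}{1-\delta}$ for small enough constant $c$ drives the total TV to $o(1)$, which establishes the lower bound. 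The main obstacle is the polynomial-approximation step: obtaining the sharp factor $\log^2\tfrac{1}{1-\delta}$ requires that the best degree-$K$ approximation of $\mathbf{1}[\lambda > 0]$ on the interval $[\eta, 1]$ decay at the Chebyshev rate $\exp(-\Omega(K\sqrt{\eta}))$, which in turn pins down the scaling of $\eta$ as a function of $\delta$. This is the delicate piece that Wu--Yang execute through a change of variables to $[-1,1]$ and classical results on Chebyshev approximation of jump functions; the present adaptation only adds the support-size concentration argument needed to convert estimation-hardness into decision-hardness.
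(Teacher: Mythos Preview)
Your proposal is correct and follows essentially the same approach as the paper: both pass to the Poisson sampling model, construct i.i.d.\ priors from moment-matched random variables supported on $\{0\}\cup[\eta,\lambda]$ (the paper invokes Wu--Yang's Lemma~7 directly rather than re-deriving it via the dual polynomial-approximation problem, but these are two sides of the same coin), and then condition on high-probability support-size and near-normalization events to obtain the decision lower bound. The only minor imprecision is that you cannot prescribe $\mu_0(\{0\})=1-\delta$ and $\mu_1(\{0\})=\delta$ exactly while simultaneously matching $K$ moments; what the construction actually gives (and what the paper uses) is a gap $\Pr{P>0}-\Pr{Q>0}$ determined by the parameters, which is then tuned to be slightly larger than $1-\delta$ so that after concentration and the slack factor $(1-2\alpha)$ the support sizes land on the correct sides of $\delta n$ and $(1-\delta)n$.
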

What follows is adapted from the proofs of Wu \& Yang \cite{WY19}.

\begin{definition}
For any $\nu \geq 0$, let $\cD_n(\nu)$ be the set of vectors $p \in \bR^n$ such that each
$p_i$ satisfies $p_i \in \{0\} \cup \left[ \frac{1+\nu}{n}, 1\right]$, and $\left| 1 - \sum_i
p_i \right| \leq \nu$. For $p \in \cD_n(\nu)$, we will write $\supp(p) = \{i \in [n] : p(i) > 0\}$.
Note that $\cD_n(0)$ is the set of probability distributions over $[n]$ with densities at least
$1/n$ on the support.
\end{definition}

\begin{definition}[Poisson Sampling Model]
Let $\cP,\cQ$ be distributions over $\cD_n(\nu)$.
Define $\widetilde{\DEC}(\cP,\cQ)$ as the smallest number $m$ such that there is an algorithm $A$
that does the following on input $p \in \cD_n(\nu)$: For each $i \in \bN$, $A$ receives a vector $s
: \bN \to \bN$ such that $s(i) \sim \Poi(mp(i))$. $A(s)$ outputs 0 or 1, and
satisfies:
\begin{itemize}
\item $\Pru{p \sim \cP,s}{A(s) = 1} \geq 2/3$;
\item $\Pru{q \sim \cQ,s}{A(s) = 0} \geq 2/3$.
\end{itemize}
\end{definition}

\begin{lemma}
\label{lemma:wy poissonization}
For any $n,\nu$, suppose that $\cP,\cQ$ are distributions over $\cD_n(\nu)$. Then for distributions
$\cP', \cQ'$ over $\cD_n(0)$ defined by choosing $p \sim \cP$ and taking $p / \sum_i p(i)$, or by
choosing $q \sim \cQ$ and taking $q / \sum_i q(i)$, respectively,
\[
  \DEC(\cP',\cQ') \geq \Omega((1-\nu) \cdot \widetilde{\DEC}(\cP,\cQ)) \,.
\]
\end{lemma}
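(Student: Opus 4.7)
The plan is to reduce the Poissonized decision problem for $(\cP,\cQ)$ to the ordinary sampling decision problem for the normalized versions $(\cP',\cQ')$. Concretely, I will show that any tester $A$ for $\DEC(\cP',\cQ')$ using $m$ samples can be converted into a tester $A'$ for $\widetilde{\DEC}(\cP,\cQ)$ with Poisson parameter $m' = O(m/(1-\nu))$, which yields $\widetilde{\DEC}(\cP,\cQ) \le O(\DEC(\cP',\cQ')/(1-\nu))$ and hence the claimed bound after rearrangement.

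The key ingredient is the standard Poissonization identity. If $s(i) \sim \Poi(m' p(i))$ independently across $i$, then the total number of samples $N \define \sum_i s(i)$ is distributed as $\Poi(m' \sum_i p(i))$, and conditional on $N$, the multiset of samples (where $i$ appears with multiplicity $s(i)$) is distributed as $N$ i.i.d. draws from the normalized distribution $p' \define p / \sum_i p(i)$. Since $p \in \cD_n(\nu)$ satisfies $\sum_i p(i) \ge 1 - \nu$, we get $\Ex{N} = m' \sum_i p(i) \ge m'(1-\nu)$, and if $p \sim \cP$ (resp.\ $p \sim \cQ$) then $p'$ is distributed as $\cP'$ (resp.\ $\cQ'$).

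The tester $A'$ operates as follows: on input $s$, compute $N = \sum_i s(i)$; if $N < m$, output an arbitrary fixed answer (say $0$); otherwise, assemble the $N$ samples (taking $s(i)$ copies of $i$ in a uniformly random order), feed the first $m$ of them to $A$, and return its output. Setting $m' = Cm/(1-\nu)$ for a sufficiently large constant $C$, the Poisson parameter satisfies $m' \sum_i p(i) \ge Cm$, and a standard Chernoff-type tail bound for Poisson random variables gives $\Pr{N < m} \le e^{-\Omega(Cm)}$, which can be made smaller than any fixed constant (say $1/100$) by choosing $C$ large. Conditioned on $N \ge m$, the first $m$ samples passed to $A$ are i.i.d.\ from $p'$, so by exchangeability $A$ succeeds with probability at least $2/3$; a union bound gives $A'$ success probability at least $2/3 - 1/100$, which can be boosted back to $2/3$ at the cost of only a constant factor in the sample count.

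The main ``obstacle'' is really just bookkeeping: one must make sure that (i) the small probability that $N < m$ is absorbed correctly into the constant, and (ii) the linear dependence $1-\nu$ comes out cleanly (this is why we need $m' \sum_i p(i) \ge m'(1-\nu)$ to dominate $m$ with constant slack, rather than anything tighter). An edge case to note is when $1-\nu$ is very small, in which case the lemma is essentially vacuous and no reduction is required; the statement is interesting only when $1-\nu$ is bounded away from $0$ (or at least not exponentially small), exactly the regime in which the Poisson tail bound gives a meaningful guarantee. Since this is the standard Poissonization reduction used throughout the distribution testing literature and in particular by Wu--Yang, I expect no further difficulties.
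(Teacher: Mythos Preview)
Your proposal is correct and follows essentially the same approach as the paper: both use the Poissonization identity to simulate the fixed-sample tester inside the Poisson model, set the Poisson parameter to $m' = \Theta(m/(1-\nu))$, and use a Poisson tail bound on the total count $N$ to ensure $N \ge m$ with high probability. The only cosmetic difference is that the paper boosts the fixed-sample tester to success probability $9/10$ \emph{before} the reduction (so that $9/10 - \Pr{N<m} \ge 2/3$ directly), whereas you obtain $2/3 - 1/100$ and then boost afterward; either order works, but boosting first avoids the extra remark.
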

\begin{proof}
First we observe that for any $t$, conditioned on the event $\sum_i s_i = t$, the vector $s$ has the
same distribution as the vector $s'$ obtained by drawing $t$ points $i \in \bN$ independently from
$p / \sum_i p(i)$ and letting $s'_i$ be the number of times item $i$ is observed.

For any $k$, let $A_k$ be the algorithm that, receiving $k$ independent samples from the input
distribution, has the highest probability of correctly distinguishing $\cP'$ from $\cQ'$. Let $m$ be
the minimum number such that $A_m$ has success probability at least $9/10$, so that
\[
  \Pru{p \sim \cP',s'}{A_m(s') = 1} \geq 9/10 \qquad
  \Pru{q \sim \cQ',s'}{A_m(s') = 0} \geq 9/10 \,.
\]
Observe that (by standard boosting techniques), $m = \Theta(\DEC(\cP', \cQ'))$.  For some $m' = \rho
m$ (with $\rho > 1$ to be chosen later), we construct an algorithm in the Poisson testing model
where $s_i \sim \Poi(m'p(i))$ and upon receiving a vector $s$ with $\sum_i s_i = t$, runs $A_t(s)$.
\begin{align*}
\Pru{p \sim \cP, s}{A(s)=0}
&= \sum_{k=0}^\infty \Pr{t=k} \Pruc{p,s}{A_k(s)=0}{\sum_i s_i = k} \\
&= \sum_{k=0}^\infty \Pr{t=k} \Pruc{p',s'}{A_k(s')=0}{\sum_i s_i = k} \\
&\leq \frac{1}{10} \Pr{t \geq m} + \Pr{t < m} = \frac{1}{10} + \frac{9}{10}\Pr{t < m} \,.
\end{align*}
The same argument shows that for $q \sim \cQ$,
\[
  \Pru{q \sim \cQ, s}{A(S)=1} \leq \frac{1}{10} + \frac{9}{10}\Pr{t < m} \,,
\]
so what remains is to bound $m'$.  $t$ is a sum of independent Poisson random variables
$\Poi(m'p_i)$, so $t \sim \Poi(m'\sum_i p(i))$, which has mean $m'\sum_i p(i) \geq m'(1-\nu) =
(1-\nu)\rho m$. For $X \sim \Poi(\lambda)$ and $z < \lambda$ we use the inequality:
\[
  \Pr{X < z} \leq \frac{(e\lambda)^z e^{-\lambda}}{z^z} \,,
\]
which implies
\[
  \Pr{t < m} \leq \frac{(e(1-\nu)\rho m)^m e^{-m\rho(1-\nu)}}{m^m}
    = (e(1-\nu)\rho)^m e^{-m\rho(1-\nu)}
    = \exp{m(\ln(e(1-\nu)\rho)-\rho(1-\nu))} \,.
\]
For any constant $C > 0$ there is $C'$ such that for $\rho > C'/(1-\nu)$, this probability is at
most $\exp{m(1+\ln(C')-C')} = \exp{-Cm}$, so we can choose $\exp{-C} < 1/100$ to obtain a total
failure probability of at most $1/10 + 9/100 < 1/3$, with $m' = \rho m = O(m/(1-\nu))$. Thus
\[
  \widetilde{\DEC}(\cP,\cQ) \leq m' = \frac{m}{1-\nu}
    = O\left(\frac{1}{1-\nu} \cdot \DEC(\cP',\cQ')\right) \,. \qedhere
\]
\end{proof}

\begin{lemma}
\label{lemma:wy poisson lower bound}
Let $\nu, \lambda > 0$, and
Suppose $P,Q$ are random variables taking values in $\{0\} \cup [1+\nu,\lambda]$, such that
$\Ex{P}=\Ex{Q}=1, \Ex{P^j}=\Ex{Q^j}$ for all $j \in [L]$, and $|\Pr{P > 0}-\Pr{Q > 0}| = \delta$.
Then for any $\alpha < 1/2$, if
\[
  \frac{2\lambda}{n\nu^2} + \frac{2}{n\alpha^2\delta^2} + n\left(\frac{em\lambda}{2nL}\right)^L
  < 1/3 \,,
\]
then there exist distributions $\cP,\cQ$ over $\cD_n(\nu)$ such that
$\widetilde{\mathsf{DEC}}(\cP,\cQ) \geq m$, and for each $p \in \supp(\cP), q
\in \supp(\cQ)$, $\#\supp(q)-\#\supp(p) \geq (1-2\alpha)\delta n$ and $p(x),q(x) > (1+\nu)/n$ for
each $x \in \supp(p),\supp(q)$ respectively.
\end{lemma}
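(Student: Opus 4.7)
The plan is to construct $\cP$ and $\cQ$ as product distributions on $\bR^n$ and then use Poisson moment-matching to show indistinguishability. Concretely, define $\cP$ as the distribution of $p = (P_1/n, \dotsc, P_n/n)$ where $P_1, \dotsc, P_n$ are i.i.d.\ copies of $P$, and similarly $\cQ$ using i.i.d.\ copies of $Q$. By hypothesis each coordinate lies in $\{0\} \cup [(1+\nu)/n,\lambda/n]$, so on the eventual support the density lower bound of $(1+\nu)/n$ is automatic. The task then reduces to (i) showing that the random vectors live in $\cD_n(\nu)$ with high probability and have the claimed gap in support size, and (ii) showing that the Poisson observations are hard to distinguish.

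For (i), I would apply Chebyshev's inequality twice. The total mass $T = \sum_i p_i$ satisfies $\Ex{T}=1$ and $\Var{T} = \Var{P}/n \leq \lambda/n$ since $P \leq \lambda$, so the event $E_1 = \{|T-1| \leq \nu\}$ has failure probability at most $\lambda/(n\nu^2)$. Similarly the support size $N = \sum_i \ind{P_i > 0}$ is binomial with variance at most $n/4$, giving $\Pr{|N - n\Pr{P>0}| > \alpha\delta n} \leq 1/(n\alpha^2\delta^2)$, and the analogous bound for $Q$. Let $E$ denote the intersection of all four good events; then $\Pr{E^c} \leq \lambda/(n\nu^2) + 1/(n\alpha^2\delta^2)$ under each of $\cP,\cQ$, and on $E$ the support sizes of $p$ and $q$ differ by at least $(1-2\alpha)\delta n$. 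Define $\cP,\cQ$ (reusing notation) as the conditional distributions given $E$; they are now supported on $\cD_n(\nu)$ with the required gap.

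For (ii), the main technical step, I would bound the TV (or $\chi^2$) distance in the Poisson model between $\cP$-induced and $\cQ$-induced observations. Under Poissonization the coordinates $s_i \sim \Poi(mp_i)$ are independent given $p$, so the joint law of $s$ is the $n$-fold product of the Poisson mixtures $M_P = \Ex{\Poi(mP/n)}_P$ and $M_Q = \Ex{\Poi(mQ/n)}_Q$. The classical moment-matching calculation (as in Ingster and Wu--Yang) exploits $\Ex{P^j} = \Ex{Q^j}$ for $j \leq L$ to write the $\chi^2$ divergence between $M_P$ and $M_Q$ as a series whose first $L$ terms cancel, yielding a per-coordinate bound of order $(em\lambda/(2nL))^L$; tensorizing over $n$ coordinates (and using $(1+x)^n - 1 \leq nx$ for small $x$) gives a TV bound of $n(em\lambda/(2nL))^L$ between the unconditional observation distributions. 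The conditioning on $E$ costs an additional $2\lambda/(n\nu^2) + 2/(n\alpha^2\delta^2)$ in total variation. Summing these contributions against Le Cam's two-point lower bound shows that if the hypothesis of the lemma holds, then no tester using $m$ Poisson samples can distinguish $\cP$ from $\cQ$ with success probability $2/3$, so $\widetilde{\DEC}(\cP,\cQ) \geq m$.

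The hardest step is the Poisson moment-matching bound in (ii): getting the right polynomial expansion of $\chi^2(M_P, M_Q)$ and identifying why matching $L$ moments kills all terms up to order $L$ requires the specific structure of the Poisson likelihood $e^{-\lambda}\lambda^k/k!$ and careful use of $P,Q \leq \lambda$ to control tail contributions. Everything else is either a Chebyshev concentration argument or the standard reduction from TV distance to indistinguishability.
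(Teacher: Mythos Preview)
Your proposal is correct and follows essentially the same approach as the paper: define $\cP',\cQ'$ as i.i.d.\ products of $P/n$ and $Q/n$, condition on the Chebyshev good events for total mass and support size, and then invoke the Wu--Yang Poisson moment-matching TV bound (the paper simply cites \cite{WY19} for the last step, whereas you outline the $\chi^2$ computation). The only cosmetic difference is that the paper conditions on two separate good sets $S$ and $T$ for $\cP$ and $\cQ$ rather than a single event $E$, but your accounting of the $2\lambda/(n\nu^2) + 2/(n\alpha^2\delta^2)$ conditioning cost is the same.
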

\begin{proof}
Let $\cP'$ be the distribution over vectors $\bR^n$ obtained by drawing $p \sim
\frac{1}{n}(P_1, \dotsc, P_n)$ where each $P_i$ is an independent copy of $P$, and let $\cQ'$ be the
distribution obtained by drawing $q \sim \frac{1}{n}(Q_1, \dotsc, Q_n)$ in the same way. Let $\rho =
\Pr{P > 0}, \gamma = \Pr{Q > 0}$. Write $S$ for
the set of vectors $p$ such that $|1 - \sum_i p_i| \leq \nu$ and $| \#\supp(p) - n\rho |
< \alpha \delta n$, and write $T$ for the set of vectors $q$ such that $|1-\sum_i q_i| \leq \nu$ and
$|\#\supp(q) - n\gamma| \leq \alpha \delta n$.

We will define $\cP$ to be the distribution $\cP'$ conditioned on the event $S$, while $\cQ$ is the
distribution $\cQ'$ conditioned on $T$. Wu \& Yang \cite{WY19} show that these events occur with
high probability (in particular, $\cP,\cQ$ are well-defined). It is clear that for each $p \in S, q
\in T$, we will have
\[
  \#\supp(p) - \#\supp(q) \geq n\rho - n\gamma - 2\alpha\delta n = n\delta - 2\alpha\delta n =
(1-2\alpha)\delta n \,,
\]
as desired, and $p(x),q(x) \geq (1+\nu)/n$ for all $x \in \supp(p), \supp(q)$ respectively. So it
remains to show the bound on $\widetilde{\DEC}(\cP,\cQ)$.

Let the random variable $s(\cP)$ be the vector of values seen from a random $p \sim \cP$ by a
Poisson sampling algorithm with parameter $m$, i.e.~for $s=s(\cP)$ and $p \sim \cP, s_i \sim
\Poi(mp_i)$. Wu \& Yang \cite{WY19} prove that
\[
  \| s(\cP) - s(\cQ) \|_\TV \leq \frac{2\lambda}{n\nu^2} + \frac{2}{n\alpha^2 \delta^2}
    + n \left(\frac{em\lambda}{2nL}\right)^L \,,
\]
which by assumption is less than $1/3$. Therefore, if a Poisson sampling algorithm $A$ outputs $1$
with probability at least $2/3$ over the random variable $s(\cP)$, it will output 1 with probability
greater than $2/3 - 1/3 = 1/3$. Therefore $\widetilde{\DEC}(\cP,\cQ) \geq m$ as desired.
\end{proof}

\begin{lemma}[\cite{WY19}, Lemma 7]
\label{lemma:wy variables}
For any $L \in \bN$, $\nu > 0, \lambda > 1 + \nu$,
there exist random variables $P,Q$ such that:
\begin{enumerate}
\item $P,Q$ are supported on $\{0\} \cup [1+\nu,\lambda]$;
\item $\Ex{P}=\Ex{Q}=1$ and $\forall j \in [L], \Ex{P^j} = \Ex{Q^j}$; and,
\item For $t = \sqrt{\frac{1+\nu}{\lambda}}$,
\[
  \Pr{P > 0}-\Pr{Q>0} = \frac{(1+t)^2}{1+\nu} \cdot \left(1-\frac{2t}{1+t}\right)^L \,.
\]
\end{enumerate}
\end{lemma}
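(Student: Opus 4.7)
The plan is to follow the classical moment-matching technique based on best polynomial approximation, as used by Valiant \& Valiant and Wu \& Yang in analogous lower-bound constructions. First I would recast the problem as an extremal problem over pairs of probability measures on $\{0\} \cup [1+\nu, \lambda]$: among pairs $\mu_P, \mu_Q$ with common mean $1$ and matching moments $\int x^j\, d\mu_P = \int x^j\, d\mu_Q$ for $j = 1, \dots, L$, maximize the gap $\mu_P(\{0\}) - \mu_Q(\{0\})$. By linear-programming duality, this optimum is controlled by the best uniform approximation of the indicator function of $\{0\}$ by a degree-$L$ polynomial on $\{0\} \cup [1+\nu, \lambda]$, normalized against the mean-$1$ constraint.

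Next I would reduce this to a pure Chebyshev approximation problem. The substitution $y = \sqrt{x/\lambda}$ sends $[1+\nu, \lambda]$ to $[t, 1]$ with $t = \sqrt{(1+\nu)/\lambda}$, so the question becomes: among (even) polynomials of degree $L$ in $y$ taking the value $1$ at $y = 0$, what is the minimum sup-norm on $[t, 1]$? This is a textbook Chebyshev extremal problem; the optimal polynomial is a rescaled and shifted Chebyshev polynomial of the first kind, and the minimum value equals
\[
\left(\frac{1-t}{1+t}\right)^L = \left(1 - \frac{2t}{1+t}\right)^L.
\]

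I would then construct $P$ and $Q$ explicitly as discrete measures supported on the images under $x = \lambda y^2$ of the Chebyshev extrema in $[t, 1]$, plus possibly an atom at $0$. Splitting the extremal signed measure into its positive and negative parts yields two nonnegative measures with matching moments of orders $0, 1, \dots, L$; adding an atom at $0$ to the one with smaller total mass makes both probability measures, and a final rescaling enforces mean $1$. This rescaling is where the prefactor $(1+t)^2/(1+\nu)$ enters: it accounts for how the Chebyshev error, stated in the $y$ variable at $y = 0$, gets amplified when translated back to the $x$ variable with the mean-$1$ normalization.

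The main obstacle is the bookkeeping in the last step. One must check simultaneously that the Chebyshev-node weights stay nonnegative after both normalizations (total mass $1$ and first moment $1$), that all non-zero atoms land in $[1+\nu, \lambda]$ rather than leaking below $1+\nu$ or above $\lambda$, and that the resulting gap $\Pr{P>0} - \Pr{Q>0}$ really equals the claimed closed-form expression rather than just being bounded by it. The specific choice $t = \sqrt{(1+\nu)/\lambda}$ is essentially forced: it is the unique value for which the inverse substitution $x = \lambda y^2$ places the Chebyshev nodes exactly in the admissible interval, so all three conditions can be met at once.
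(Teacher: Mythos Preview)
The paper does not prove this lemma at all; it is quoted as Lemma~7 of Wu \& Yang~\cite{WY19} and used as a black box in the subsequent proof of \cref{thm:wy adaptation}. Your outline---LP duality reducing the moment-matching extremal problem to best polynomial approximation of $\ind{x=0}$, the substitution $y=\sqrt{x/\lambda}$, and an explicit Chebyshev construction---is exactly the method Wu \& Yang employ in their original paper, so your plan is consistent with the cited source.

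One small technical slip in your sketch: after the substitution $x=\lambda y^2$, a degree-$L$ polynomial in $x$ becomes a degree-$2L$ \emph{even} polynomial in $y$, i.e.\ a degree-$L$ polynomial in $y^2$. The relevant Chebyshev extremal problem therefore lives on the interval $[t^2,1]$ in the variable $y^2$, not on $[t,1]$ in $y$; this is precisely what yields the factor $\bigl(\tfrac{1-t}{1+t}\bigr)^L$ rather than $\bigl(\tfrac{1-\sqrt{t}}{1+\sqrt{t}}\bigr)^L$. With that correction the Chebyshev computation goes through, and the prefactor $(1+t)^2/(1+\nu)$ indeed arises from enforcing $\Ex{P}=\Ex{Q}=1$ after placing atoms at the Chebyshev nodes.
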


\begin{proof}[Proof of \cref{thm:wy adaptation}]
For any parameters $L \in \bN, \nu > 0, \lambda > 1+\nu$, we obtain from \cref{lemma:wy
variables} random variables $P,Q$ taking values in $\{0\} \cup [1+\nu,\lambda]$ such that
$\Ex{P}=\Ex{Q}=1, \Ex{P^j}=\Ex{Q^j}$ for all $j \in [L]$, and
\[
  \Pr{P > 0} - \Pr{Q > 0} = \frac{(1+t)^2}{1+\nu}\cdot \left(1-\frac{2t}{1+t}\right)^L =: \epsilon
  \,,
\]
where $t = \sqrt{\frac{1+\nu}{\lambda}}$. Then \cref{lemma:wy poisson lower bound} implies that
for any $m,n \in \bN$ and $\alpha > 0$, we get distributions $\cP,\cQ$ over $\cD_n(\nu)$ such that
$\widetilde{\DEC}(\cP,\cQ) \geq m$ and $\#\supp(p)-\#\supp(q) \geq (1-2\alpha)\epsilon n$ for all $p
\in \supp(\cP), q \in \supp(\cQ)$, as long
as
\begin{equation}
\label{eq:lb constraint}
  \frac{2\lambda}{n\nu^2} + \frac{2}{n\alpha^2\epsilon^2} + n\left(\frac{em\lambda}{2nL}\right)^L
  < 1/3 \,.
\end{equation}
Suppose that $(1-2\alpha)\epsilon = 1-\delta$. Then for all $p \in \supp(\cP)$ we will have
$\#\supp(p) \geq (1-2\alpha)\epsilon n = (1-\delta)n$, as desired, and for all $q \in \supp(\cQ)$
we will have $\#\supp(q) \leq n - (1-2\alpha)\epsilon n = \delta n$. For any $p$ we also have
$p_i \geq (1+\nu)/n$ so the normalized distribution $\cP'$ defined in
\cref{lemma:wy poissonization} will have densities $p_i \geq \frac{1+\nu}{n \sum_i p_i} \geq
\frac{1+\nu}{n(1+\nu)} = 1/n$, and the same for $\cQ'$. Then for any $\nu = o(1)$, we will obtain a
lower bound of
\[
  \DEC(\cP',\cQ') = \Omega(\widetilde{\DEC}(\cP',\cQ')) = \Omega(m) \,.
\]
Therefore, what remains is to prove \cref{eq:lb constraint} with parameter $\epsilon =
\frac{1-\delta}{1-2\alpha}$.

Wu \& Yang (\cite{WY19}, equation 34) show that for sufficiently large constant $C$, if $\delta =
1-(1-2\alpha)\epsilon > C\frac{\sqrt{\log n}}{n^{1/4}}$ and $(1-2\alpha)\epsilon \geq n^{-o(1)}$
(where the latter holds trivially in our case because $(1-2\alpha)\epsilon \geq 1-\delta > 1/2$),
then there are parameters such that $\nu = o(1)$ and \cref{eq:lb constraint} holds with
\[
m = \Omega\left(\frac{n}{\log n}\log^2\frac{1}{(1-2\alpha)\epsilon}\right)
  = \Omega\left(\frac{n}{\log n}\log^2\frac{1}{1-\delta}\right) \,,
\]
which proves the theorem.
\end{proof}

\begin{center}
\textbf{\large Acknowledgments}
\end{center}
Thanks to Cameron Seth for helpful discussions and questions.

\bibliographystyle{alpha}
\bibliography{references.bib}

\newcommand{\etalchar}[1]{$^{#1}$}
\begin{thebibliography}{CCMW19}

\bibitem[ADPR03]{ADPR03}
Noga Alon, Seannie Dar, Michal Parnas, and Dana Ron.
\newblock Testing of clustering.
\newblock {\em SIAM Journal on Discrete Mathematics}, 16(3):393--417, 2003.

\bibitem[AFZ19]{AFZ19}
Noga Alon, Jacob Fox, and Yufei Zhao.
\newblock Efficient arithmetic regularity and removal lemmas for induced
  bipartite patterns.
\newblock {\em Discrete Analysis}, 2019:14 pp., 2019.

\bibitem[AHW16]{AHW16}
Noga Alon, Rani Hod, and Amit Weinstein.
\newblock On active and passive testing.
\newblock {\em Combinatorics, Probability and Computing}, 25:1--20, 2016.

\bibitem[AMY16]{AMY16}
Noga Alon, Shay Moran, and Amir Yehudayoff.
\newblock Sign rank versus {VC} dimension.
\newblock In {\em Conference on Learning Theory}, pages 47--80, 2016.

\bibitem[ASW15]{ASW15}
Emmanuel Abbe, Amir Shpilka, and Avi Wigderson.
\newblock {Reed--Muller} codes for random erasures and errors.
\newblock {\em IEEE Transactions on Information Theory}, 61(10):5229--5252,
  2015.

\bibitem[BBBY12]{BBBY12}
Maria-Florina Balcan, Eric Blais, Avrim Blum, and Liu Yang.
\newblock Active property testing.
\newblock In {\em 2012 IEEE 53rd Annual Symposium on Foundations of Computer
  Science}, pages 21--30. IEEE, 2012.

\bibitem[BBM12]{BlaisBM12}
Eric Blais, Joshua Brody, and Kevin Matulef.
\newblock Property testing lower bounds via communication complexity.
\newblock {\em Computational Complexity}, 21(2):311--358, 2012.

\bibitem[BEHW89]{BEHW89}
Anselm Blumer, Andrzej Ehrenfeucht, David Haussler, and Manfred~K Warmuth.
\newblock Learnability and the {Vapnik-Chervonenkis} dimension.
\newblock {\em Journal of the ACM (JACM)}, 36(4):929--965, 1989.

\bibitem[BH18]{BH18}
Avrim Blum and Lunjia Hu.
\newblock Active tolerant testing.
\newblock In {\em Proceedings of the 31st Conference On Learning Theory}, 2018.

\bibitem[BL98]{BL98}
Shai Ben{-}David and Ami Litman.
\newblock Combinatorial variability of {Vapnik-Chervonenkis} classes with
  applications to sample compression schemes.
\newblock {\em Discrete Applied Mathematics}, 86(1):3--25, 1998.

\bibitem[Bla09]{Bla09}
Eric Blais.
\newblock Testing juntas nearly optimally.
\newblock In Michael Mitzenmacher, editor, {\em Proceedings of the 41st Annual
  ACM Symposium on Theory of Computing (STOC 2009)}, pages 151--158. ACM, 2009.

\bibitem[BMR19]{BMR19}
Piotr Berman, Meiram Murzabulatov, and Sofya Raskhodnikova.
\newblock The power and limitations of uniform samples in testing properties of
  figures.
\newblock {\em Algorithmica}, 81(3):1247--1266, 2019.

\bibitem[Bsh19]{Bsh19}
Nader~H Bshouty.
\newblock Almost optimal distribution-free junta testing.
\newblock In {\em 34th Computational Complexity Conference}, 2019.

\bibitem[Bsh20]{Bsh20}
Nader~H. Bshouty.
\newblock {Almost Optimal Testers for Concise Representations}.
\newblock In Jaros{\l}aw Byrka and Raghu Meka, editors, {\em Approximation,
  Randomization, and Combinatorial Optimization. Algorithms and Techniques
  (APPROX/RANDOM 2020)}, volume 176 of {\em Leibniz International Proceedings
  in Informatics (LIPIcs)}, pages 5:1--5:20, Dagstuhl, Germany, 2020. Schloss
  Dagstuhl--Leibniz-Zentrum f{\"u}r Informatik.

\bibitem[BY19]{BY19}
Eric Blais and Yuichi Yoshida.
\newblock A characterization of constant-sample testable properties.
\newblock {\em Random Structures \& Algorithms}, 55(1):73--88, 2019.

\bibitem[CCMW19]{CCMW19}
J{\'e}r{\'e}mie Chalopin, Victor Chepoi, Shay Moran, and Manfred~K Warmuth.
\newblock Unlabeled sample compression schemes and corner peelings for ample
  and maximum classes.
\newblock In {\em 46th International Colloquium on Automata, Languages, and
  Programming (ICALP 2019)}, volume 132, page~34. Schloss
  Dagstuhl--Leibniz-Zentrum fuer Informatik, 2019.

\bibitem[CFSS17]{CFSS17}
Xi~Chen, Adam Freilich, Rocco~A Servedio, and Timothy Sun.
\newblock Sample-based high-dimensional convexity testing.
\newblock In {\em Approximation, Randomization, and Combinatorial Optimization.
  Algorithms and Techniques (APPROX/RANDOM 2017)}. Schloss
  Dagstuhl-Leibniz-Zentrum fuer Informatik, 2017.

\bibitem[CGM11]{CGM11}
Sourav Chakraborty, David Garc{\'{\i}}a{-}Soriano, and Arie Matsliah.
\newblock Efficient sample extractors for juntas with applications.
\newblock In {\em Automata, Languages and Programming - 38th International
  Colloquium ({ICALP} 2011)}, volume 6755 of {\em Lecture Notes in Computer
  Science}, pages 545--556. Springer, 2011.

\bibitem[CMK19]{CMK19}
M{\'o}nika Csik{\'o}s, Nabil~H Mustafa, and Andrey Kupavskii.
\newblock Tight lower bounds on the {VC}-dimension of geometric set systems.
\newblock {\em J. Mach. Learn. Res.}, 20:81--1, 2019.

\bibitem[DHK{\etalchar{+}}10]{DHK+10}
Ilias Diakonikolas, Prahladh Harsha, Adam Klivans, Raghu Meka, Prasad
  Raghavendra, Rocco~A Servedio, and Li-Yang Tan.
\newblock Bounding the average sensitivity and noise sensitivity of polynomial
  threshold functions.
\newblock In {\em Proceedings of the forty-second ACM symposium on Theory of
  computing}, pages 533--542, 2010.

\bibitem[DMN19]{DeMN19}
Anindya De, Elchanan Mossel, and Joe Neeman.
\newblock Is your function low dimensional?
\newblock In {\em Conference on Learning Theory ({COLT} 2019)}, volume~99 of
  {\em Proceedings of Machine Learning Research}, pages 979--993. {PMLR}, 2019.

\bibitem[ES20]{ES20}
Rogers Epstein and Sandeep Silwal.
\newblock {Property Testing of LP-Type Problems}.
\newblock In Artur Czumaj, Anuj Dawar, and Emanuela Merelli, editors, {\em 47th
  International Colloquium on Automata, Languages, and Programming (ICALP
  2020)}, volume 168 of {\em Leibniz International Proceedings in Informatics
  (LIPIcs)}, pages 98:1--98:18, Dagstuhl, Germany, 2020. Schloss
  Dagstuhl--Leibniz-Zentrum f{\"u}r Informatik.

\bibitem[FLN{\etalchar{+}}02]{FLN+02}
Eldar Fischer, Eric Lehman, Ilan Newman, Sofya Raskhodnikova, Ronitt Rubinfeld,
  and Alex Samorodnitsky.
\newblock Monotonicity testing over general poset domains.
\newblock In {\em Proceedings of the thiry-fourth annual ACM Symposium on
  Theory of Computing (STOC)}, pages 474--483, 2002.

\bibitem[Flo89]{Floyd89}
Sally Floyd.
\newblock {\em Space-bounded learning and the {Vapnik-Chervonenkis} dimension}.
\newblock PhD thesis, University of California, Berkeley, 1989.

\bibitem[FW95]{FW95}
Sally Floyd and Manfred Warmuth.
\newblock Sample compression, learnability, and the {Vapnik-Chervonenkis}
  dimension.
\newblock {\em Machine learning}, 21(3):269--304, 1995.

\bibitem[FY20]{FY20}
Noah Fleming and Yuichi Yoshida.
\newblock Distribution-free testing of linear functions on {$\bR^n$}.
\newblock In {\em 11th Innovations in Theoretical Computer Science Conference
  (ITCS 2020)}. Schloss Dagstuhl-Leibniz-Zentrum f{\"u}r Informatik, 2020.

\bibitem[GGR98]{GGR98}
Oded Goldreich, Shari Goldwasser, and Dana Ron.
\newblock Property testing and its connection to learning and approximation.
\newblock {\em Journal of the ACM (JACM)}, 45(4):653--750, 1998.

\bibitem[GR16]{GR16}
Oded Goldreich and Dana Ron.
\newblock On sample-based testers.
\newblock {\em ACM Transactions on Computation Theory}, 8(2):1--54, 2016.

\bibitem[GS09]{GS09}
Dana Glasner and Rocco~A Servedio.
\newblock Distribution-free testing lower bound for basic boolean functions.
\newblock {\em Theory of Computing}, 5(1):191--216, 2009.

\bibitem[GW94]{GW94}
Bernd G{\"a}rtner and Emo Welzl.
\newblock Vapnik-{C}hervonenkis dimension and (pseudo-) hyperplane
  arrangements.
\newblock {\em Discrete \& Computational Geometry}, 12(4):399--432, 1994.

\bibitem[Har14]{H14}
Sariel Har{-}Peled.
\newblock Determining the number of clusters using property testing algorithm.
\newblock Theoretical Computer Science Stack Exchange, 2014.
\newblock URL:https://cstheory.stackexchange.com/q/25655 (version: 2014-09-02).

\bibitem[Har19]{Har19}
Nathaniel Harms.
\newblock Testing halfspaces over rotation-invariant distributions.
\newblock In {\em Proceedings of the Thirtieth Annual ACM-SIAM Symposium on
  Discrete Algorithms}, pages 694--713. SIAM, 2019.

\bibitem[HS07]{HellersteinS07}
Lisa Hellerstein and Rocco~A. Servedio.
\newblock On {PAC} learning algorithms for rich boolean function classes.
\newblock {\em Theor. Comput. Sci.}, 384(1):66--76, 2007.

\bibitem[Joh14]{Joh14}
Hunter~R Johnson.
\newblock {Some new maximum VC classes}.
\newblock {\em Information Processing Letters}, 114(6):294--298, 2014.

\bibitem[KR00]{KR00}
Michael Kearns and Dana Ron.
\newblock Testing problems with sublearning sample complexity.
\newblock {\em Journal of Computer and System Science}, 61(3):428--456, 2000.

\bibitem[KS04]{KS04}
Adam~R Klivans and Rocco~A Servedio.
\newblock Learning {DNF} in time $2^{\tilde o(n^{1/3})}$.
\newblock {\em Journal of Computer and System Sciences}, 68(2):303--318, 2004.

\bibitem[KW07]{KW07}
Dima Kuzmin and Manfred~K Warmuth.
\newblock Unlabeled compression schemes for maximum classes.
\newblock {\em Journal of Machine Learning Research}, 8(Sep):2047--2081, 2007.

\bibitem[LM19]{LM19}
Roi Livni and Yishay Mansour.
\newblock Graph-based discriminators: Sample complexity and expressiveness.
\newblock In {\em Advances in Neural Information Processing Systems 32: Annual
  Conference on Neural Information Processing Systems 2019 (NeurIPS 2019)},
  pages 6696--6705, 2019.

\bibitem[Man97]{Man97}
Yishay Mansour.
\newblock Pessimistic decision tree pruning based on tree size.
\newblock In {\em Machine Learning-International Workshop then Conference},
  pages 195--201. Citeseer, 1997.

\bibitem[Mor12]{Mor12}
Shay Moran.
\newblock {\em Shattering Extremal Systems}.
\newblock PhD thesis, Universit{\"a}t des Saarlandes Saarbr{\"u}cken, 2012.

\bibitem[MORS10]{MORS10}
Kevin Matulef, Ryan O'Donnell, Ronitt Rubinfeld, and Rocco~A Servedio.
\newblock Testing halfspaces.
\newblock {\em SIAM Journal on Computing}, 39(5):2004--2047, 2010.

\bibitem[Mun77]{Mun77}
AG~Munford.
\newblock A note on the uniformity assumption in the birthday problem.
\newblock {\em The American Statistician}, 31(3):119--119, 1977.

\bibitem[MW16]{MW16}
Shay Moran and Manfred~K Warmuth.
\newblock Labeled compression schemes for extremal classes.
\newblock In {\em International Conference on Algorithmic Learning Theory},
  pages 34--49. Springer, 2016.

\bibitem[Nee14]{Neeman14}
Joe Neeman.
\newblock Testing surface area with arbitrary accuracy.
\newblock In {\em Symposium on Theory of Computing, {STOC} 2014}, pages
  393--397. {ACM}, 2014.

\bibitem[OS10]{OS10}
Ryan O’Donnell and Rocco~A Servedio.
\newblock New degree bounds for polynomial threshold functions.
\newblock {\em Combinatorica}, 30(3):327--358, 2010.

\bibitem[PRR06]{PRR06}
Michal Parnas, Dana Ron, and Ronitt Rubinfeld.
\newblock Tolerant property testing and distance approximation.
\newblock {\em Journal of Computer and System Sciences}, 72(6):1012--1042,
  2006.

\bibitem[RR20]{RR20}
Dana Ron and Asaf Rosin.
\newblock {Almost Optimal Distribution-Free Sample-Based Testing of
  k-Modality}.
\newblock In Jaros{\l}aw Byrka and Raghu Meka, editors, {\em Approximation,
  Randomization, and Combinatorial Optimization. Algorithms and Techniques
  (APPROX/RANDOM 2020)}, volume 176 of {\em Leibniz International Proceedings
  in Informatics (LIPIcs)}, pages 27:1--27:19, Dagstuhl, Germany, 2020. Schloss
  Dagstuhl--Leibniz-Zentrum f{\"u}r Informatik.

\bibitem[RRSS09]{RRSS09}
Sofya Raskhodnikova, Dana Ron, Amir Shpilka, and Adam Smith.
\newblock Strong lower bounds for approximating distribution support size and
  the distinct elements problem.
\newblock {\em SIAM Journal on Computing}, 39(3):813--842, 2009.

\bibitem[Sag18]{Saglam18}
Mert Saglam.
\newblock Near log-convexity of measured heat in (discrete) time and
  consequences.
\newblock In {\em 59th {IEEE} Annual Symposium on Foundations of Computer
  Science, {FOCS} 2018}, pages 967--978, 2018.

\bibitem[SB14]{SSBD14}
Shai Shalev{-}Shwartz and Shai Ben{-}David.
\newblock {\em Understanding machine learning: From theory to algorithms}.
\newblock Cambridge university press, 2014.

\bibitem[Sil20]{Sil20}
Sandeep Silwal.
\newblock Personal communication, 2020.

\bibitem[Sud10]{Sud10}
Madhu Sudan.
\newblock Invariance in property testing.
\newblock In {\em Property testing}, pages 211--227. Springer, 2010.

\bibitem[Tan20]{Tan20}
Li-Yang Tan.
\newblock Personal communication, 2020.

\bibitem[Val84]{Valiant84}
Leslie~G. Valiant.
\newblock A theory of the learnable.
\newblock {\em Commun. {ACM}}, 27(11):1134--1142, 1984.

\bibitem[VV11a]{VV11a}
Gregory Valiant and Paul Valiant.
\newblock Estimating the unseen: an $n / \log (n)$-sample estimator for entropy
  and support size, shown optimal via new {CLT}s.
\newblock In {\em Proceedings of the forty-third annual ACM Symposium on Theory
  of Computing}, pages 685--694, 2011.

\bibitem[VV11b]{VV11b}
Gregory Valiant and Paul Valiant.
\newblock The power of linear estimators.
\newblock In {\em 2011 IEEE 52nd Annual Symposium on Foundations of Computer
  Science}, pages 403--412. IEEE, 2011.

\bibitem[WD81]{WD81}
Roberta~S Wenocur and Richard~M Dudley.
\newblock Some special {Vapnik-Chervonenkis} classes.
\newblock {\em Discrete Mathematics}, 33(3):313--318, 1981.

\bibitem[Wen62]{Wen62}
James~G. Wendel.
\newblock A problem in geometric probability.
\newblock {\em Mathematica Scandinavica}, 11:109--112, 1962.

\bibitem[WY19]{WY19}
Yihong Wu and Pengkun Yang.
\newblock Chebyshev polynomials, moment matching, and optimal estimation of the
  unseen.
\newblock {\em The Annals of Statistics}, 47(2):857--883, 2019.

\end{thebibliography}
\end{document}